%File: formatting-instructions-latex-2024.tex
%release 2024.0
\documentclass[letterpaper]{article} % DO NOT CHANGE THIS
\usepackage{aaai24}  % DO NOT CHANGE THIS
\usepackage{times}  % DO NOT CHANGE THIS
\usepackage{helvet}  % DO NOT CHANGE THIS
\usepackage{courier}  % DO NOT CHANGE THIS
\usepackage[hyphens]{url}  % DO NOT CHANGE THIS
\usepackage{graphicx} % DO NOT CHANGE THIS
\urlstyle{rm} % DO NOT CHANGE THIS
  % DO NOT CHANGE THIS
\usepackage{natbib}  % DO NOT CHANGE THIS AND DO NOT ADD ANY OPTIONS TO IT
\usepackage{caption} % DO NOT CHANGE THIS AND DO NOT ADD ANY OPTIONS TO IT
\frenchspacing  % DO NOT CHANGE THIS
\setlength{\pdfpagewidth}{8.5in}  % DO NOT CHANGE THIS
\setlength{\pdfpageheight}{11in}  % DO NOT CHANGE THIS
%
% These are recommended to typeset algorithms but not required. See the subsubsection on algorithms. Remove them if you don't have algorithms in your paper.
\usepackage{algorithm}
\usepackage{algorithmic}

%
% These are are recommended to typeset listings but not required. See the subsubsection on listing. Remove this block if you don't have listings in your paper.
\usepackage{newfloat}
\usepackage{listings}
\DeclareCaptionStyle{ruled}{labelfont=normalfont,labelsep=colon,strut=off} % DO NOT CHANGE THIS
\lstset{%
	basicstyle={\footnotesize\ttfamily},% footnotesize acceptable for monospace
	numbers=left,numberstyle=\footnotesize,xleftmargin=2em,% show line numbers, remove this entire line if you don't want the numbers.
	aboveskip=0pt,belowskip=0pt,%
	showstringspaces=false,tabsize=2,breaklines=true}
\floatstyle{ruled}
\newfloat{listing}{tb}{lst}{}
\floatname{listing}{Listing}
%
% Keep the \pdfinfo as shown here. There's no need
% for you to add the /Title and /Author tags.
\pdfinfo{
/TemplateVersion (2024.1)
}

\usepackage{xcolor} % TODO: Delete
\usepackage{multicol}
\usepackage{multirow}
\usepackage{array}
\usepackage{enumerate}

\usepackage{csquotes}
\usepackage{amsmath}
\usepackage{amssymb}
\usepackage{amsthm}
% math environments
\theoremstyle{definition}
\newtheorem{definition}{Definition}[section]
\newtheorem{assumption}[definition]{Assumption}
\theoremstyle{plain}
\newtheorem{theorem}[definition]{Theorem}
\newtheorem{lemma}[definition]{Lemma}
\newtheorem{corollary}[definition]{Corollary}
\theoremstyle{remark}
\newtheorem{remark}[definition]{Remark}
% for allowing long formulas proceed on a new page
\allowdisplaybreaks

% macros
\newcommand{\R}{\mathbb{R}}
\newcommand{\N}{\mathcal{N}} 
\newcommand{\Pa}{\mathcal{P}} 
\newcommand{\Le}{\mathcal{L}} 
\DeclareMathOperator{\sgn}{sgn}
\DeclareMathOperator{\rk}{rk}

\setcounter{secnumdepth}{2} %May be changed to 1 or 2 if section numbers are desired.

% The file aaai24.sty is the style file for AAAI Press
% proceedings, working notes, and technical reports.
%

% Title

% Your title must be in mixed case, not sentence case.
% That means all verbs (including short verbs like be, is, using,and go),
% nouns, adverbs, adjectives should be capitalized, including both words in hyphenated terms, while
% articles, conjunctions, and prepositions are lower case unless they
% directly follow a colon or long dash
\title{Physics-Informed Graph Neural Networks for Water Distribution Systems}
\author{
    %Authors
    % All authors must be in the same font size and format.
    Inaam Ashraf,
    Janine Strotherm,
    Luca Hermes,
    Barbara Hammer
}
\affiliations{
    %Afiliations
    % \textsuperscript{\rm 1}Center for Cognitive Interaction Technology, Bielefeld University, Bielefeld, Germany\\
    % If you have multiple authors and multiple affiliations
    % use superscripts in text and roman font to identify them.

    Center for Cognitive Interaction Technology, Bielefeld University,
    Inspiration 1, 33619 Bielefeld, Germany\\
    % email address must be in roman text type, not monospace or sans serif
    mashraf@techfak.uni-bielefeld.de, jstrotherm@techfak.uni-bielefeld.de, lhermes@techfak.uni-bielefeld.de, bhammer@techfak.uni-bielefeld.de
}

\begin{document}

\maketitle

\begin{abstract}
Water distribution systems (WDS) are an integral part of critical infrastructure which is pivotal to urban development. As 70\% of the world's population will likely live in urban environments in 2050, efficient simulation and planning tools for WDS play a crucial role in reaching UN's sustainable developmental goal (SDG) 6 -- \enquote{Clean water and sanitation for all}. In this realm, we propose a novel and efficient machine learning emulator, more precisely, a physics-informed deep learning (DL) model, for hydraulic state estimation in WDS. Using a recursive approach, our model only needs a few graph convolutional neural network (GCN) layers and employs an innovative  algorithm based on message passing. Unlike conventional machine learning tasks, the model uses hydraulic principles to infer two additional hydraulic state features in the process of reconstructing the available ground truth feature in an unsupervised manner. To the best of our knowledge, this is the first DL approach to emulate the popular hydraulic simulator EPANET, utilizing no additional information. Like most DL models and unlike the hydraulic simulator, our model demonstrates vastly faster emulation times that do not increase drastically with the size of the WDS. Moreover, we achieve high accuracy on the ground truth and very similar results compared to the hydraulic simulator as demonstrated through experiments on five real-world WDS datasets.  

\end{abstract}

\section{Introduction} \label{sec: introduction}

As it is expected that 70\% of world's population will live in urban areas in 2050 \cite{owidurbanization}, urban development presents a number of complex challenges. In the light of deep uncertainties caused, among others, by climate change and migration, building smart cities requires flexible and efficient short and long-term planning, monitoring and expansion of its critical infrastructure, i.e., its transportation systems, electricity distribution systems, and water distribution systems (WDS). Reliable WDS, in particular, constitute one critical demand of UN's SDG 6.
In this realm, AI technologies carry great promises since
AI can be used for intelligent planning, monitoring and control of these systems \cite{pmlr-v176-eichenberger22a,smartcities4020029,doi:10.1061/(ASCE)PS.1949-1204.0000646}. 
Currently, there exist first approaches of both classical AI tools and Deep Learning (DL) to solve various tasks in critical infrastructure systems \cite{doi:10.1061/(ASCE)IS.1943-555X.0000477}.
In this contribution, we  center on one important aspect related to planning and control of WDS: Since WDS cannot be experimented with in practice, the efficient emulation of realistic large-scale WDS constitutes a crucial prerequisite for planning and optimization based on such digital twins.
We investigate how to derive an efficient emulation of WDS based on DL.

Currently, hydraulic simulators play a key role in planning and expansion of WDS. The EPANET simulator \cite{rossman2020epanet} is one of the most popular hydraulic simulator for WDS. However, it requires considerable resources for extended time simulations that scale non-linearly with the size of the WDS, making WDS optimization a slow process. 
Since critical infrastructure systems such as WDS are naturally structured as graphs, Graph Neural Networks (GNNs) carry great promises as an alternative, faster, modeling tool.

A WDS consists of $N_n$ junctions and $N_e$ links connecting these junctions. Junctions can be reservoirs, tanks or water consumers, while links can be pipes, valves or pumps. 
The first step in planning a WDS is to define the structure, i.e., the number of junctions and links. Next, one can estimate the hydraulic state at every junction and link, given a set of features. These features are the head (see section \ref{sec: methodology}) at the reservoirs and the water demand of every consumer. Based on this, one needs to estimate  the head at \textit{every} junction and the flow through \textit{every} link. EPANET iteratively solves $N_n$ differential equations \cite{rossman2020epanet}, leading to slow simulations for extended time or larger systems.

In this paper, we combine a local graph convolutional neural network (GCN) model with a physics informed global algorithm that emulates the hydraulic simulator. Since explicit values for heads or flows are not available, we use hydraulic formulas governing the relationships between demands, heads and flows, instead, as physics-informed learning signals.

Our key contributions are as follows:
\begin{itemize}
    \item We propose a novel GNN architecture which combines 
    trainable local aspects with global state estimation.
    \item We use hydraulic principles to infer the required state features rather than example-driven supervised learning.
    \item We display the accuracy of the model for various WDS benchmarks.
    \item We show that we can significantly reduce the inference times compared to the hydraulic simulator.
    \item To the best of our knowledge, this is the first DL approach emulating the simulator using no additional information.    
\end{itemize}

\begin{figure*}[!htbp]
\centering
\resizebox{.8\textwidth}{!}{
\includegraphics[]{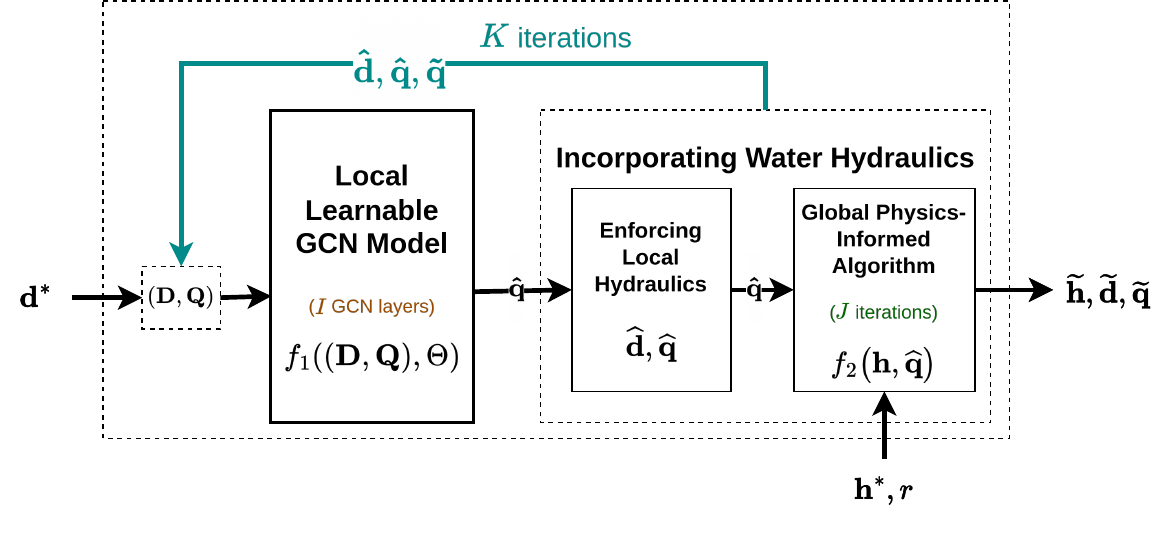} 
}
\caption{The model architecture: The local GCN model $f_1$ learns from the global physics-informed algorithm $f_2$ through multiple iterations.} 
\label{fig: architecture}
\end{figure*}

\section{Related Work} \label{sec: related work}

Curently, ML technologies have been proposed for  specific tasks in WDS such as leakage detection \cite{Fan2021}, modeling virtual sensors \cite{su15042981}, or demand prediction \cite{WU2023104545}. 
 To the best of our knowledge, the complex task of state estimation has only been dealt with by \cite{xing2022stateestimation}, addressing model hydraulics using GNNs.  More specifically, the EPANET simulator \cite{rossman2020epanet} is emulated using various information such as sparse pressure sensor measurements as features; this additional information makes the problem significantly easier to solve and it is usually not available 
in a WDS planning phase. Moreover,  the method is demonstrated on an artificial small WDS only.

In  deployed WDS, pressure readings from few sensors are available. This leads to the task of pressure estimation at all nodes in a WDS. \cite{hajgato2021pressure} used spectral GCNs to achieve encouraging results demonstrated by extensive experiments. Spectral GCNs do not fully utilize the structural information in a graph. Spatial GCNs were used by \cite{ashraf2023spatial} to significantly improve the performance. 
Generally, there exists a variety of different GNN structures,
such as recursive graph and tree models  \cite{scarselli2009,diss},
spectral GCNs \cite{Bruna2014SpectralNA,kipf2017semi,defferrard2016convolutional,henaff2015deep,levie2018cayleynets,li2018adaptive}, or spatial GCNs based on the spectral graph kernel \cite{hamilton2017inductive,monti2017geometric,gao2018large,niepert2016learning,xu2018powerful,velickovic2018graph}. Since these local operations are analogous to sending and receiving messages between nodes, spatial GCNs are also called message passing neural networks. In this contribution, we will extend such first advances towards an efficient physics-informed model capable of emulating large-scale WDS based on demands signals only.

The majority of ML training schemes  are supervised based on example data; moreover, DL often requires a considerable amount of training data for valid generalization. When dealing with complex systems, such information is often not easily available. Hence researchers investigate how to minimize the required amount of training data when using ML models as efficient surrogates for complex systems \cite{ruff2023surrogate}, and some technologies even enable the exchange of training data by general physical principles which can be incorporated into the learning scheme \cite{RAISSI2019686}. In this contribution, we introduce a novel graph architecture and learning scheme, which enables us to train a surrogate model based on physical principles only.

\newpage 

\section{Methodology}\label{sec: methodology}

WDS systems are characterized by demands $d \in \R_+$, flows $q \in \R$ and heads $h \in \R_+$. The latter is a measure of energy with the relationship $h = p + \epsilon$ to the pressure $p \in \R_+$ and the elevation of the junction $\epsilon \in \R_+$ \cite{rossman2020epanet}. Before we further explain the water hydraulics in section \ref{subsection_WaterHydraulics}, we present an overview of our methodology.

\subsection{Overview} 

We propose a spatial GCN-based and physics-informed iterative model to emulate the hydraulic simulator EPANET for WDS. As in case of EPANET, our model takes reservoir heads and consumer demands at every junction as inputs and estimates heads at every junction and flows at every link. 
The strength of our model comes from two main components.

The first component is a GCN-based, learnable function $f_1$, that makes use of the graph structure of a WDS to update flows based on demands and initial flows.
When modelling a WDS as a graph with junctions $V = \{v_1,\dots,v_{N_n}\}$ as nodes and links $E=\{ e_{vu} \; | \; \forall \, v \in V; u \in \mathcal{N}(v) \} = \{e_1,\dots,e_{N_e}\}$ as edges,
 $f_1$ takes \textit{demand node features} $\mathbf{D} \in \R^{N_n \times M_n}$ and \textit{flow edge features} $\mathbf{Q} \in \R^{N_e \times M_e}$ as inputs with $M_n = M_e = 2$ as feature dimension per node and edge.  $f_1$ estimates updated \textit{flows} $\mathbf{\hat{q}} = (\hat{q}_e)_{e \in E} \in \R^{N_e}$:
\begin{align}
\label{align_firstComponent}
\begin{split}
    f_1(\cdot, \Theta): ~ \R^{N_n \times M_n} \times \R^{N_e \times M_e} &\longrightarrow \R^{N_e} \\
    (\mathbf{D},\mathbf{Q}) &\longmapsto \mathbf{\hat{q}},
\end{split}
\end{align}
\noindent where the matrices can be rewritten as
\begin{align*}
    (\mathbf{D},\mathbf{Q}) = ((\mathbf{d}_1, \mathbf{d}_2), (\mathbf{q}_1, \mathbf{q}_2)) = ((\mathbf{d}_v^T)_{v \in V}, (\mathbf{q}_e^T)_{e \in E})
\end{align*}

\noindent with  $~\mathbf{d}_v^T = (d_{v1}, d_{v2})$ and $\mathbf{q}_e^T = (q_{e1}, q_{e2})$
consisting of the ground truth demand $d_{v1}$, as well as firstly initialized and afterwards to be updated demand $d_{v2}$ and flows $q_{e1}$ and $q_{e2}$ for each node $v \in V$ and edge $e \in E$. We provide further details on initialization in section \ref{subsection_OverallModelandTraining}.  
Using the updated flows $\mathbf{\hat{q}}$, we also compute associated updated demands $\mathbf{\hat{d}} \in \R^{N_n}$ (cf. eq. \eqref{align_MassBalance}).

The second component is a physics-informed function 
\begin{align}
\label{align_secondComponent}
\begin{split}
    f_2: ~ \R^{N_n} \times \R^{N_e} & \longrightarrow \R^{N_n} \times \R^{N_e} \\
    (\mathbf{h}, \mathbf{\hat{q}}) & \longmapsto (\mathbf{\tilde{h}}, \mathbf{\tilde{d}}, \mathbf{\tilde{q}})
\end{split}    
\end{align}
\noindent that makes use of initialized \textit{heads}  $\mathbf{h} \in \R^{N_n}$ and the predicted flows $\mathbf{\hat{q}} = f_1(\mathbf{D},\mathbf{Q}) \in \R^{N_e}$ to compute updated heads $\mathbf{\tilde{h}} \in \R^{N_n}$, flows $\mathbf{\tilde{q}} \in \R^{N_e}$ and demands $\mathbf{\tilde{d}} \in \R^{N_n}$ based on a recursive scheme that takes into account water hydraulics. 

It becomes obvious that during computation of $f_1$ and $f_2$, we consider several demands as well as flows.
As we do not have labels for this task, these estimations replace prediction and label pairs: They might differ in the beginning, but converge towards the same values during training of the function $f := f_2 \circ f_1(\cdot, \Theta)$, relying on the several parameters $\Theta$ of the GCN layers (cf. theorem \ref{theorem_ConvergenceOfAlgorithm}, \ref{theorem_MotivationAlgorithm} and section \ref{subsection_OverallModelandTraining}).

A visualization of the model architecture is given in fig. \ref{fig: architecture}. In the following sections, we introduce the components and the functionality of the overall resulting model in detail.

\subsection{Local Learnable GCN-Model}
\label{subsection_LearnableGCNModel}

As a first step of the computation of $f_1$, the node and edge features are embedded in a latent space with dimension $M_l$ by applying fully connected neural network layers $\alpha$ and $\beta$:
\begin{align*}
    \R^{N_n \times M_n} \times \R^{N_e \times M_e} 
    & \longrightarrow
    \R^{N_n \times M_l} \times \R^{N_e \times M_l} \\
    ((\mathbf{d}_v^T)_{v \in V}, (\mathbf{q}_e^T)_{e \in E}) 
    & \longmapsto
    ((\mathbf{g}_v^T)_{v \in V}, (\mathbf{z}_e^T)_{e \in E}),
\end{align*}

\noindent where for $v \in V$ and $e \in E$, respectively, we define
\begin{align*}
    \mathbf{g}_v := \alpha(\mathrm{SeLU}(\mathbf{d}_v))
    \text{ and } 
    \mathbf{z}_e := \beta(\mathrm{SeLU}(\mathbf{q}_e)).
\end{align*}

\noindent Hereby, the SeLU activation function is used for inducing self-normalizing properties  \cite{Gunter2017SNNs}.

Afterwards, we conduct the standard three-step process of message generation, message aggregation and feature update \cite{li2020GENConv}. This process is repeated for $I$ 
GCN layers, starting with $\mathbf{g}_v^{(0)} = \mathbf{g}_v$ and $\mathbf{z}_e^{(0)} = \mathbf{z}_e$ for all $v \in V$ and $e \in E$, respectively:
For the $i$-th layer for $ i = 0, ..., I-1$, for \textbf{message generation}, the latent node and edge features are fed to a Multi-Layer Perceptron (MLP) $\gamma^{(i)}$:
\begin{align*}
    \R^{N_n \times M_l} \times \R^{N_e \times M_l}
    & \longrightarrow
    \R^{N_e \times M_l} \\
    ((\mathbf{g}_v^{(i)})^T)_{v \in V}, ((\mathbf{z}_e^{(i)})^T)_{e \in E})
    & \longmapsto
    ((\mathbf{m}_e^{(i)})^T)_{e \in E},
\end{align*}

\noindent where for $e = e_{vu} \in E$, we define
\begin{align*}
    \mathbf{m}_e^{(i)}
    := 
    \gamma^{(i)}(\mathrm{SeLU}(\mathbf{g}_u^{(i)} \parallel \mathbf{g}_v^{(i)} \parallel \mathbf{z}_e^{(i)}))  
    \text{ with } 
    u \in \mathcal{N}(v)
\end{align*}

\noindent and where $\cdot \parallel \cdot$ denotes vector concatenation. 
For \textbf{message aggregation}, max aggregation is used:
\begin{align*}
    \R^{N_e \times M_l}
    & \longrightarrow
    \R^{N_n \times M_l} \times \R^{N_e \times M_l}\\
    ((\mathbf{m}_e^{(i)})^T)_{e \in E}
    & \longmapsto
    (((\mathbf{m}_v^{(i)})^T)_{v \in V}, ((\mathbf{m}_e^{(i)})^T)_{e \in E}),
\end{align*}

\noindent  where for $v \in V$, we define
\begin{align*}
    \mathbf{m}_v^{(i)} := \max_{u \in \mathcal{N}(v)} \mathbf{m}_{e_{vu}}^{(i)}.
\end{align*}

\noindent For the \textbf{feature update}, the node messages are fed to another MLP $\eta^{(i)}$ whereas the edge features are updated as the edge messages:
{\scriptsize 
\begin{align*}
    \R^{N_n \times M_l} \times \R^{N_e \times M_l}
    & \rightarrow 
    \R^{N_n \times M_l} \times \R^{N_e \times M_l}\\
    (((\mathbf{m}_v^{(i)})^T)_{v \in V}, ((\mathbf{m}_e^{(i)})^T)_{e \in E})
    & \mapsto
    ((\mathbf{g}_v^{(i+1)})^T)_{v \in V}, ((\mathbf{z}_e^{(i+1)})^T)_{e \in E}),
\end{align*}
}%
\noindent where for $e = e_{vu} \in E$, we define
\begin{align*}
    \mathbf{g}_v^{(i+1)} := \eta^{(i)}(\mathbf{m}_v^{(i)}) 
    \text{ and } 
    \mathbf{z}_e^{(i+1)} := \mathbf{m}_e^{(i)}.
\end{align*}

\noindent After the last GCN layer $I$, new flows are estimated using another MLP $\lambda$:
\begin{align*}
    \R^{N_n \times M_l} \times \R^{N_e \times M_l}
    & \longrightarrow
    \R^{N_e}\\
    ((\mathbf{g}_v^{(I)})^T)_{v \in V}, ((\mathbf{z}_e^{(I)})^T)_{e \in E})
    & \longmapsto
    (\hat{q}_e)_{e \in E},
\end{align*}

\noindent where we define with slight abuse of notation\footnote{
    Note that the dependence of the first component $\mathbf{q}_1 \in \R^{N_e}$ of the input flows $\mathbf{Q} \in \R^{N_e \times M_e}$ added to the output of the MLP $\lambda$ can easily be integrated in all of the previous functions by adding a second and third factor to each of the tuples and triples, respectively, and using the identity on $\R^{N_e}$ to make $\mathbf{q}_1$ as an input of the latter function. We have only waived this step to save space.
}
\begin{align*}
    \hat{q}_e 
    := 
    q_{e1} 
    +
    \lambda(\mathrm{SeLU}(\mathbf{g}_u^{(I)} \parallel \mathbf{g}_v^{(I)} \parallel \mathbf{z}_e^{(I)})
    \text{ with } 
    u \in \mathcal{N}(v)
\end{align*}

\noindent for $e = e_{vu}  \in E$. 
As $f_1((\mathbf{D}, \mathbf{Q}), \Theta) := \mathbf{\hat{q}} = (\hat{q}_e)_{e \in E}$, this is the last step of the learnable GCN-based model $f_1$, determined by all parameters $\Theta$ of all networks $\alpha, \beta, \gamma^{(i)}, \eta^{(i)}$ and $\lambda$ and for all $i = 0,...,I-1$. While the structure of these networks is standard, their dynamics are special:
We iteratively update the outputs of the overall model $f_2 \circ f_1$ before updating the parameters $\Theta$, as we will discuss in section \ref{subsection_OverallModelandTraining}. 

\subsection{Water Hydraulics} 
\label{subsection_WaterHydraulics}

Up to this point, we use GCNs to estimate flows $f_1((\mathbf{D}, \mathbf{Q}), \Theta) = \mathbf{\hat{q}}$ based on input demands $\mathbf{D}$ and input flows $\mathbf{Q}$. Next, we explain how we change these flows such that they obey the principles of water hydraulics.

\begin{figure}[tb]
\centering
\resizebox{.6\columnwidth}{!}{
\includegraphics[]{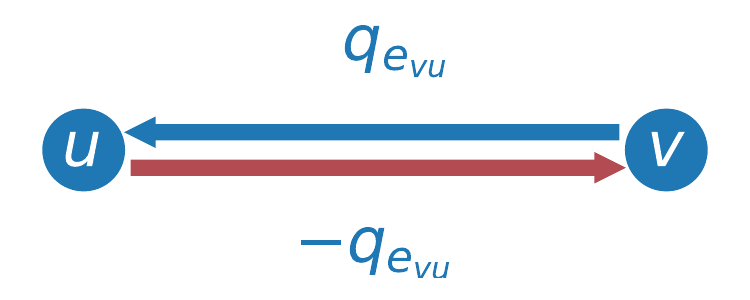} 
}
\caption{The flow $q_{e_{uv}}$ from node $u$ to $v$ has to be equal to the negative of the flow $q_{e_{vu}}$ from $v$ to $u$.} 
\label{fig: flow_direction_relationship}
\end{figure}

\begin{figure}[tb]
\centering
\resizebox{.6\columnwidth}{!}{
\includegraphics[]{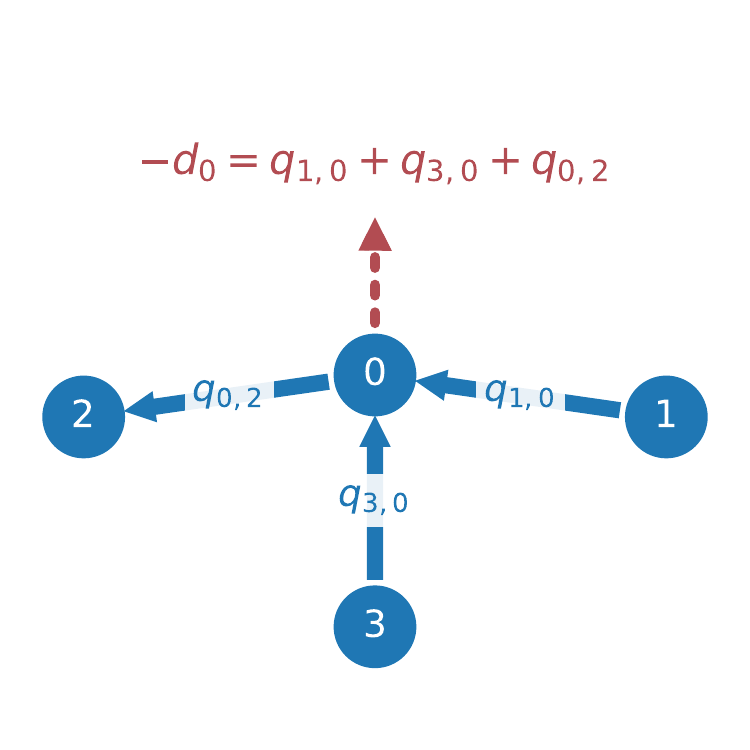} 
}
\caption{The sum of inflows $q_{1,0}, q_{3,0}$ and outflow $q_{0,2}$ must be equal to the negative of the demand $d_{0}$ at node $0$.} 
\label{fig: flow_demand_relationship}
\end{figure}

\begin{figure}[!htbp]
\centering
\resizebox{.6\columnwidth}{!}{
\includegraphics[]{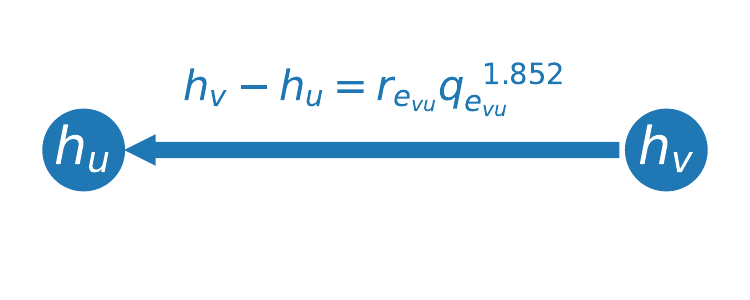} 
}
\caption{The relationship between pressure heads at neighbouring nodes and the flow in connecting pipe.} 
\label{fig: pipe_headloss_relationship}
\end{figure}

On the one hand, for two neighboring nodes $v,u \in V$, the flow $q_{e_{uv}}$ from node $u$ to $v$ has to be equal to the negative of the flow $q_{e_{vu}}$ from $v$ to $u$ (fig. \ref{fig: flow_direction_relationship}). Neural networks are not inherently aware of this property. Therefore, after computing flows in both directions by using bidirectional edges in our graph, we discard half of the flows in $\mathbf{\hat{q}}$ and replace each of those with the negative of the remaining corresponding flow.\footnote{
Since at the beginning, we do not know flow directions, edge directions can be initialized arbitrarily and any of the two half sets of flows can be used.}

On the other hand, \textit{true} demands $\mathbf{d}^* = (d_v^*)_{v \in V}$, \textit{true} heads $\mathbf{h}^* = (h_v^*)_{v \in V}$ and \textit{true} flows $\mathbf{q}^* = (q_e^*)_{e \in E}$ are subject to further hydraulic principles: Firstly, due to flow preservation, the sum of incoming and outcoming flows $e_{vu}$ between a node $v \in V$ and its neighbors $u \in \mathcal{N}(v)$ is related to its demand $d_v$ by
\begin{align}
\label{align_MassBalance}
    \textstyle 
    \sum_{u \in \mathcal{N}(v)} q_{e_{vu}}^* = - d_v^*.
\end{align}
\noindent Hereby, we use the convention that the flow $q_{e_{vu}}$ from $v$ to $u$ has a positive sign if it corresponds to an \textit{outflow} (including demands) and a negative sign if it corresponds to an \textit{inflow} \cite{rossman2020epanet}. 
This can be seen by the second property, which states that for any two neighboring nodes $v,u \in V$, pressure heads are related to flows by
\begin{align}
\label{align_HeadLoss}
    h_v^* - h_u^* = r_{e_{vu}} \sgn(q_{e_{vu}}^*) |q_{e_{vu}}^*|^x,
\end{align}

\noindent where $x = 1.852$ and
\begin{align}
\label{align_ConstantR}
    r_{e_{vu}} = 
    10.667 ~ l_{e_{vu}} 
    d_{e_{vu}}^{-4.871} c_{e_{vu}}^{-1.852}
    > 0
\end{align}

\noindent is a link-dependent constant based on its length $l_{e_{vu}}$, diameter $d_{e_{vu}}$ and roughness coefficient $c_{e_{vu}}$ \cite{rossman2020epanet}. Now, as water always (out)flows from a higher head to a lower head, if the head loss $h_v - h_u$ is positive as the head $h_v$ at node $v$ is higher than the head $h_u$ at the neighbor node $u$, the water outflow $q_{vu}$ needs to be positive and similarly, an inflow $q_{vu}$ needs to be negative.

As we have not utilized any head values in the model $f_1$ explicitly, during learning, 
it could choose any set of flows that can satisfy eq.\ \eqref{align_MassBalance} at every node. 
Yet, such a solution will most likely not obey eq.\ \eqref{align_HeadLoss} due to the following lemma:

\begin{lemma}
    Given $N_n$ nodes $v \in V$ and $N_e > N_n$ edges $e \in E$, there is no unique solution to eq. \eqref{align_MassBalance}. 
\end{lemma}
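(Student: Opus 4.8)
The plan is to recognize eq.\ \eqref{align_MassBalance} as an underdetermined linear system and conclude by a dimension count on its solution set. First I would fix an arbitrary orientation of each undirected link, so that the antisymmetry constraint $q_{e_{uv}} = -q_{e_{vu}}$ collapses the flow variables to a single vector $\mathbf{q} \in \R^{N_e}$ with one coordinate per link. With this convention, the left-hand side of \eqref{align_MassBalance} at node $v$ is a fixed linear functional of $\mathbf{q}$, namely the $v$-th row of the signed node--link incidence matrix $B \in \R^{N_n \times N_e}$ of the graph, so that the full family of balance equations reads $B\mathbf{q} = -\mathbf{d}^*$.

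Next I would bound the rank: $\rk(B) \le \min\{N_n, N_e\} = N_n$, hence by the rank--nullity theorem $\dim \ker B \ge N_e - N_n \ge 1$, where the last inequality is exactly the hypothesis $N_e > N_n$. (In fact $\rk(B) = N_n - c$ with $c$ the number of connected components, so the kernel is typically larger, but $\dim\ker B \ge 1$ is all that is needed.) Consequently, whenever $B\mathbf{q} = -\mathbf{d}^*$ admits some solution $\mathbf{q}^*$, it admits infinitely many: for every nonzero $\mathbf{k} \in \ker B$ the vector $\mathbf{q}^* + \mathbf{k}$ still satisfies every balance equation and differs from $\mathbf{q}^*$. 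Hence the solution, when it exists, is never unique.

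It is worth exhibiting the kernel explicitly: since $N_e > N_n \ge N_n - 1$, the graph contains a cycle, and the signed indicator vector of that cycle lies in $\ker B$ — intuitively, circulating a constant flow around a closed loop changes no node's net inflow/outflow. This makes the non-uniqueness transparent and simultaneously motivates why the head-loss relation \eqref{align_HeadLoss} must be imposed as an additional constraint.

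The main obstacle is not conceptual depth but bookkeeping: one must handle the antisymmetry reduction and the sign conventions carefully so that the identification of the mass-balance operator with the incidence matrix $B$ is precisely correct, and one must phrase the conclusion so that "no unique solution" is understood as "the solution set is empty or infinite" — which the rank argument yields directly, without having to argue existence separately.
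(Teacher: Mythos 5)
Your proposal is correct and follows essentially the same route as the paper: both reduce eq.\ \eqref{align_MassBalance} to a linear system with an $N_n \times N_e$ coefficient matrix (the paper uses an unsigned node--edge matrix $\mathbf{A}$, you use the signed incidence matrix after fixing orientations) and conclude from $\rk \leq N_n < N_e$ that the solution cannot be unique. Your version is slightly more explicit — spelling out the rank--nullity step, the "empty or infinite" reading of non-uniqueness, and the cycle-space witness for the kernel — but these are refinements of the same argument, not a different one.
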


\begin{proof}
    The proof is given in Appendix \ref{subsection_ProofLemma3.1}.  
\end{proof}

\noindent As in practice, $N_e > N_n$ is usually the case (even for the directed graph),
we leverage the message passing framework using water hydraulics within the second component $f_2$ of our overall model $f = f_2 \circ f_1$ to guide the model to a plausible solution, as described in the following section. 

\subsection{Global Physics-Informed Algorithm}
\label{subsection_GlobalPhysicsInformedAlgorithm}

The following components of $f_2$ use eq.\ \eqref{align_HeadLoss} to construct the heads $\mathbf{\tilde{h}} \in \R^{N_n}$ and the updated flows $\mathbf{\tilde{q}} \in \R^{N_e}$ as well as the updated demands $\mathbf{\tilde{d}} \in \R^{N_e}$ based on some initialized heads $\mathbf{h} \in \R^{N_n}$ and the outcome $f_1((\mathbf{D}, \mathbf{Q}), \Theta) := \mathbf{\hat{q}} \in \R^{N_e}$ of the first component. 

The computation of $f_2(\mathbf{h}, \mathbf{\hat{q}})$ is inspired by the three-step process of message generation, message aggregation and feature update, however, without the usage of trainable parameters. Instead, we define a recursive algorithm that converges to a fix point, as we will show in Theorem \ref{theorem_ConvergenceOfAlgorithm}.
Thus, starting with $\tilde{h}_v^{(0)} = h_v$ for all $v \in V$,  we repeat the following steps $J$ times where $J  \in \mathbb{N}$ is the iteration where the algorithm reaches its fixed point: For the $j$-th iteration for $j=0,...,J-1$, for \textbf{message generation}, we compute
\begin{align*}
    \R^{N_n} \times \R^{N_e}
    & \longrightarrow
    \R^{N_n} \times \R^{N_e} \\
    ((\tilde{h}_v^{(j)})_{v \in V}, (\hat{q}_e)_{e \in E})
    & \longmapsto
    ((\tilde{h}_v^{(j)})_{v \in V}, (m_e^{(j)})_{e \in E}),
\end{align*}

\noindent where for $e = e_{vu} \in E$,  we define
\begin{align} 
\label{align_EdgeMessageGeneration_PhysicsInformed}
    {m}_e^{(j)}
    := 
    {\tilde{h}}_u^{(j)}
    -
    \mathrm{ReLU}\left(- r_e  \sgn({\hat{q}}_e) |{\hat{q}}_e|^x \right).
\end{align}

\noindent For \textbf{message aggregation}, max aggregation is used:
\begin{align*}
    \R^{N_n} \times \R^{N_e}
    & \longrightarrow
    \R^{N_n} \times \R^{N_n} \\
    (({\tilde{h}}_v^{(j)})_{v \in V}, ({m}_e^{(j)})_{e \in E})
    & \longmapsto
    (({\tilde{h}}_v^{(j)})_{v \in V}, ({m}_v^{(j)})_{v \in V}),
\end{align*}

\noindent  where for $v \in V$,  we define
\begin{align*}
    {m}_v^{(j)} 
    := 
    \max_{u \in \N(v)} {m}_{e_{vu}}^{(j)}.
\end{align*}

\noindent Finally, the \textbf{feature update} outputs the heads ${\tilde{h}}^{(j+1)} \in \R^{N_n}$ of the next iteration\footnote{
    We can of course make $\mathbf{\hat{q}}$ the output of the latter function by again adding a second and third factor to each of the tuples and triples, respectively, and using the identity on $\R^{N_e}$.
}:
\begin{align*}
    \R^{N_n} \times \R^{N_n}
    & \longrightarrow
    \R^{N_n} \\
    (({\tilde{h}}_v^{(j)})_{v \in V}, ({m}_v^{(j)})_{v \in V})
    & \longmapsto
    ({\tilde{h}}_v^{(j+1)})_{v \in V},
\end{align*}

\noindent  where for $v \in V$,  we define
\begin{align}
\label{align_Algorithm}
    {\tilde{h}}_v^{(j+1)} 
    := 
    \max\{ {\tilde{h}}_v^{(j)}, {m}_v^{(j)} \}.
\end{align}

\begin{theorem}
\label{theorem_ConvergenceOfAlgorithm}
    Let $V_r \subset V$ be a subset of non-connecting nodes of the WDS with associated values $(h_v^*)_{v \in V_r}$, $L_{\max}$ the length of the longest path from any instance $v_r \in V_r$ to any instance $v \in V$ and define $c := \min_{v_r \in V_r} h_{v_r}^* - L_{\max} \cdot \max_{e \in E} \mathrm{ReLU}\left(- r_e  \sgn({\hat{q}}_e) |{\hat{q}}_e|^x \right)$.
    If we initialize $\mathbf{\tilde{h}}^{(0)} = \mathbf{h} \in \R^{N_n}$ according to
    \begin{align*}
        h_v
        :=
        \begin{cases}
            h_v^* & \text{ if } v \in V_r \\
            c & \text{ if } v \in V \setminus V_r,
        \end{cases}
    \end{align*}
        
    \noindent the recursive algorithm defined by eq.\ \eqref{align_Algorithm} converges in at most $J = L_{\max}$ steps.
\end{theorem}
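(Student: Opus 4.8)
The plan is to read the recursion \eqref{align_Algorithm} as a monotone, max-plus (Bellman--Ford type) propagation on the bidirected link graph and to unroll it explicitly over walks. First I would abbreviate $w_e := \mathrm{ReLU}\!\bigl(-r_e\sgn(\hat q_e)\,|\hat q_e|^x\bigr)\ge 0$ for every directed edge $e\in E$; of an antiparallel pair $e_{vu},e_{uv}$ exactly one of these weights vanishes, but only $w_e\ge 0$ will be used. Substituting \eqref{align_EdgeMessageGeneration_PhysicsInformed} into the aggregation and update steps, one iteration of the algorithm becomes
\begin{align*}
    \tilde h_v^{(j+1)} = \max\Bigl\{\tilde h_v^{(j)},\ \max_{u\in\N(v)}\bigl(\tilde h_u^{(j)} - w_{e_{vu}}\bigr)\Bigr\}.
\end{align*}

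Then I would show by induction on $j$ that $\tilde h_v^{(j)}$ equals the maximum, over all walks $W$ that end at $v$ and use at most $j$ edges, of $\tilde h^{(0)}_{s(W)} - \sum_{e\in W} w_e$, where $s(W)$ denotes the first node of $W$ and the trivial walk (no edge) contributes $\tilde h_v^{(0)}$; the inductive step merely appends one edge $e_{vu}$ to an optimal walk ending at $u$. Because $w_e\ge 0$, two facts follow: the sequence $(\tilde h_v^{(j)})_j$ is non-decreasing, and excising a closed sub-walk never decreases a walk's value, so the supremum over \emph{all} walks ending at $v$ is attained by a simple path, is finite, and will be denoted $\tilde h_v^\star$; in particular $\tilde h_v^{(j)}\le \tilde h_v^\star$ for every $j$.

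The main step is to prove $\tilde h_v^{(L_{\max})}=\tilde h_v^\star$ for every $v$; together with monotonicity and the upper bound $\tilde h_v^\star$ this forces the iterates to be stationary from index $L_{\max}$ on, which is the claim. I would classify the walks defining $\tilde h_v^\star$ by their starting node. A walk starting outside $V_r$ contributes at most $c-\sum_{e\in W}w_e\le c$, whereas a \emph{simple path} $P$ from a node $v_r\in V_r$ to $v$ has length at most $L_{\max}$ by the definition of $L_{\max}$ and therefore contributes
\begin{align*}
    h_{v_r}^* - \sum_{e\in P} w_e \ \ge\ \min_{v_r'\in V_r} h_{v_r'}^* - L_{\max}\cdot\max_{e\in E} w_e \ =\ c .
\end{align*}
Since the trivial walk already yields $\tilde h_v^{(0)}\ge c$ (using $h_v^*\ge c$ when $v\in V_r$), no walk originating outside $V_r$ can ever be the active term, so $\tilde h_v^\star$ is realised either by the trivial walk at $v$ or by a simple path from $V_r$ to $v$ --- in either case by a walk of at most $L_{\max}$ edges, hence one already accounted for in $\tilde h_v^{(L_{\max})}$. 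This gives $\tilde h_v^{(L_{\max})}\ge\tilde h_v^\star$, and with the reverse inequality from the previous paragraph, equality.

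The step I expect to be the real obstacle is this last one: since the costs $w_e$ are not uniform, a shorter path need not dominate a longer one, so a naive \enquote{the wave has reached every node at distance $j$} induction does not work; one must instead pass through the reduction \enquote{maximum over all walks $=$ maximum over simple paths} (which is precisely where $w_e\ge 0$ enters) and then use $L_{\max}$ as the length of the longest \emph{simple} path from $V_r$ to any node, while the prescribed value of $c$ is exactly what renders walks starting outside $V_r$ irrelevant. A comparatively minor point is the direction bookkeeping in the bidirected graph (with $w_{e_{vu}}\ne w_{e_{uv}}$ in general), which, however, does not affect the estimates since only $w_e\ge 0$ is needed.
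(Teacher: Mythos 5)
Your proof is correct, but it takes a genuinely different route from the paper's. You unroll the recursion \eqref{align_Algorithm} as a max-plus (Bellman--Ford) propagation, identify $\tilde h_v^{(j)}$ with the maximum of $\tilde h^{(0)}_{s(W)}-\sum_{e\in W}w_e$ over walks with at most $j$ edges ending at $v$, use $w_e\ge 0$ to reduce the supremum over all walks to a maximum over the trivial walk and finitely many simple paths, and then let the specific value of $c$ eliminate every walk that does not start in $V_r$; since the surviving candidates all have at most $L_{\max}$ edges, $\tilde h_v^{(L_{\max})}$ already equals the fixed point and monotonicity gives stationarity. The paper instead derives an explicit per-iteration closed form of the iterates along a single non-intersected path via a heavy double induction (lemma \ref{lemma_ConvergenceOfSinglePaths} with the nested lemma \ref{lemma_ReservoirIsSource}), and then argues in corollary \ref{corollary_UpdatesOfGeneralPaths} that intersections and multiple reservoirs can change the \emph{value} of an update but not the \emph{iterations} at which a node can be updated, so nothing changes after iteration $L_{\max}$. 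Your argument buys uniform treatment of intersecting paths and multiple reservoirs with far less case analysis, plus a clean closed-form characterization of the limit $\tilde h_v^\star$ as a maximum over simple paths from $V_r$; what it does not reproduce is the paper's explicit formula \eqref{align_AlgorithmForSinglePaths_final} and the \enquote{updates only occur at path lengths} statement, parts of which the paper reuses later in the proof of theorem \ref{theorem_MotivationAlgorithm}. Your diagnosis of the real obstacle is also accurate: because the costs $w_e$ are non-uniform, a naive wavefront induction fails, and the argument must pass through the walk-to-simple-path reduction (where $w_e\ge0$ enters) together with the choice of $c$; the orientation issue $w_{e_{vu}}\neq w_{e_{uv}}$ is indeed harmless since only $0\le w_e\le\max_{e'}w_{e'}$ is used.
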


\begin{proof}
    The proof is given in Appendix \ref{subsection_ProofTheorem3.2}.
\end{proof}

\noindent After this recursive algorithm has converged, the updated demands $\mathbf{\tilde{d}} \in \R^{N_n}$ as well as updated flows $\mathbf{\tilde{q}} \in \R^{N_e} $ are reconstructed from the heads $\mathbf{\tilde{h}} := \mathbf{\tilde{h}}^{(J)} \in \R^{N_n}$ based on the water hydraulics from eq. \eqref{align_MassBalance} and \eqref{align_HeadLoss}:
\begin{align*}
    \R^{N_n}
    & \longrightarrow
    \R^{N_n} \\
    ({\tilde{h}}_v)_{v \in V}
    & \longmapsto
    (({\tilde{d}}_v)_{v \in V}, ({\tilde{q}}_e)_{e \in E}),
\end{align*}

\noindent  where for $v \in V$ and $e = e_{vu} \in E$, we define
\begin{align}
\label{align_ComputeFlowsAndDemands}
\begin{split}
    {\tilde{q}}_e
    &:= 
    \sgn({\tilde{h}_v} - {\tilde{h}}_u) \cdot 
    (r_e^{-1} |{\tilde{h}_v} - {\tilde{h}}_u|)^{1/x} + \zeta, 
    \\
    {\tilde{d}}_v 
    &:= - 
    \textstyle 
    \sum_{u \in \mathcal{N}(v)} {\tilde{q}}_{e_{vu}},
\end{split}
\end{align}

\noindent and $\zeta$ is a  small number that ensures that this equation remains differentiable. This is the last step of the physics-informed component $f_2$, mapping initialized heads $\mathbf{h}$ and the flows $\mathbf{\hat{q}} = f_1((\mathbf{D}, \mathbf{Q}), \Theta)$ given by the first component $f_1$ to updated heads, demands and flows $(\mathbf{\tilde{h}}, \mathbf{\tilde{d}}, \mathbf{\tilde{q}}) = f_2((\mathbf{h}, \mathbf{\hat{q}})) = f_2((\mathbf{h}, f_1((\mathbf{D}, \mathbf{Q}), \Theta))$.
Note that all computations within this component do not involve any learnable parameters, but are only based on the hydraulics of the WDS. 

A deeper intuition of this component can be found in the ArXiv version. One important property of the physics-informed algorithm is that if the flows $\mathbf{\hat{q}}$ estimated by the GCN are correct, they remain unchanged by the algorithm:  

\begin{theorem}
\label{theorem_MotivationAlgorithm}
    If in the setting theorem \ref{theorem_ConvergenceOfAlgorithm}, $V_r$ corresponds to the reservoirs with known heads $(h_v^*)_{v \in V_r}$ and $\mathbf{\hat{q}}$ corresponds to the \textit{true} flows, then $\mathbf{\tilde{q}} = \mathbf{\hat{q}} + \zeta$ holds.
\end{theorem}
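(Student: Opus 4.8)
The plan is to show that when $\mathbf{\hat q}$ equals the true flows $\mathbf{q}^*$, the recursion of eq.\ \eqref{align_Algorithm} converges to exactly the true heads $\mathbf{h}^*$ (up to the reservoir values, which are fixed correctly by initialization), and then that plugging $\mathbf{h}^* $ into eq.\ \eqref{align_ComputeFlowsAndDemands} returns $\mathbf{q}^* + \zeta = \mathbf{\hat q} + \zeta$. The second half is a direct algebraic check: for an edge $e = e_{vu}$, eq.\ \eqref{align_HeadLoss} gives $h_v^* - h_u^* = r_e \sgn(\hat q_e)|\hat q_e|^x$, so $\sgn(\tilde h_v - \tilde h_u) = \sgn(h_v^* - h_u^*) = \sgn(\hat q_e)$ and $(r_e^{-1}|h_v^* - h_u^*|)^{1/x} = (|\hat q_e|^x)^{1/x} = |\hat q_e|$, hence $\tilde q_e = \sgn(\hat q_e)|\hat q_e| + \zeta = \hat q_e + \zeta$. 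So the crux is the first half: identifying the fixed point of the recursion as $\mathbf{h}^*$.

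For that, I would argue in two directions. \emph{Lower bound / invariance:} I claim $\tilde h_v^{(j)} \le h_v^*$ for all $v \in V$ and all $j$. This holds at $j=0$: for $v \in V_r$ we have equality, and for $v \notin V_r$ the constant $c$ is, by definition, $\min_{v_r} h_{v_r}^* - L_{\max}\cdot \max_e \mathrm{ReLU}(-r_e \sgn(\hat q_e)|\hat q_e|^x)$, which is a global lower bound on every head $h_v^*$ because any node is reachable from some reservoir by a path of length at most $L_{\max}$ and along each edge the head drops by at most $\mathrm{ReLU}(-r_e\sgn(\hat q_e)|\hat q_e|^x) = \max\{0, -(h_v^* - h_u^*)\} = \mathrm{ReLU}(h_u^* - h_v^*)$. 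For the induction step, observe from eq.\ \eqref{align_HeadLoss} that $h_u^* - \mathrm{ReLU}(-r_e\sgn(\hat q_e)|\hat q_e|^x) = h_u^* - \mathrm{ReLU}(h_u^* - h_v^*) = \min\{h_u^*, h_v^*\} \le h_v^*$; so if $\tilde h_u^{(j)} \le h_u^*$ then $m_{e_{vu}}^{(j)} = \tilde h_u^{(j)} - \mathrm{ReLU}(\dots) \le h_u^* - \mathrm{ReLU}(h_u^* - h_v^*) \le h_v^*$, and taking the max over $u$ and with $\tilde h_v^{(j)}$ keeps us $\le h_v^*$. \emph{Attainment:} next I show the iteration actually reaches $h_v^*$. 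Fix $v$ and a shortest "descending" path from a reservoir: more carefully, for each $v$ there is a reservoir $v_r$ and a path $v_r = w_0, w_1, \dots, w_k = v$ with $k \le L_{\max}$ realizing $h_v^*$ in the sense that $h_{w_{t+1}}^* = \min\{h_{w_t}^*, h_{w_{t+1}}^*\}$, i.e.\ heads are non-increasing along it — such a path exists because from any node with $\tilde h < h^*$ one can follow an edge into a strictly-higher-head neighbour, and iterating reaches a reservoir (heads are bounded, reservoirs are the sources). Along such a path one shows by induction on $t$ that $\tilde h_{w_t}^{(t)} = h_{w_t}^*$: the base case is the reservoir, and the step uses $m_{e_{w_{t+1}w_t}}^{(t)} = \tilde h_{w_t}^{(t)} - \mathrm{ReLU}(h_{w_t}^* - h_{w_{t+1}}^*) = h_{w_t}^* - (h_{w_t}^* - h_{w_{t+1}}^*) = h_{w_{t+1}}^*$ together with the monotone $\max$-update and the invariance bound $\tilde h_{w_{t+1}}^{(j)} \le h_{w_{t+1}}^*$. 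Hence after $J = L_{\max}$ steps $\tilde h_v^{(J)} = h_v^*$ for every $v$, which is exactly the convergence claim of Theorem \ref{theorem_ConvergenceOfAlgorithm} but now pinned to the true heads.

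The main obstacle is making the "descending path" argument airtight: one must be sure that from \emph{every} node a monotone-head path back to some reservoir exists and has length $\le L_{\max}$, and handle the case of flat segments ($h_{w_t}^* = h_{w_{t+1}}^*$, i.e.\ $\hat q_e = 0$) where the $\mathrm{ReLU}$ and $\sgn$ terms vanish — there the head simply propagates unchanged, which is consistent, but one should check the induction still goes through and the $\sgn$ in eq.\ \eqref{align_ComputeFlowsAndDemands} returns $0$ so that $\tilde q_e = 0 + \zeta = \hat q_e + \zeta$. A secondary subtlety is that the recursion updates \emph{all} heads simultaneously each round, so the attainment argument must track that the correct value, once reached at $w_t$ in round $t$, is never overwritten (it cannot be, since updates only increase $\tilde h$ and the invariance bound caps it at $h^*$) and that it propagates one more hop per round. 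Once these are in place, combining attainment (gives $\mathbf{\tilde h} = \mathbf{h}^*$) with the algebraic check above (gives $\mathbf{\tilde q} = \mathbf{\hat q} + \zeta$) completes the proof; the demand identity $\tilde d_v = -\sum_u \tilde q_{e_{vu}}$ is then automatic and need not be separately verified for this statement.
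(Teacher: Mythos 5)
Your overall route is essentially the paper's: establish that with $\mathbf{\hat{q}}=\mathbf{q}^*$ the recursion converges to the true heads $\mathbf{h}^*$ (upper-bound invariance plus attainment along a head-descending path from a reservoir, with the observation that the monotone max-update can never overwrite a correct value), and then invert eq.\ \eqref{align_HeadLoss} algebraically to recover $\mathbf{\tilde{q}}=\mathbf{\hat{q}}+\zeta$; this mirrors lemmata \ref{lemma_Inflows_1}--\ref{lemma_TrueFlows} and corollary \ref{corollary_TrueHeads}. Your packaging of the message as $\tilde{h}_u^{(j)}-\mathrm{ReLU}(h_u^*-h_v^*)\le\min\{h_u^*,h_v^*\}\le h_v^*$ is a genuinely cleaner way to state the invariance than the case split in lemma \ref{lemma_Relation_TrueHeadsAndFlows_Algorithm}, and your treatment of the zero-flow edge case is something the paper only dismisses in a footnote.

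The one step you flag but do not actually close is the existence, for every $v\in V\setminus V_r$, of a neighbour $u$ with $h_u^*>h_v^*$ (equivalently an inflow edge), which is what lets the descending path be extended until it terminates at a reservoir. Your stated justification --- ``heads are bounded, reservoirs are the sources'' --- explains why the walk must stop somewhere, not why it can only stop at a reservoir. The actual reason is the mass-balance constraint: by eq.\ \eqref{align_MassBalance} with the sign convention and $d_v^*>0$ for consumers, $\sum_{u\in\mathcal{N}(v)}q^*_{e_{vu}}=-d_v^*<0$ forces at least one $q^*_{e_{vu}}<0$, and by eq.\ \eqref{align_HeadLoss} that neighbour has strictly larger head (this is exactly lemma \ref{lemma_Inflows_1}, iterated in lemma \ref{lemma_Inflows_2}). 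Once that is inserted, the strict increase of heads along the backward walk makes the path simple, so its length is bounded by $L_{\max}$ by definition, and the rest of your induction goes through. Without it, the attainment half of the argument has no starting point, so this is the ingredient you must add rather than a cosmetic omission.
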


\begin{proof}
    The proof is given in Appendix \ref{subsection_ProofTheorem3.3}.
\end{proof}

\noindent Therefore, enforcing the equality of $\mathbf{\tilde{q}}$ and $\mathbf{\hat{q}}$ by choosing a suitable loss function will guide us to a physically correct solution for flows, as we discuss in the following section.

\subsection{Overall Model and Training} \label{subsection_OverallModelandTraining}

We obtain an overall model $f(\cdot, \Theta) = f_2 \circ f_1(\cdot, \Theta)$ leveraging GCNs and the water hydraulics. Overall dynamics and training incorporate two further novel design principles:

Since observational \emph{training}\/ data are not easily available, we do not train $f$ in the typical supervised learning sense. Available data are the true demands $\mathbf{d}^* \in \R^{N_n}$ at consumer nodes and the true heads $(h_v^*)_{v \in V_r}$ at the reservoir nodes $V_r \subset V$. This information is used together with error terms which confirm that physical constraints of the hydraulics are fulfilled, as we will detail below.

The \emph{model dynamics}\/ has to ensure that information can spread through all nodes of the WDS; therefore, before updating the model's parameters $\Theta$, for $K \in \mathbb{N}$, it is applied $K$-times to each sample of a training batch. This iterative scheme is defined as follows:
As a first step, for each sample of a batch $S_b$, we need to initialize the node features, i.e., the demands $\mathbf{D}$, and the edge features, i.e., the flows  $\mathbf{Q}$, which are the required inputs to $f_1$, and the heads $\mathbf{h}$, as the required input to $f_2$. Thus, for $k = 0$, for the pressure heads $\mathbf{h}^{(0)} = (h_v^{(0)})_{v \in V}$, we define
\begin{align*}
    h_v^{(0)} &:=
    \begin{cases}
     h_v^* \quad & \text{ if } v \in V_r \\
    0 \quad & \text{ if } v \in V \setminus V_r 
    \end{cases}
\end{align*}

\noindent for all $v \in V$. We use the true demand as the input demands $\mathbf{D}^{(0)} = (\mathbf{d_1}^{(0)}, \mathbf{d_2}^{(0)})$ and initialize the input flows $\mathbf{Q}^{(0)} = (\mathbf{q_1}^{(0)}, \mathbf{q_2}^{(0)})$ based on the water hydraulics from eq. \eqref{align_HeadLoss} (analogously to eq. \eqref{align_ComputeFlowsAndDemands}):
\begin{align*}
    d_{v1}^{(0)}, d_{v2}^{(0)} &:=
    \begin{cases}
    0 \quad & \text{ if } v \in V_r \\
    d_v^* \quad & \text{ if } v \in V \setminus V_r 
    \end{cases}
    ,\\
    q_{e1}^{(0)}, q_{e2}^{(0)} &:=
    \sgn(h_v^{(0)} - h_u^{(0)}) \cdot 
    (r_e^{-1} |h_v^{(0)} - h_u^{(0)}|)^{1/x}
\end{align*}

\noindent for all $v \in V$ and all $e = e_{vu} \in E$.
Afterwards, we use the outcomes of $f_1$ and $f = f_1 \circ f_2$ of the $k$-th iteration in the $k+1$-th iteration: For $k = 0,...,K-1$, we define
{\small
\begin{align*}
    \mathbf{\hat{q}}^{(k)} &:= f_1\left( \Big((\mathbf{d_1}^{(k)},\mathbf{d_2}^{(k)}),(\mathbf{q_1}^{(k)},\mathbf{q_2}^{(k)})\Big), \Theta \right)\\
    (\mathbf{\tilde{h}}^{(k)}, \mathbf{\tilde{d}}^{(k)}, \mathbf{\tilde{q}}^{(k)}) &:= f_2(\mathbf{h}^{(0)}, \mathbf{\hat{q}}^{(k)})\\
    \mathbf{d_1}^{(k+1)} &:= \mathbf{d}^{(0)} \text{ (true demand)}\\
    \mathbf{d_2}^{(k+1)} &:= \mathbf{\hat{d}}^{(k)}\text{ (demand from GCN layers)}\\
    \mathbf{q_1}^{(k+1)} &:= \mathbf{\hat{q}}^{(k)} \text{ (flow from GCN layers)}\\
    \mathbf{q_2}^{(k+1)} &:= \mathbf{\tilde{q}}^{(k)} \text{ (flow from physics-informed algo)},
\end{align*}
}%
\noindent where we compute the demands $\mathbf{\hat{d}}^{(k)}$ by flows $\mathbf{\hat{q}}^{(k)}$ analogously to eq. \eqref{align_ComputeFlowsAndDemands}.\footnote{
    We do not re-use the demand $\mathbf{\tilde{d}}^{(k)}$ from the physics-informed algorithm as an input node feature, since including it did not improve performance.
}
More precisely, in each iteration, we update the second node feature by using the updated demands from the output of $f_1$ while keeping the first node feature fixed to allow better information propagation between iterations. Additionally, we update the edge features by using the updated flows computed by $f_1$ and $f_2$.

After making use of the model $f(\cdot, \Theta)$ $K$ times, we update its parameters by minimizing the following \emph{loss function}\footnote{
    As true demands at the reservoirs are unknown, with a slight abuse of notation, here the demand vectors exclude all $v \in V_r$.
} through back-propagation:
\begin{align*}
    \mathcal{L} = 
    \mathcal{L} (\mathbf{d}^*, \mathbf{\hat{d}}^{(K)}) 
    + \rho
    \mathcal{L} (\mathbf{d}^*, \mathbf{\tilde{d}}^{(K)}) 
    + \delta
    \mathcal{L} (\mathbf{\hat{q}}^{(K)}, \mathbf{\tilde{q}}^{(K)}),
\end{align*}

\noindent where $\rho$ and $\delta$ are hyperparameters and

\begin{itemize}
    \item $\mathcal{L} (\mathbf{d}^*, \mathbf{\hat{d}}^{(K)})$ regresses the demands $\mathbf{\hat{d}}^{(K)}$ computed by the GCN layers $f_1$ against the true demands $\mathbf{d}^*$,
    
    \item $\mathcal{L} (\mathbf{d}^*, \mathbf{\tilde{d}}^{(K)})$ regresses the demands $\mathbf{\tilde{d}}^{(K)}$ computed by the whole model $f = f_2 \circ f_1$ against the true demands $\mathbf{d}^*$,
    
    \item $\mathcal{L} (\mathbf{\hat{q}}^{(K)}, \mathbf{\tilde{q}}^{(K)})$ regresses the flows $\mathbf{\hat{q}}^{(K)}$ computed by the GCN layers $f_1$ against the flows $\mathbf{\tilde{q}}^{(K)}$ computed by the whole model $f = f_2 \circ f_1$ (cf. theorem \ref{theorem_MotivationAlgorithm}),
    
    \item and $\mathcal{L}$ denotes the L1 loss, i.e., the mean absolute error, over all nodes $V$ and all samples $S_b$ in a batch.
\end{itemize}

\noindent It is important that  $\rho$ and $\delta$ are considerably smaller than one, because the first term allows the GCN layers to estimate an initial set of flows and the remaining terms guide the model towards a valid solution. Our methodology allows the GCN component $f_1$ to learn locally, which is augmented by the global physics-informed algorithm $f_2$. Multiple iterations of this methodology allows the local GCN to learn the global task and solve it accurately.

\begin{table}[!htbp]
\centering
\setlength{\tabcolsep}{24pt}
\renewcommand{\arraystretch}{1.1}
\resizebox{.99\columnwidth}{!}{
\begin{tabular}{lc}
    \hline
    \textbf{Hyperparameters} & \textbf{Values} \\
    \hline
    No. of GCN layers $I$ & 5 \\
    No. of MLP layers & 2 \\
    Latent dimension ($O$) & 128 \\
    No. of training epochs & 3000 \\
    Learning Rate (LR) & 0.0001 \\
    LR scheduler step size & 300 \\
    LR scheduler decay rate & 0.75 \\
    No. of training scenarios & 20 \\
    No. of training samples & 1920 \\
    No. of training iterations $K$ & [10, 15]  \\
    $\rho \quad \text{and} \quad \delta$ & 0.1 \\
    No. of evaluation iterations & 20  \\
    No. of evaluation scenarios & 30 \\
    No. of evaluation samples & 20,160 \\
\end{tabular}
}
\caption{Hyperparameters used for training and evaluation}
\label{table: hyperparameters}
\end{table}

\begin{table*}[!htbp]
\centering
\setlength{\tabcolsep}{12pt}
\renewcommand{\arraystretch}{1.2}
\resizebox{.99\textwidth}{!}{
\begin{tabular}{lccccc}
    \hline
    \textbf{WDS} & \textbf{Hanoi} & \textbf{Fossolo} & \textbf{Pescara} & \textbf{L-Town Area-C} & \textbf{Zhi Jiang} \\
    \hline
    No. of junctions & 32 & 37 & 71 & 93 & 114 \\
    No. of links & 68 & 116 & 198 & 218 & 328 \\
    No. of reservoirs & 1 & 1 & 3 & 1 & 1\\
    Diameter & 13 & 8 & 20 & 20 & 24 \\
    Node degree (min, mean, max) & (2, 4.25, 8) & (2, 6.27, 8) & (2, 5.52, 10) & (2, 4.69, 8) & (2, 5.75, 8) \\
\end{tabular}
}
\caption{Attributes of WDS used for experiments}
\label{table: WDS attributes}
\end{table*}

\section{Experiments} \label{sec: experiments}

We evaluate our methodology on a number of real-world WDS. To the best of our knowledge, there is no comparable ML approach for the task of state, i.e., head and flow, estimation. Hence, we compare our results with the (computationally demanding) \enquote{ground truth} of the hydraulic simulator EPANET \cite{rossman2020epanet}. 
EPANET can run two types of simulations, demand driven (DD) and pressure dependent demand (PDD). In DD simulation, demands of all consumers are ensured, while in PDD, all demands may not be met depending on the drop in pressure. Our current methodology emulates DD simulation, although it can be modified to emulate PDD simulation. 

\subsection{Datasets}

A WDS can consist of different types of junctions (reservoirs, consumers, tanks) and links (pipes, valves, pumps). A simple WDS normally has a single reservoir, many consumers and pipes. Adding a second reservoir increases the complexity of the problem. If there are valves, the WDS needs to be broken down into parts to estimate the states (heads, flows). Tanks and pumps add a temporal dimension to the dataset. For our experiments, we use datasets based on WDS with a single or multiple reservoirs and where all links are pipes (cf. table \ref{table: WDS attributes}). 

Hanoi is one of the most popular WDS being used in the domain of WDS. Different scenarios are available with varying demand patterns, diameters, length and roughness coefficients \cite{vrachimis2018leakdb}. L-TOWN is  a well known large WDS with one available demand pattern  \cite{vrachimis2020battledim}. We use only Area-C of L-TOWN, where we treat it as a separate WDS by attaching a reservoir of head 200$m$ and changing the demand multiplier to 5. Moreover, Fossolo, Pescara and Zhi Jiang are real world international WDS datasets \cite{fossolo, pescara, zhijiang} with no demand patterns available.

We are particularly interested in emulations which can address different WDS configurations (regarding link attributes such as diameter) and varying demands within a single model.
To eliminate the need for re-training we thus train for multiple scenarios for each WDS. For Hanoi, such scenarios are already available. For L-Town Area-C, there is one demand pattern available. For the other three WDS, we generate demands by sampling from a normal distribution $\mathcal{N}(0,\,1)$. Except for Hanoi, for each scenario, we add variation to the demands by adding noise sampled from $\mathcal{N}(0,\,0.1)$. Moreover, we vary the diameters by adding noise sampled from $\mathcal{N}(0,\,1/30)$. Additionally, we use different seeds for each scenario. Note that varying the diameter gives enough variation for the model to generalize to varying values of $r_e$,  as for $e \in E$, instead of the magnitudes $l_e, d_e$,  and $c_e$, the model gets $r_e$ as input (cf. eq. \eqref{align_ConstantR}).

\begin{table*}[!t]
\centering
\setlength{\tabcolsep}{12pt}
\renewcommand{\arraystretch}{1.2}
\resizebox{.99\textwidth}{!}{
\begin{tabular}{lccccc}
    \hline
    \textbf{WDS} & \textbf{Hanoi} & \textbf{Fossolo} & \textbf{Pescara} & \textbf{L-Town Area-C} & \textbf{Zhi Jiang} \\
    \hline
    & \multicolumn{5}{c}{\textbf{Time in seconds.}} \\
    \hline
    EPANET  & 22.87 & 719.86 & 1318.86 & 1380.40 & 274.80 \\
    GCN Model & 5.98 & 7.24 & 10.44 & 11.12 & 15.58 \\
% \end{tabular}
% }
% \caption{Evaluation times (seconds) to simulate/emulate 20,160 samples}
% \label{table: time results}
% \end{table*}

% \begin{table*}[t]
% \centering
% \setlength{\tabcolsep}{24pt}
% \renewcommand{\arraystretch}{1.1}
% \resizebox{.99\textwidth}{!}{
% \begin{tabular}{l|c|c|c|c|c}
%     MRAE (\%) & Hanoi & Fossolo & Pescara & L-Town Area-C & Zhi Jiang \\
    \hline
    & \multicolumn{5}{c}{\textbf{MRAE on estimated vs true demands.} (\%)} \\
    \hline
    \textit{All samples} & & & & & \\
    EPANET & 0.001 $\pm$ 0.0005 & 0.115 $\pm$ 0.022 & 0.157 $\pm$ 0.080 & 0.362 $\pm$ 0.349 & 0.013 $\pm$ 0.002 \\
    GCN Model & 0.179 $\pm$ 0.200 & 0.324 $\pm$ 0.050 & 2.135 $\pm$ 1.614 & 1.004 $\pm$ 0.622 & 0.415 $\pm$ 0.067  \\
    \textit{Excluding 5\% outliers} & & & & & \\
    EPANET & 0.001 $\pm$ 0.0004 & 0.113 $\pm$ 0.019 & 0.149 $\pm$ 0.075 & 0.310 $\pm$ 0.272 & 0.013 $\pm$ 0.002 \\
    GCN Model & 0.147 $\pm$ 0.058 & 0.316 $\pm$ 0.034 & 1.907 $\pm$ 1.123 & 0.891 $\pm$ 0.321 & 0.404 $\pm$ 0.048  \\
    
% \end{tabular}
% }
% \caption{MRAE on estimated vs true demands across nodes and 20,160 samples.}
% \label{table: mrae demands}
% \end{table*}

% \begin{table*}[t]
% \centering
% \begin{tabular}{l|c|c|c|c|c}
%     MRAE (\%) & Hanoi & Fossolo & Pescara & L-Town Area-C & Zhi Jiang \\
    \hline
    & \multicolumn{5}{c}{\textbf{MRAE for GCN Model vs EPANET on flows and heads.} (\%)} \\
    \hline
    \textit{All samples} & & & & & \\
    Flows & 0.295 $\pm$ 1.936 & 3.972 $\pm$ 1.586 & 4.663 $\pm$ 8.400 & 0.336 $\pm$ 0.335 & 0.859 $\pm$ 0.444 \\
    Heads & 0.003 $\pm$ 0.003 & 0.002 $\pm$ 0.001 & 0.010 $\pm$ 0.004 & 0.005 $\pm$ 0.004 & 0.015 $\pm$ 0.013 \\
    \textit{Excluding 5\% outliers} & & & & & \\
    Flows & 0.144 $\pm$ 0.150 & 3.757 $\pm$ 1.290 & 3.240 $\pm$ 2.080 & 0.278 $\pm$ 0.115 & 0.794 $\pm$ 0.345 \\
    Heads & 0.002 $\pm$ 0.001 & 0.002 $\pm$ 0.001 & 0.009 $\pm$ 0.003 & 0.004 $\pm$ 0.003 & 0.013 $\pm$ 0.009 \\
    \end{tabular}
}
\caption{Results of the evaluations on 20,160 samples.}
\label{table: results}
\end{table*}

\subsection{Training Setup}

We implement all models in Pytorch and train them using the ADAM optimizer. We do not use bias in any of the layers. Hyperparameters are identical for all WDS (see ArXiv version). Since our model uses a physics-informed algorithm, the error does not increase with further iterations after convergence. Therefore, when choosing the number of GCN layers $I$ and iterations $K$, we only have to ensure that $I \cdot K$ is sufficiently larger than the diameter of the WDS. We use 20 scenarios for training with two days of data each. The sampling rate is every 30 minutes (48 samples per day) and 60:20:20 train:validation:test splits are used. We do not normalize the data in order to preserve the hydraulic relationships between demands, flows and heads. During training, we vary the number of iterations $K$ randomly within a range for every epoch to prevent over-fitting. We use learning rate scheduler and gradient clipping for smoother training. EPANET simulations are carried out using the WNTR python library \cite{klise2018overview}.

\begin{figure}[!htbp]
\centering
\includegraphics[]{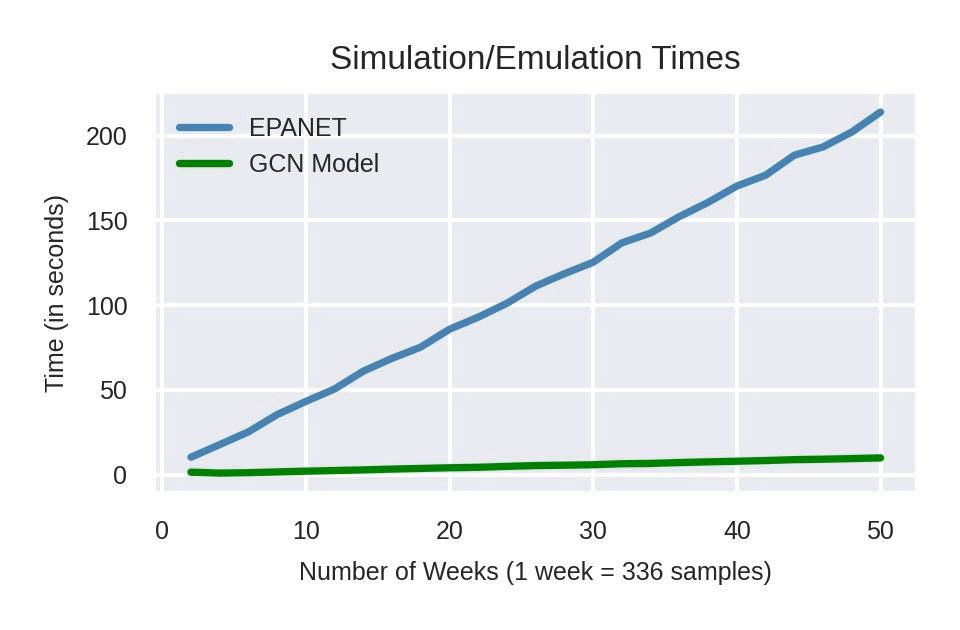} 
\caption{Comparison of simulation/emulation times on L-TOWN Area-C for EPANET and GCN Model.}
\label{fig: times}
\end{figure}

\subsection{Results and Analysis}

We evaluate all models on entirely unseen scenarios. Each of these 30 scenarios consist of 14 days of data (20,160 samples). The evaluations are done on NVIDIA GeForce RTX 4090. We compare the estimated values of demands against the true demands by computing the mean relative absolute error \footnote{We do not use the conventional mean absolute error (MAE) since our data is not normalized, i.e. heads generally have big values $\gg 0$, while flows and demands are much smaller (in decimals). Hence, MAE for flows and demands would be very low but unjust.}, given by
\begin{equation*}
    \text{MRAE}(S) =  
    \textstyle \frac{1}{N_s \cdot N_s}
    \sum_{\iota = 1}^{N_s}
    \sum_{v \in V} 
    \frac{| {d}_{\iota v}^* - {\hat{d}}_{\iota v} |}{| {d}_{\iota v}^* |}
\end{equation*}
\noindent over all samples $S = \{ (\mathbf{d}_{\iota}^*,\mathbf{\hat{d}}_{\iota}) ~|~ \iota = 1,...,N_s \}$ (cf. table \ref{table: results}). Since there are no true values available for heads and flows, we compare our results with those from the EPANET simulator using an analogous formula (cf. table \ref{table: results}). We also compute the conformity $C$ of the results to eq. \eqref{align_HeadLoss} by
\begin{equation*}
    C := 
    \textstyle \sum_{\iota = 1}^{N_s}
    \sum_{v \in V; u \in \mathcal{N}(v)} 
    \mathrm{sgn}(\tilde{h}_{\iota v} - \tilde{h}_{\iota u}) - \mathrm{sgn}(\tilde{q}_{e_{\iota vu}}),
\end{equation*}

\noindent leading to zero error for all experiments (hence not reported in tables). We achieve very good accuracy on head estimation and demand reconstruction. Most importantly, we report evaluation times, which are reduced by orders of magnitude for larger WDS as compared to EPANET (cf. table \ref{table: results}). Moreover, these do not increase drastically with the the number of samples (cf. fig. \ref{fig: times}). We added noise sampled from $\mathcal{N}(0,\,0.1)$ to the demands to create different scenarios. Great evaluation results on 30 unseen scenarios show that the model can generalize to approximately 30 percent change in demands. We also varied diameters by adding noise sampled from $\mathcal{N}(0,1/30)$, meaning that the model can also generalize to up to 10 percent change in diameters. These results clearly support our model as a viable alternative to the hydraulic simulator for WDS planning and expansion.

\begin{figure}[!htbp]
\centering
\includegraphics[]{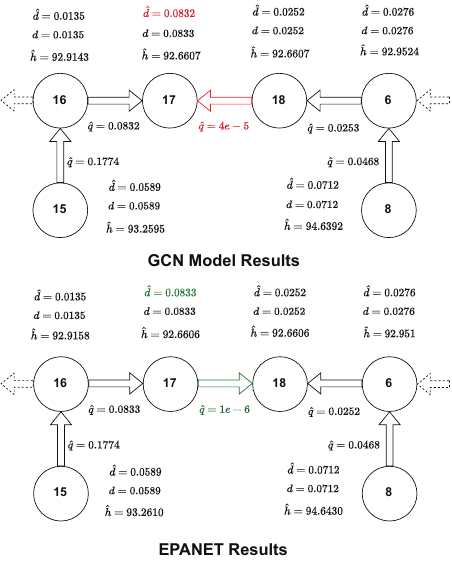} 
\caption{Limitation of the model in some cases demonstrated by an example displaying part of Hanoi WDS. The flow estimation between nodes 18 and 17 is erroneous which does induce some error in demands and heads, but the magnitude of that error is far less pronounced as in case of flows.}
\label{fig: hanoi_analysis}
\end{figure}

\subsection{Limitations}

We empirically evaluated the model by adding noise to the diameters up to standard deviation $\sigma=0.1$ and observed that the model has less than 5\% error up to $\sigma=0.08$, well beyond $\sigma=0.033$ used for training (cf. fig. \ref{fig: robustness}). We also investigated the comparatively higher standard deviation of errors in case of flows. We discovered that in some outlier samples, our model estimates some small flows erroneously. Such a sample will exhibit very high error for some flows but that will not affect the demand and head estimation by the same magnitude. This is illustrated with an example in fig. \ref{fig: hanoi_analysis}. As can be seen, the flow estimation between nodes 18 and 17 is erroneous which does induce some error in demands and heads, but the magnitude of that error is far less pronounced than the flow. Hence, we include a set of results in table \ref{table: results} excluding 5 percent outliers showing significantly lower standard deviations.

\section{Social Impact}

Sustainable urban development is not possible without efficient and timely WDS planning and expansion. Scientists at water institutes face the challenge of quick decision making on a daily basis as well as long-term planning of WDS refurbishing and extension in the light of deep uncertainties. Since hardly any new city starts from scratch, most of those decisions are about modifications to or expansion of an existing WDS. Given the structure and parameters (reservoirs, demands, pipe lengths, diameters, roughness etc.) of a WDS, even changing or adding a few new pipes can require a multitude of simulations, which is extremely costly using current  hydraulic simulation.  Therefore, a faster DL alternative is needed. We expect that our proposed model can be directly applied to such tasks  and thus can contribute to sustainable development of critical infrastructure in cities.

\begin{figure}[!tb]
\centering
\includegraphics[]{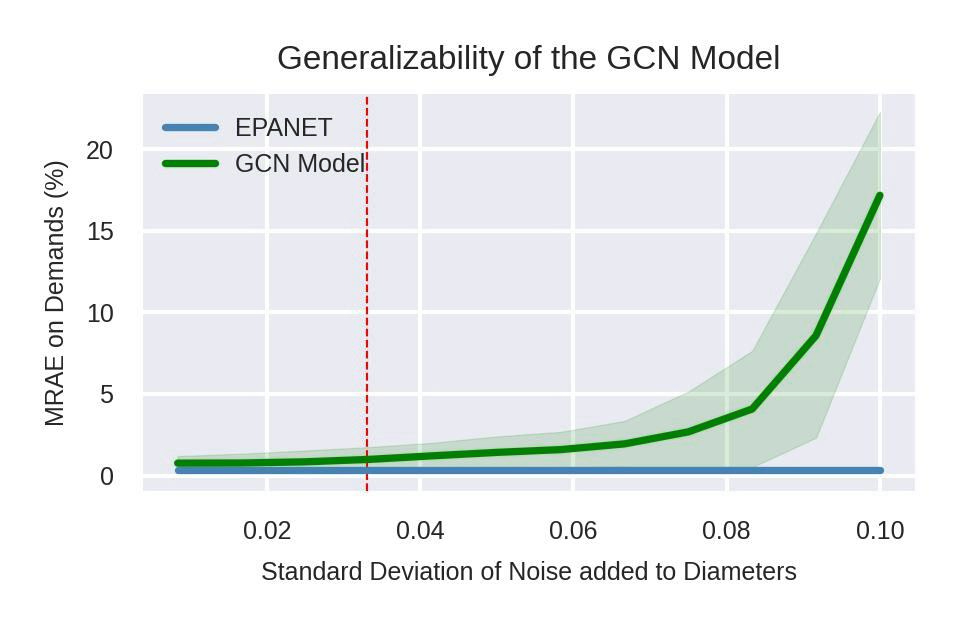} 
\caption{MRAE on Demands (L-TOWN Area-C) with increasing $\sigma$ of noise added to diameters. Red line indicates the maximum $\sigma$ used for training.}
\label{fig: robustness}
\end{figure}

\section{Conclusion and Future Work}

We present a physics-informed DL solution to the task of state estimation in WDS. The task is of utmost importance for planning and expansion of WDS. The hydraulic solver EPANET is the current go to solution for this task for researchers, scientists and engineers in the field of WDS. However, the solver suffers from long computation times since it scales non-linearly with the size of WDS. Moreover, even the slightest changes in the configuration of WDS requires a complete re-run. We utilize GCN layers and a unique physics-informed algorithm to build a DL alternative that is vastly faster than the hydraulic simulator. To the best of our knowledge, this is the first DL approach that does not use any additional information to solve the task. We use a limited number of GCN layers iteratively thus keeping the model size small. Unlike conventional ML tasks, we infer two features (heads, flows) from a given feature (demand) and achieve great accuracy using hydraulic principles.    

In the future, we plan to extend the proposed methodology to adapt to more complex WDS. For instance, a WDS with valves adds a break to the continuity of the hydraulic principles. A pump adds a Markov property to the flows and will need further adjustments to our model. The current DD simulations can be modified to run PDD simulations by adding certain constraints. On a graph level, the impact of changes to WDS structure on the performance of our methodology needs to be explored. Finally, a model able to generalize across different WDS will be the ideal solution.

\section*{Acknowledgements} 
We gratefully acknowledge funding from the European Research Council (ERC) under the ERC Synergy Grant Water-Futures (Grant agreement No. 951424) and the research training group “Dataninja” (Trustworthy AI for Seamless Problem Solving: Next Generation Intelligence Joins Robust Data Analysis) funded by the German federal state of North Rhine-Westphalia.

\bibliography{aaai24}

\begin{thebibliography}{36}
\providecommand{\natexlab}[1]{#1}

\bibitem[{Ashraf et~al.(2023)Ashraf, Hermes, Artelt, and Hammer}]{ashraf2023spatial}
Ashraf, I.; Hermes, L.; Artelt, A.; and Hammer, B. 2023.
\newblock Spatial Graph Convolution Neural Networks for Water Distribution Systems.
\newblock In Cr{\'e}milleux, B.; Hess, S.; and Nijssen, S., eds., \emph{Advances in Intelligent Data Analysis XXI}, 29--41. Cham: Springer Nature Switzerland.
\newblock ISBN 978-3-031-30047-9.

\bibitem[{Bruna et~al.(2014)Bruna, Zaremba, Szlam, and LeCun}]{Bruna2014SpectralNA}
Bruna, J.; Zaremba, W.; Szlam, A.~D.; and LeCun, Y. 2014.
\newblock Spectral Networks and Locally Connected Networks on Graphs.
\newblock \emph{CoRR}.

\bibitem[{Dandy(2016{\natexlab{a}})}]{fossolo}
Dandy, G. 2016{\natexlab{a}}.
\newblock "03 Fossolo" International Systems. 3.

\bibitem[{Dandy(2016{\natexlab{b}})}]{zhijiang}
Dandy, G. 2016{\natexlab{b}}.
\newblock "06 Zhi Jiang" International Systems. 6.

\bibitem[{Defferrard, Bresson, and Vandergheynst(2016)}]{defferrard2016convolutional}
Defferrard, M.; Bresson, X.; and Vandergheynst, P. 2016.
\newblock Convolutional neural networks on graphs with fast localized spectral filtering.
\newblock \emph{NIPS}, 29: 3844--3852.

\bibitem[{Dick et~al.(2019)Dick, Russell, Dosso, Kwamena, and Green}]{doi:10.1061/(ASCE)IS.1943-555X.0000477}
Dick, K.; Russell, L.; Dosso, Y.~S.; Kwamena, F.; and Green, J.~R. 2019.
\newblock Deep Learning for Critical Infrastructure Resilience.
\newblock \emph{JIS}, 25(2): 05019003.

\bibitem[{Eichenberger and {et al.}(2022)}]{pmlr-v176-eichenberger22a}
Eichenberger, C.; and {et al.} 2022.
\newblock Traffic4cast at NeurIPS 2021 - Temporal and Spatial Few-Shot Transfer Learning in Gridded Geo-Spatial Processes.
\newblock In \emph{Proceedings of the NeurIPS 2021 Competitions and Demonstrations Track}, volume 176, 97--112. PMLR.

\bibitem[{Fan, Zhang, and Yu(2021)}]{Fan2021}
Fan, X.; Zhang, X.; and Yu, X. .~B. 2021.
\newblock Machine learning model and strategy for fast and accurate detection of leaks in water supply network.
\newblock \emph{Journal of Infrastructure Preservation and Resilience}, 2(1): 10.

\bibitem[{Gao, Wang, and Ji(2018)}]{gao2018large}
Gao, H.; Wang, Z.; and Ji, S. 2018.
\newblock Large-scale learnable graph convolutional networks.
\newblock In \emph{SIGKDD}, 1416--1424.

\bibitem[{Hajgat{\'o}, Gyires-T{\'o}th, and Pa{\'a}l(2021)}]{hajgato2021pressure}
Hajgat{\'o}, G.; Gyires-T{\'o}th, B.; and Pa{\'a}l, G. 2021.
\newblock Reconstructing nodal pressures in water distribution systems with graph neural networks.
\newblock \emph{arXiv preprint arXiv:2104.13619}.

\bibitem[{Hall(2021)}]{pescara}
Hall, A. 2021.
\newblock "04 Pescara" International Systems. 3.

\bibitem[{Hamilton, Ying, and Leskovec(2017)}]{hamilton2017inductive}
Hamilton, W.~L.; Ying, R.; and Leskovec, J. 2017.
\newblock Inductive representation learning on large graphs.
\newblock In \emph{NIPS}, 1025--1035.

\bibitem[{Hammer(2000)}]{diss}
Hammer, B. 2000.
\newblock \emph{Learning with recurrent neural networks}, volume 254 of \emph{Leacture Notes in Control and Information Sciences}.
\newblock Springer.

\bibitem[{Henaff, Bruna, and LeCun(2015)}]{henaff2015deep}
Henaff, M.; Bruna, J.; and LeCun, Y. 2015.
\newblock Deep convolutional networks on graph-structured data.
\newblock \emph{arXiv preprint arXiv:1506.05163}.

\bibitem[{Kammoun, Kammoun, and Abid(2022)}]{doi:10.1061/(ASCE)PS.1949-1204.0000646}
Kammoun, M.; Kammoun, A.; and Abid, M. 2022.
\newblock Leak Detection Methods in Water Distribution Networks: A Comparative Survey on Artificial Intelligence Applications.
\newblock \emph{Journal of Pipeline Systems Engineering and Practice}, 13(3): 04022024.

\bibitem[{Kipf and Welling(2017)}]{kipf2017semi}
Kipf, T.~N.; and Welling, M. 2017.
\newblock Semi-Supervised Classification with Graph Convolutional Networks.
\newblock In \emph{International Conference on Learning Representations (ICLR)}.

\bibitem[{Klambauer et~al.(2017)Klambauer, Unterthiner, Mayr, and Hochreiter}]{Gunter2017SNNs}
Klambauer, G.; Unterthiner, T.; Mayr, A.; and Hochreiter, S. 2017.
\newblock Self-Normalizing Neural Networks.
\newblock In \emph{NIPS}, NIPS'17, 972–981.
\newblock ISBN 9781510860964.

\bibitem[{Klise, Murray, and Haxton(2018)}]{klise2018overview}
Klise, K.; Murray, R.; and Haxton, T. 2018.
\newblock An Overview of the Water Network Tool for Resilience (WNTR):(075).
\newblock In \emph{WDSA/CCWI Joint Conference Proceedings}, volume~1.

\bibitem[{Levie et~al.(2018)Levie, Monti, Bresson, and Bronstein}]{levie2018cayleynets}
Levie, R.; Monti, F.; Bresson, X.; and Bronstein, M.~M. 2018.
\newblock Cayleynets: Graph convolutional neural networks with complex rational spectral filters.
\newblock \emph{IEEE Transactions on Signal Processing}, 67(1): 97--109.

\bibitem[{Li et~al.(2020)Li, Xiong, Thabet, and Ghanem}]{li2020GENConv}
Li, G.; Xiong, C.; Thabet, A.; and Ghanem, B. 2020.
\newblock Deepergcn: All you need to train deeper gcns.
\newblock \emph{arXiv preprint arXiv:2006.07739}.

\bibitem[{Li et~al.(2018)Li, Wang, Zhu, and Huang}]{li2018adaptive}
Li, R.; Wang, S.; Zhu, F.; and Huang, J. 2018.
\newblock Adaptive Graph Convolutional Neural Networks.
\newblock In \emph{Proceedings of the AAAI Conference on Artificial Intelligence}, volume~32. Association for the Advancement of Artificial Intelligence (AAAI).

\bibitem[{Magini et~al.(2023)Magini, Moretti, Boniforti, and Guercio}]{su15042981}
Magini, R.; Moretti, M.; Boniforti, M.~A.; and Guercio, R. 2023.
\newblock A Machine-Learning Approach for Monitoring Water Distribution Networks (WDNs).
\newblock \emph{Sustainability}, 15(4).

\bibitem[{Monti et~al.(2017)Monti, Boscaini, Masci, Rodola, Svoboda, and Bronstein}]{monti2017geometric}
Monti, F.; Boscaini, D.; Masci, J.; Rodola, E.; Svoboda, J.; and Bronstein, M.~M. 2017.
\newblock Geometric deep learning on graphs and manifolds using mixture model cnns.
\newblock In \emph{IEEE conference on computer vision and pattern recognition}, 5115--5124.

\bibitem[{Niepert, Ahmed, and Kutzkov(2016)}]{niepert2016learning}
Niepert, M.; Ahmed, M.; and Kutzkov, K. 2016.
\newblock Learning convolutional neural networks for graphs.
\newblock In \emph{ICML}, 2014--2023. PMLR.

\bibitem[{Omitaomu and Niu(2021)}]{smartcities4020029}
Omitaomu, O.~A.; and Niu, H. 2021.
\newblock Artificial Intelligence Techniques in Smart Grid: A Survey.
\newblock \emph{Smart Cities}, 4(2): 548--568.

\bibitem[{Raissi, Perdikaris, and Karniadakis(2019)}]{RAISSI2019686}
Raissi, M.; Perdikaris, P.; and Karniadakis, G. 2019.
\newblock Physics-informed neural networks: A deep learning framework for solving forward and inverse problems involving nonlinear partial differential equations.
\newblock \emph{Journal of Computational Physics}, 378: 686--707.

\bibitem[{Ritchie and Roser(2018)}]{owidurbanization}
Ritchie, H.; and Roser, M. 2018.
\newblock Urbanization.
\newblock \emph{Our World in Data}.
\newblock Https://ourworldindata.org/urbanization.

\bibitem[{Rossman et~al.(2020)Rossman, Woo, Tryby, Shang, Janke, and Haxton}]{rossman2020epanet}
Rossman, L.; Woo, H.; Tryby, M.; Shang, F.; Janke, R.; and Haxton, T. 2020.
\newblock EPANET 2.2 User’s Manual, Water Infrastructure Division.
\newblock \emph{CESER}.

\bibitem[{Ruff et~al.(2023)Ruff, Russell, Stoeckle, Miotto, and How}]{ruff2023surrogate}
Ruff, E.; Russell, R.; Stoeckle, M.; Miotto, P.; and How, J.~P. 2023.
\newblock Surrogate Neural Networks for Efficient Simulation-based Trajectory Planning Optimization.
\newblock arXiv:2303.17468.

\bibitem[{Scarselli et~al.(2009)Scarselli, Gori, Tsoi, Hagenbuchner, and Monfardini}]{scarselli2009}
Scarselli, F.; Gori, M.; Tsoi, A.~C.; Hagenbuchner, M.; and Monfardini, G. 2009.
\newblock The Graph Neural Network Model.
\newblock \emph{IEEE Transactions on Neural Networks}, 20(1): 61--80.

\bibitem[{Veli{\v{c}}kovi{\'{c}} et~al.(2018)Veli{\v{c}}kovi{\'{c}}, Cucurull, Casanova, Romero, Li{\`{o}}, and Bengio}]{velickovic2018graph}
Veli{\v{c}}kovi{\'{c}}, P.; Cucurull, G.; Casanova, A.; Romero, A.; Li{\`{o}}, P.; and Bengio, Y. 2018.
\newblock {Graph Attention Networks}.
\newblock \emph{ICLR}.
\newblock Accepted as poster.

\bibitem[{Vrachimis et~al.(2018)Vrachimis, Kyriakou, Eliades, and Polycarpou}]{vrachimis2018leakdb}
Vrachimis, S.; Kyriakou, M.; Eliades, D.; and Polycarpou, M. 2018.
\newblock LeakDB: A benchmark dataset for leakage diagnosis in water distribution networks description of benchmark.
\newblock In \emph{Vol 1 of Proc., WDSA/CCWI Joint Conf. Kingston, ON, Canada: Queen’s Univ}.

\bibitem[{Vrachimis et~al.(2020)Vrachimis, Eliades, Taormina, Ostfeld, Kapelan, Liu, Kyriakou, Pavlou, Qiu, and Polycarpou}]{vrachimis2020battledim}
Vrachimis, S.~G.; Eliades, D.~G.; Taormina, R.; Ostfeld, A.; Kapelan, Z.; Liu, S.; Kyriakou, M.; Pavlou, P.; Qiu, M.; and Polycarpou, M.~M. 2020.
\newblock BattLeDIM: Battle of the leakage detection and isolation methods.
\newblock In \emph{CCWI/WDSA Joint Conf}.

\bibitem[{Wu et~al.(2023)Wu, Wang, Liu, Yu, and Wu}]{WU2023104545}
Wu, Y.; Wang, X.; Liu, S.; Yu, X.; and Wu, X. 2023.
\newblock A weighting strategy to improve water demand forecasting performance based on spatial correlation between multiple sensors.
\newblock \emph{Sustainable Cities and Society}, 93: 104545.

\bibitem[{Xing and Sela(2022)}]{xing2022stateestimation}
Xing, L.; and Sela, L. 2022.
\newblock Graph Neural Networks for State Estimation in Water Distribution Systems: Application of Supervised and Semisupervised Learning.
\newblock \emph{Journal of Water Resources Planning and Management}, 148(5).

\bibitem[{Xu et~al.(2019)Xu, Hu, Leskovec, and Jegelka}]{xu2018powerful}
Xu, K.; Hu, W.; Leskovec, J.; and Jegelka, S. 2019.
\newblock How Powerful are Graph Neural Networks?
\newblock In \emph{International Conference on Learning Representations}.

\end{thebibliography}

\clearpage

\appendix

\section{Theoretical Background}
\label{section_TheoreticalBackground}

\subsection{Proof of Lemma 3.1}
\label{subsection_ProofLemma3.1}

\begin{lemma}
    Given $N_n$ nodes $v \in V$ and $N_e > N_n$ edges $e \in E$, there is no unique solution to eq. \eqref{align_MassBalance}.
\end{lemma}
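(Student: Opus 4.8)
The plan is to read eq.\ \eqref{align_MassBalance} as an underdetermined linear system and exhibit a nontrivial kernel. Stacking the flows into $\mathbf{q}^* = (q_e^*)_{e \in E} \in \R^{N_e}$ and the demands into $\mathbf{d}^* = (d_v^*)_{v \in V} \in \R^{N_n}$, the family of equations \eqref{align_MassBalance} becomes $A\mathbf{q}^* = -\mathbf{d}^*$, where $A \in \R^{N_n \times N_e}$ is the signed node--edge incidence matrix of the directed graph: $A_{v,e} = 1$ if $e = e_{vu}$ is directed out of $v$, $A_{v,e} = -1$ if $e$ is directed into $v$, and $A_{v,e} = 0$ otherwise. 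A linear system $A\mathbf{q}^* = -\mathbf{d}^*$ has a unique solution only if $\ker A = \{\mathbf{0}\}$, so it suffices to show $\ker A \neq \{\mathbf{0}\}$ whenever $N_e > N_n$.

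The quickest route is a rank count: since $A$ has $N_n$ rows, $\rk A \le N_n < N_e$, and rank--nullity gives $\dim \ker A = N_e - \rk A \ge N_e - N_n \ge 1$. Hence there is some $\boldsymbol{\nu} \in \ker A$ with $\boldsymbol{\nu} \neq \mathbf{0}$, and then for every solution $\mathbf{q}^*$ of \eqref{align_MassBalance} and every $t \in \R$ the vector $\mathbf{q}^* + t\boldsymbol{\nu}$ is again a solution, so the solution is never unique.

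If one prefers an explicit witness, I would instead argue combinatorially. A graph on $N_n$ vertices with no undirected cycle is a forest and therefore has at most $N_n - 1$ edges; since $N_e > N_n$, the graph must contain a cycle $C$. Orienting $C$ and setting $\nu_e = +1$ on edges of $C$ traversed in agreement with their orientation, $\nu_e = -1$ on edges of $C$ traversed against it, and $\nu_e = 0$ off $C$, one checks that at each vertex the two incident cycle edges contribute with opposite signs, so $A\boldsymbol{\nu} = \mathbf{0}$ with $\boldsymbol{\nu} \neq \mathbf{0}$; this recovers the standard fact that circulations along cycles lie in the kernel of the incidence matrix.

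The one point that needs care is the phrasing ``no unique solution'': for a given $\mathbf{d}^*$ the system \eqref{align_MassBalance} may have no solution at all (solvability forces the demands in each connected component to sum to zero, i.e.\ $\mathbf{d}^* \perp \ker A^{\top}$), so the statement should be read as ``not exactly one solution''. The kernel argument settles the case in which a solution exists, and a single remark disposes of the no-solution case. In the setting of the paper the hypothesis is comfortably satisfied, since the graph is built with bidirectional edges and hence already contains $2$-cycles, giving $N_e > N_n$ and an immediate element of $\ker A$.
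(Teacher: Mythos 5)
Your proof is correct and follows essentially the same route as the paper: both rewrite eq.\ \eqref{align_MassBalance} as a linear system $A\mathbf{q}^* = -\mathbf{d}^*$ with an $N_n \times N_e$ incidence-type matrix and conclude non-uniqueness from $\rk A \le N_n < N_e$. Your additions --- the explicit cycle-based kernel element and the remark that ``no unique solution'' must also cover the case where no solution exists at all --- are careful refinements the paper omits, but they do not change the underlying argument.
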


\begin{proof}
    Eq. \eqref{align_MassBalance} can be written as a system of linear equations of the kind $-\mathbf{A} \mathbf{q}^* = \mathbf{d}^*$, where the entry $a_{i_vi_e}$ of the matrix $\mathbf{A} \in \R^{N_n \times N_e}$ equals one if the $i_e$-th edge is the edge $e_{v u}$ of the $i_v$-th node $v \in V$ and some neighbor $u \in \N(v)$ and zero else. As $\rk(\mathbf{A}) \leq \min\{N_n,N_e\} = N_n < N_e$, this system is not uniquely solvable. 
\end{proof}

\subsection{Proof of Theorem 3.2}
\label{subsection_ProofTheorem3.2}

In order to prove theorem \ref{theorem_ConvergenceOfAlgorithm}, we need to introduce the concept of paths:

\begin{definition}[Paths]
\label{definition_Paths}
    We denote the tuple $(v_0, v_1, ..., v_L)$ as the \textit{path} from $v_r \in V_r \subset V$ to $v \in V$, if
    \begin{itemize}
        \item $v_0, ..., v_L \in V$,
        \item for all $l_1, l_2 \in \{0,...,L\}$ for which $l_1 \neq l_2$ holds, also $v_{l_1} \neq v_{l_2}$ holds,
        \item $v_0 = v_r$,
        \item $v_{l+1} \in \N(v_l)$ for all $l = 0, ..., L-1$, and
        \item $v_L = v$
    \end{itemize}
    
    \noindent holds. Moreover, we denote 
    \begin{itemize}
        \item the set of paths from $v_r \in V_r$ to $v \in V$ by $\Pa(v_r,v)$,
        \item the set of paths from any $v_r \in V_r$ to $v \in V$ by $\Pa(v) := \{ p ~|~ v_r \in V_r, p \in \Pa(v_r,v) \}$,
    \end{itemize}
    
    \noindent and 
    \begin{itemize}
        \item the length of a path $p = (v_0, ..., v_L)$ by $L(p) := L$,
        \item the length of the longest path from any $v_r \in V_r$ to $v \in V$ by $L_{\max}(v) := \max_{p \in \Pa(v)} L(p)$,
        \item the length of the longest path from any $v_r \in V_r$ to any $v \in V$ by $L_{\max} = \max_{v \in V} L_{\max}(v)$
    \end{itemize}
    
    \noindent and the set of path lengths of from any $v_r \in V_r$ to $v \in V$ by $\Le(v) := \{L(p) ~|~ p \in \Pa(v) \}$.
\end{definition}

\noindent Based on this, after defining how we initialize the algorithm of theorem \ref{theorem_ConvergenceOfAlgorithm}/\ref{theorem_ConvergenceOfAlgorithm_appendix}, we divide its proof in several lemmata and corollaries.
For simplicity, for $e = e_{vu} \in E$, we define 
\begin{align*}
    w_e := \mathrm{ReLU}\left(- r_e  \sgn({\hat{q}}_e) |{\hat{q}}_e|^x \right)
\end{align*}

\noindent such that the edge messages $m_e^{(j)}$ (cf. eq. \eqref{align_EdgeMessageGeneration_PhysicsInformed}) become $m_e^{(j)} = {\tilde{h}}_u^{(j)} - w_e$ for all $j \in \{0,...,J-1\}$. Especially to mention, $w_e$ does \textit{not} depend on $j$.

\begin{assumption}[Initialization]
\label{assumption_Initialization}
     Let $V_r \subset V$ be a subset of non-connecting nodes of the WDS with associated values $(h_v^*)_{v \in V_r}$, $L_{\max}$ the length of the longest path from any instance $v_r \in V_r$ to any instance $v \in V$ and define $c := \min_{v_r \in V_r} h_{v_r}^* - L_{\max} \cdot \max_{e \in E} w_e$.
    Then we initialize $\mathbf{\tilde{h}}^{(0)} = \mathbf{h} \in \R^{N_n}$ according to
    \begin{align*}
        h_v
        :=
        \begin{cases}
            h_v^* & \text{ if } v \in V_r \\
            c & \text{ if } v \in V \setminus V_r.
        \end{cases}
    \end{align*}
\end{assumption}

\begin{theorem}
\label{theorem_ConvergenceOfAlgorithm_appendix}
    If assumption \ref{assumption_Initialization} holds, the recursive algorithm defined by 
    \begin{align}
    \label{align_Algorithm_nested}
        \tilde{h}_v^{(j)} 
        := 
        \max\{ 
        \tilde{h}_v^{(j-1)}, 
        \max_{u \in \N(v)}
        \tilde{h}_u^{(j-1)} - w_{e_{vu}} 
        \}
    \end{align}
    
    \noindent converges in at most $J = L_{\max}$ steps.
\end{theorem}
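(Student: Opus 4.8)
The plan is to compute the iterates $\tilde{h}_v^{(j)}$ in closed form in terms of the path values of Definition~\ref{definition_Paths}, and then read the bound off directly. Two preliminary observations set things up. First, the update~\eqref{align_Algorithm_nested} always takes a maximum that contains the previous value $\tilde{h}_v^{(j-1)}$, so each coordinate sequence is non-decreasing; and since the update is a function of the current iterate alone, it is enough to exhibit a $J\le L_{\max}$ with $\tilde{h}^{(J)}=\tilde{h}^{(J-1)}$, because then $\tilde{h}^{(j)}=\tilde{h}^{(J)}$ for all $j\ge J$. Second, $w_e=\mathrm{ReLU}(-r_e\sgn(\hat{q}_e)|\hat{q}_e|^x)\ge 0$, so the quantities subtracted along a walk are all non-negative; hence deleting a cycle from a walk, or truncating it at its first repeated vertex, can only increase the accumulated value $h_{v_0}^*-\sum w_e$. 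This lets me pass freely between walks and the simple paths of Definition~\ref{definition_Paths}.

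Write $\mathrm{init}_v:=h_v^*$ for $v\in V_r$ and $\mathrm{init}_v:=c$ otherwise, and for $p=(v_0,\dots,v_L)\in\Pa(v)$ set $\mathrm{val}(p):=h_{v_0}^*-\sum_{l=1}^{L}w_{e_{v_l v_{l-1}}}$, where the weight subtracted at step $l$ is that of the edge carrying the message into $v_l$ (orientation matters, since $w_{e_{ab}}\ne w_{e_{ba}}$ in general). The core of the proof is the claim, to be shown by induction on $j$, that
\begin{align*}
    \tilde{h}_v^{(j)}=\max\Bigl(\{\mathrm{init}_v\}\cup\{\mathrm{val}(p):p\in\Pa(v),\ L(p)\le j\}\Bigr)
\end{align*}
for all $v\in V$ and all $j\ge 0$; the base case $j=0$ is exactly Assumption~\ref{assumption_Initialization}, using that for $v\in V_r$ the trivial path $(v)$ realizes $h_v^*$.

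For the inductive step I would substitute the hypothesis for each neighbour $u\in\N(v)$ into \eqref{align_Algorithm_nested}. A path to $u$ of length $\le j-1$, extended by the edge $e_{vu}$, is either a path to $v$ of length $\le j$, or (if $v$ already lies on it) it contains a sub-path to $v$ whose value is at least as large, by the cycle-deletion observation; conversely every path to $v$ of length $\le j$ is the edge-extension of its own prefix, and the shorter ones are already present in $\tilde{h}_v^{(j-1)}\le\tilde{h}_v^{(j)}$. The leftover term $\mathrm{init}_u-w_{e_{vu}}$ is dominated: if $u\in V_r$ it is the value of the length-one path $(u,v)\in\Pa(v)$, and if $u\notin V_r$ it is $\le c\le\mathrm{init}_v$ (using $\mathrm{init}_v\in\{c\}\cup\{h_{v_r}^*:v_r\in V_r\}$ and $h_{v_r}^*\ge c$). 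Reservoir coordinates need no separate treatment. Finally, since $L(p)\le L_{\max}(v)\le L_{\max}$ for all $p\in\Pa(v)$, the restriction $L(p)\le j$ is vacuous once $j\ge L_{\max}$, so $\tilde{h}_v^{(L_{\max})}=\tilde{h}_v^{(L_{\max}+1)}$ for every $v$: the recursion has reached a fixed point after $J=L_{\max}$ steps. (The specific value of $c$ plays no role in this count; it is chosen so that the resulting fixed point coincides with the true head vector, which is what Theorem~\ref{theorem_MotivationAlgorithm} then exploits.)

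The delicate step is the induction itself: one must collapse the walks implicitly generated by this max-plus recursion onto the simple paths of Definition~\ref{definition_Paths}, handling cleanly the cases where the extending edge closes a cycle or where a path passes through another reservoir, and one must keep the asymmetric edge weights $w_{e_{ab}}$ versus $w_{e_{ba}}$ straight in every term. The rest is routine bookkeeping.
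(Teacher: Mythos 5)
Your proposal is correct, but it takes a genuinely different route from the paper's proof. The paper first analyses the dynamics on a single non-intersected path via a lengthy two-level induction (lemma \ref{lemma_ConvergenceOfSinglePaths} with the nested lemma \ref{lemma_ReservoirIsSource}), extracts the closed form on such paths (corollary \ref{corollay_ConvergenceOfSinglePaths}), and then argues somewhat informally (corollary \ref{corollary_UpdatesOfGeneralPaths}) that in a general WDS a node can only change its value at iterations $j \in \Le(v)$, so that no updates occur for $j > L_{\max}$. You instead prove a single global invariant: $\tilde{h}_v^{(j)}$ equals the maximum of $\mathrm{init}_v$ and the values $h_{v_0}^* - \sum_{l} w_{e_{v_l v_{l-1}}}$ over all $p \in \Pa(v)$ with $L(p) \leq j$ --- the standard Bellman--Ford / max-plus ``value iteration equals best path of bounded length'' identity --- and read off convergence because the constraint $L(p) \leq j$ becomes vacuous at $j = L_{\max}$. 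Your induction is sound: the orientation of the weights $w_{e_{v_l v_{l-1}}}$ matches eq.\ \eqref{align_EdgeMessageGeneration_PhysicsInformed}; the leftover term $\mathrm{init}_u - w_{e_{vu}}$ is correctly absorbed (as the length-one path when $u \in V_r$, and via $c \leq \mathrm{init}_v$ otherwise, which does use $c \leq \min_{v_r} h_{v_r}^*$, so the value of $c$ is not entirely irrelevant --- only its precise magnitude is); and since the induction hypothesis already ranges over simple paths only, the ``delicate'' collapse you worry about reduces to truncating an extension at a prior occurrence of $v$, which only increases the value because $w_e \geq 0$. What your approach buys is a fully rigorous and shorter argument that replaces the paper's case analysis and the informal generalization step in corollary \ref{corollary_UpdatesOfGeneralPaths}; what the paper's route buys is the explicit per-node update schedule ($v$ changes only at iterations in $\Le(v)$) and the intermediate formulas that are reused almost verbatim in the proof of theorem \ref{theorem_MotivationAlgorithm_appendix}, whereas your closed form would need a small additional argument (identifying which path attains the maximum) to serve that later purpose.
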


\noindent First of all, we explore how information is passed from a node $v_r \in V_r$ along any path in the WDS. In practise, where $V_r$ corresponds to the reservoir nodes of a WDS, the information equals head values which are passed from a reservoir node $v_r \in V_r$ to any other node $v \in V$ in the WDS.

\begin{lemma}
\label{lemma_ConvergenceOfSinglePaths}
    Let assumption \ref{assumption_Initialization} hold.
    Let $p = (v_0, ..., v_L)$ be a path from $v_r \in V_r$ to $v \in V$ which is not intersected by any other path, i.e., for which $\N(v_0) = \{v_1\}$, $\N(v_l) = \{v_{l-1}, v_{l+1}\}$ for all $l = 1,...,L-1$ and $\N(v_L) = \{v_{L-1}\}$ holds.
    Moreover, assume that (a) $v_1, ..., v_L \in V \setminus V_r$ holds.
    %and (b)
    %
    %\begin{align*}
    %    h_{v_r}^* - \sum_{l = 1}^{L} w_{e_{v_l v_{l-1}}} \geq 0
    %\end{align*}
    %
    %\noindent hold.
    Then, we can conclude that
    \begin{align}
    \label{align_AlgorithmForSinglePaths_final}
        \tilde{h}_{v_l}^{(j)}
        =
        \begin{cases}
            \tilde{h}_{v_{l}}^{(0)} 
            & \text{ if } j < l \text{ or } l = 0\\
            \tilde{h}_{v_{l-1}}^{(l-1)} - w_{e_{v_l v_{l-1}}} 
            & \text{ if } j \geq l \neq 0
        \end{cases}
    \end{align}
    
    \noindent holds for all $l = 0,...,L$ and all $j \in \mathbb{N}$.
\end{lemma}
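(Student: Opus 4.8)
The plan is to prove, by induction on the iteration index $j$, a slightly unfolded form of eq.\ \eqref{align_AlgorithmForSinglePaths_final}. Keeping the abbreviation $w_e$ introduced in the appendix and setting $\bar h_l := h_{v_0}^* - \sum_{k=1}^{l} w_{e_{v_k v_{k-1}}}$ for the reservoir head decreased by the head losses accumulated along $p$ up to $v_l$, I claim that $\tilde h_{v_l}^{(j)} = c$ whenever $1 \le l$ and $j < l$, and $\tilde h_{v_l}^{(j)} = \bar h_l$ whenever $0 \le l$ and $j \ge l$. This is equivalent to eq.\ \eqref{align_AlgorithmForSinglePaths_final}, because its lower branch $\tilde h_{v_{l-1}}^{(l-1)} - w_{e_{v_l v_{l-1}}}$ equals $\bar h_{l-1} - w_{e_{v_l v_{l-1}}} = \bar h_l$ as soon as the formula is known at the index pair $(l-1,l-1)$. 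Before starting the induction I would record three elementary facts: $\bar h_0 = h_{v_0}^*$; the sequence $\bar h_l$ is non-increasing in $l$, since every $w_e = \mathrm{ReLU}(\cdot) \ge 0$; and $\bar h_l \ge c$ for all $l \le L$, because $\sum_{k=1}^{l} w_{e_{v_k v_{k-1}}} \le l\cdot \max_{e \in E} w_e \le L_{\max}\cdot \max_{e \in E} w_e$ while $h_{v_0}^* \ge \min_{v_r \in V_r} h_{v_r}^*$, which together with the definition of $c$ gives the bound. Finally, hypothesis (a) together with the initialization of Assumption \ref{assumption_Initialization} gives the starting values $\tilde h_{v_0}^{(0)} = h_{v_0}^* = \bar h_0$ and $\tilde h_{v_l}^{(0)} = c$ for $l = 1,\dots,L$, which settles the base case $j = 0$.

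For the inductive step I would assume the claim at iteration $j$ for all $l$ and evaluate the update eq.\ \eqref{align_Algorithm_nested} at each $v_l$. Since the path does not intersect any other path, this update sees only $\tilde h_{v_{l-1}}^{(j)}$, $\tilde h_{v_l}^{(j)}$ and $\tilde h_{v_{l+1}}^{(j)}$ (with the outer two replaced by a single term at the endpoints, as $\N(v_0) = \{v_1\}$ and $\N(v_L) = \{v_{L-1}\}$), all known by hypothesis; I then split on whether $j+1 < l$, $j+1 = l$, or $j+1 > l$. If $j+1 < l$, the term $\tilde h_{v_l}^{(j)}$ equals $c$ and both neighbour terms are $c$ minus a nonnegative weight, hence at most $c$, so the maximum is $c$. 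If $j+1 = l$, the term $\tilde h_{v_l}^{(j)}$ equals $c$ and the term coming from $v_{l+1}$ is at most $c$, while the term from $v_{l-1}$ is $\tilde h_{v_{l-1}}^{(l-1)} - w_{e_{v_l v_{l-1}}} = \bar h_l$, which is $\ge c$ by the third fact, so the maximum is $\bar h_l$, the value asserted by the lower branch of eq.\ \eqref{align_AlgorithmForSinglePaths_final}. If $j+1 > l$, the term $\tilde h_{v_l}^{(j)}$ and the term from $v_{l-1}$ both equal $\bar h_l$, while the term from $v_{l+1}$, namely $\tilde h_{v_{l+1}}^{(j)} - w_{e_{v_l v_{l+1}}}$, is $\le \bar h_{l+1} \le \bar h_l$ by the hypothesis together with monotonicity of $\bar h$ and $w_e \ge 0$, so the maximum stays $\bar h_l$. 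At the reservoir node $v_0$ the same computation yields $\tilde h_{v_1}^{(j)} - w_{e_{v_0 v_1}} \le \bar h_1 \le h_{v_0}^*$, so $\tilde h_{v_0}$ never leaves $h_{v_0}^* = \bar h_0$, which is the $l = 0$ branch; the terminal node $v_L$ is the interior case with the (already dominated) downstream term simply deleted.

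The calculations are routine; the only point that needs care is bookkeeping. One must resist cancelling $w_{e_{v_l v_{l-1}}}$ against $w_{e_{v_{l-1} v_l}}$: by the antisymmetrization of $\mathbf{\hat{q}}$ these two weights are in general unequal (at most one is nonzero), and the argument is deliberately arranged so that no such cancellation is ever used — every estimate relies only on $w_e \ge 0$, the monotonicity of $\bar h$, and the lower bound $\bar h_l \ge c$, which is precisely the place where the definition of $c$ and the bound $L \le L_{\max}$ enter. A second thing to state explicitly is that eq.\ \eqref{align_AlgorithmForSinglePaths_final} is self-referential, its right-hand side at $(l,j)$ mentioning $\tilde h_{v_{l-1}}^{(l-1)}$; but the induction on $j$ has already pinned this down to $\bar h_{l-1}$ by the time the case $j+1 = l$ is reached, so the recursion is well posed and the induction closes. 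In particular, for this non-intersecting path every $\tilde h_{v_l}$ has reached $\bar h_l$ once $j \ge L$, which is the single-path precursor of the $J = L_{\max}$ bound of Theorem \ref{theorem_ConvergenceOfAlgorithm_appendix}.
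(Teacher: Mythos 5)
Your proof is correct. It follows the same basic skeleton as the paper's — induction on the iteration index $j$ with a case split on the position of $j$ relative to $l$ — but you make one genuinely different and simplifying choice: you prove the \emph{closed-form} statement $\tilde h_{v_l}^{(j)} = \bar h_l := h_{v_0}^* - \sum_{k=1}^{l} w_{e_{v_k v_{k-1}}}$ for $j \ge l$ directly, rather than the recursive form $\tilde h_{v_{l-1}}^{(l-1)} - w_{e_{v_l v_{l-1}}}$ appearing in eq.\ \eqref{align_AlgorithmForSinglePaths_final}. The paper instead carries the recursive form through the outer induction and therefore needs a nested induction (lemma \ref{lemma_ReservoirIsSource}) inside the induction step just to unfold $\tilde h_{v_{l-1}}^{(l-1)}$ into the explicit sum, followed by a five-way case analysis; your formulation makes that nested induction unnecessary, collapses the cases to three (plus the endpoints), and delivers the statement of corollary \ref{corollay_ConvergenceOfSinglePaths} in the same pass, since your $\bar h_l$ is exactly its right-hand side. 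All the supporting estimates you invoke ($w_e \ge 0$, monotonicity of $\bar h_l$ in $l$, and $\bar h_l \ge c$ via $l \le L \le L_{\max}$ and $h_{v_0}^* \ge \min_{v_r \in V_r} h_{v_r}^*$) are the same ones the paper isolates as eq.\ \eqref{align_cEstimation_1}--\eqref{align_cEstimation_3}, and your remark about not cancelling $w_{e_{v_l v_{l-1}}}$ against $w_{e_{v_{l-1} v_l}}$ correctly identifies the one place where a careless reading of the head-loss relation would break the argument. The only cost of your route is that the equivalence of the closed form with the literal statement of eq.\ \eqref{align_AlgorithmForSinglePaths_final} has to be argued once at the start, which you do.
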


\begin{proof}
    We prove lemma \ref{lemma_ConvergenceOfSinglePaths} by induction to $j \in \mathbb{N}$.
    Beforehand, note that as the path is not intersected, eq. \eqref{align_Algorithm_nested} simplifies to
    \begin{align}
    \label{align_AlgorithmForSinglePaths}
    \begin{split}
        \tilde{h}_0^{(j)}
        =&
        \max \{
        \tilde{h}_0^{(j-1)},
        \tilde{h}_{1}^{(j-1)} - w_{0 1}
        \},
        \\
        \tilde{h}_l^{(j)}
        =&
        \max \{
        \tilde{h}_l^{(j-1)},
        \tilde{h}_{l-1}^{(j-1)} - w_{l (l-1)},
        \tilde{h}_{l+1}^{(j-1)} - w_{l (l+1)}
        \},
        \\
        &\text{ for all }l = 1,...,L-1 \text{ and }
        \\
        \tilde{h}_L^{(j)}
        =&
        \max \{
        \tilde{h}_L^{(j-1)},
        \tilde{h}_{L-1}^{(j-1)} - w_{L (L-1)}
        \},
    \end{split}
    \end{align}
    
    \noindent where for simplicity, in this proof, we use the notation $\tilde{h}_l^{(j)} := \tilde{h}_{v_l}^{(j)}$ for all $l = 0,...,L$, and $w_{e_{v_l v_{l-1}}} := w_{l (l-1)}$ for all $l = 1,...,L$, etc.
    
    Moreover, before starting with the induction base, we also discuss several estimations on the constant $c$ from assumption \ref{assumption_Initialization}: 
    By definition, $w_e \geq 0$ and thus, $-w_e \leq 0$ holds for all $e \in E$.
    Therefore, for any $e \in E$,
    \begin{align}
    \label{align_cEstimation_1}
        c - w_e
        \leq 
        c
    \end{align}
    
    \noindent holds. Even more, for any non-empty subset $\tilde{E} \subset E$ for which $|\tilde{E}| \leq L_{\max}$ holds and for any $v_r \in V_r$, we obtain
    \begin{align}
    \label{align_cEstimation_2}
        c
        =
        \underbrace{
        \min_{u_r \in V_r} h_{u_r}^* 
        }_{\leq h_{v_r}^* }
        -
        \underbrace{
        \underbrace{L_{\max}}_{\geq |\tilde{E}| \geq 1} 
        \cdot 
        \underbrace{
        \max_{\tilde{e} \in E} w_{\tilde{e}}
        }_{\geq \max_{\tilde{e} \in \tilde{E}} w_{\tilde{e}} \geq 0}
        }_{\geq \sum_{\tilde{e} \in \tilde{E}} w_{\tilde{e}}}
        \leq
        h_{v_r}^*
        -
        \sum_{\tilde{e} \in \tilde{E}} w_{\tilde{e}}
    \end{align}
    
    \noindent and even more, for an $e \in \tilde{E}$,
    \begin{align}
    \label{align_cEstimation_3}
        c
        \leq 
        h_{v_r}^*
        -
        \underbrace{
        \sum_{\tilde{e} \in \tilde{E}} \underbrace{w_{\tilde{e}}}_{\geq 0}
        }_{\geq w_e}
        \leq
        h_{v_r}^*
        -
        \underbrace{w_e}_{\geq 0}
        \leq 
         h_{v_r}^*
        .
    \end{align}
    
    \begin{remark}[Further estimations on $c$]
    \label{remark_cEstimation}
        Even more, by definition of $L_{\max} \geq 1$, similar arguments show that for \textit{any} $e \in E$ and for any $v_r \in V_r$, we obtain
        \begin{align*}
            c - w_e
            \leq
            c
            =&~
            \min_{u_r \in V_r} h_{u_r}^* 
            -
            L_{\max}
            \cdot
            \max_{\tilde{e} \in E} w_{\tilde{e}}
            \\
            \leq&~
            h_{v_r}^* 
            -
            L_{\max}
            \cdot 
            w_e
            \\
            \leq&~
            h_{v_r}^* 
            -
            w_e
            \\
            \leq&~
            h_{v_r}^*
            .
        \end{align*}
        % --- LONG VERSION (if they wanna know what "similar arguments" are
        %
        %\begin{align*}
        %    c
        %    =
        %    \underbrace{
        %    \min_{u_r \in V_r} h_{u_r}^* 
        %    }_{\leq h_{v_r}^* }
        %    \underbrace{
        %    -
        %    \underbrace{L_{\max}}_{\geq 1} 
        %    \cdot 
        %    \underbrace{
        %    \max_{\tilde{e} \in E} w_{\tilde{e}}
        %    }_{\geq w_e \geq 0}
        %    }_{\leq - L_{\max} \cdot w_e}
        %    \leq
        %    h_{v_r}^* 
        %    -
        %    L_{\max}
        %    \cdot 
        %    w_e
        %\end{align*}
        %
        %\noindent and even more,
        %
        %\begin{align*}
        %    c
        %    \leq 
        %    h_{v_r}^* 
        %    \underbrace{
        %    -
        %    \underbrace{L_{\max}}_{\geq 1} 
        %    \cdot 
        %    \underbrace{w_e}_{\geq 0}
        %    }_{\leq - w_e}
        %    \leq 
        %    h_{v_r}^*
        %    - 
        %    w_e
        %    \leq 
        %    h_{v_r}^*.
        %\end{align*}
    \end{remark}
    
    \noindent In the rest of the proof, we will mostly make use of these estimations and two other arguments (cf. remark \ref{remark_Arguments}).
    \\
    \\\textbf{Induction base:} \textit{Case 1:}
    If $j = 1$ and $l = 0$, 
    the node $v_l = v_0 = v_r$ is a reservoir node while $v_{l+1} = v_1$ is not by assumption (a).
    Thus,
    by 
    (\romannumeral 1) eq. \eqref{align_AlgorithmForSinglePaths},
    (\romannumeral 2) assumption \ref{assumption_Initialization} and
    (\romannumeral 3) eq. \eqref{align_cEstimation_1} to \eqref{align_cEstimation_3},
    \begin{align*}
        \tilde{h}_{l}^{(j)}
        =&~
        \tilde{h}_0^{(1)}
        \\
        \overset{\text{(\romannumeral 1)}}{=}&~
        \max \{
        \tilde{h}_{0}^{(0)},
        \tilde{h}_{1}^{(0)} - w_{0 1}
        \}
        \\
        \overset{\text{(\romannumeral 2)}}{=}&~
        \max \{
        h_{v_r}^{*},
        c - w_{0 1}
        \}
        \\
        \overset{\text{(\romannumeral 3)}}{=}&~
        h_{v_r}^{*}
        = 
        \tilde{h}_{0}^{(0)} = \tilde{h}_{l}^{(0)}
    \end{align*}
    
    \noindent holds, which proves the induction base for $l = 0$.
    
    \begin{remark}
    \label{remark_Arguments}
        From now on, we will use the arguments 
        \\\noindent (\romannumeral 1), i.e., the usage of eq. \eqref{align_AlgorithmForSinglePaths},
        \\\noindent(\romannumeral 2), i.e., the usage of assumption \ref{assumption_Initialization},
        and
        \\\noindent(\romannumeral 3), i.e., the usage of at least one of the equations \eqref{align_cEstimation_1}, \eqref{align_cEstimation_2} or \eqref{align_cEstimation_3},
        multiple times.
        
        Moreover, for simplicity, we will not distinguish between the cases $1 \leq l \leq L-1$ and $l = L$ when making use of eq. \eqref{align_AlgorithmForSinglePaths} and use the second eq. for $\tilde{h}_l^{(j)}$ also for $l = L$ with a slight abuse of notation, i.e., while ignoring that $v_L$ does only have one neighbor.
        That will not change the results, just allows us to consider less edge case. One can think of replacing the path $(v_0,...,v_L)$ by a path $(v_0, ..., v_L, v_{L+1})$, where $v_L = v$ still holds and $v_{L+1}$ corresponds to any other node with no other neighbor than $v_L$ and in which we are not interested in.
    \end{remark}
    
    \noindent \textit{Case 2:}
    If $j = 1$ and $j < l$, 
    we obtain $l, l \pm 1 \geq 1$. Therefore, 
    the nodes $v_{l-1}, v_l$ and $v_{l+1}$ are no reservoir nodes by assumption (a). 
    Thus,
    \begin{align*}
        \tilde{h}_l^{(j)}
        =&~
        \tilde{h}_l^{(1)}
        \\
        \overset{\text{(\romannumeral 1)}}{=}&~
        \max \{
        \tilde{h}_l^{(0)},
        \tilde{h}_{l-1}^{(0)} - w_{l (l-1)},
        \tilde{h}_{l+1}^{(0)} - w_{l (l+1)}
        \}
        \\
        \overset{\text{(\romannumeral 2)}}{=}&~
        \max \{
        c,
        c - w_{l (l-1)},
        c - w_{l (l+1)}
        \}
        \\
        \overset{\text{(\romannumeral 3)}}{=}&~
        c
        =
        \tilde{h}_l^{(0)}
    \end{align*}
    
    \noindent holds, which proves the induction base for $l > j = 1$.
    \\
    \\\noindent \textit{Case 3:}
    If $j = 1$ and $j \geq l \neq 0$, 
    we obtain $l = 1$. Therefore, 
    the node $v_{l-1} = v_0 = v_r$ is a reservoir node while $v_l = v_1$ and $v_{(l+1)} = v_2$ is not by assumption (a). 
    Thus,
    \begin{align*}
        \tilde{h}_l^{(j)}
        =&~
        \tilde{h}_1^{(1)}
        \\
        \overset{\text{(\romannumeral 1)}}{=}&~
        \max \{
        \tilde{h}_1^{(0)},
        \tilde{h}_{0}^{(0)} - w_{1 0},
        \tilde{h}_{2}^{(0)} - w_{1 2}
        \}
        \\
        \overset{\text{(\romannumeral 2)}}{=}&~
        \max \{
        c,
        \tilde{h}_{v_r}^{*} - w_{1 0},
        c - w_{1 2}
        \}
        \\
        \overset{\text{(\romannumeral 3)}}{=}&~
        \tilde{h}_{v_r}^{*} - w_{1 0}
        =~
        \tilde{h}_{0}^{(0)} - w_{1 0}
        =~
        \tilde{h}_{l-1}^{(j-1)} - w_{l (l-1)}
    \end{align*}
    
    \noindent holds, which proves the induction base for $l \leq j = 1$.
    \\
    \\\textbf{Induction hypothesis:}
    We can assume that 
    \begin{align*}
        \tilde{h}_{\hat{l}}^{(\hat{\jmath})}
        =
        \begin{cases}
            \tilde{h}_{\hat{l}}^{(0)} 
            & \text{ if } \hat{\jmath} < \hat{l} \text{ or } \hat{l} = 0\\
            \tilde{h}_{\hat{l}-1}^{(\hat{l}-1)} - w_{\hat{l} (\hat{l}-1)} 
            & \text{ if } \hat{\jmath} \geq \hat{l} \neq 0 %\\
            %\tilde{h}_{\hat{l}}^{(\hat{l})} & \text{ if } \hat{\jmath} > \hat{l} \neq 0 \\
        \end{cases}
    \end{align*}
    
    \noindent holds for all $\hat{l} = 0,...,L$ and all $\hat{\jmath} = 1, ..., j-1$.
    \\
    \\\textbf{Induction step:} 
    We need to show that eq. \eqref{align_AlgorithmForSinglePaths_final} holds for all $l = 0,...,L$ and for a $j \in \mathbb{N}_{> 1}$, given that the induction hypothesis holds for all $\hat{l} = 0,...,L$ and for all $\hat{\jmath} = 1, ..., j-1$. Based on this hypothesis, we first of all prove the following in-lemma lemma:
    
    \begin{lemma}
    \label{lemma_ReservoirIsSource}
        In the setting of the induction step of lemma \ref{lemma_ConvergenceOfSinglePaths}, for all $l = 1,...,\min \{j,L\}$, we obtain
        \begin{align}
        \label{align_ReservoirIsSource}
            \tilde{h}_{l-1}^{(l-1)} - w_{l (l-1)}
            =
            h_{v_r}^* - \textstyle \sum_{\hat{l} = 1}^{l} w_{\hat{l} (\hat{l}-1)}.
        \end{align}
    \end{lemma}
    
    \begin{proof}
        We prove lemma \ref{lemma_ReservoirIsSource} by induction to $l \in \{1,...,\min \{j,L\}\}$.
        \\
        \\\textbf{Induction base:} 
        If $l = 1$, 
        \begin{align*}
            \tilde{h}_{l-1}^{(l-1)} - w_{l (l-1)}
            =&~
            \tilde{h}_{0}^{(0)} - w_{1 0}
            \\
            =&~
            \tilde{h}_{0}^{(0)} - \textstyle \sum_{\hat{l} = 1}^{1} w_{\hat{l} (\hat{l}-1)}
            \\
            \overset{\text{(\romannumeral 1)}}{=}&~
            h_{v_r}^* - \textstyle \sum_{\hat{l} = 1}^{l} w_{\hat{l} (\hat{l}-1)}
        \end{align*}
        
        \noindent surely holds. which proves the induction base.
        \\
        \\\textbf{Induction hypothesis:} 
        We can assume that
        \begin{align*}
            \tilde{h}_{\hat{l}-1}^{(\hat{l}-1)} - w_{\hat{l} (\hat{l}-1)}
            =
            h_{v_r}^* - \textstyle  \sum_{\hat{\hat{l}} = 1}^{\hat{l}} w_{\hat{\hat{l}} (\hat{\hat{l}}-1)}
        \end{align*}
        
        \noindent holds for all $\hat{l} = 1,...,l-1 ~(\leq j - 1)$.
        \\
        \\\textbf{Induction step:}
        We need to show that eq. \eqref{align_ReservoirIsSource} holds for a $l \in \{2,...,\min \{j,L\}\}$, given that the induction hypothesis holds for all $\hat{l} = 1,...,l-1$.
        
        As $2 \leq l \leq \min \{j,L\}$ holds by choice of $l$, we can choose $\hat{\jmath} = l - 1 \leq j - 1$ and $\hat{l} = l-1 \leq L -1 < L$ in the induction hypothesis of lemma \ref{lemma_ConvergenceOfSinglePaths}. Then, as $\hat{\jmath} = l - 1 \geq l - 1 = \hat{l} \geq 2 - 1 \neq 0$, (\romannumeral 4) by this hypothesis, we obtain
        \begin{align*}
            \tilde{h}_{l-1}^{(l-1)}
            =&~
            \tilde{h}_{\hat{l}}^{(\hat{\jmath})}
            \\
            \overset{\text{(\romannumeral 4)}}{=}&~
            \tilde{h}_{\hat{l}-1}^{(\hat{l}-1)} - w_{\hat{l} (\hat{l}-1)}
            \\
            =&~
            \tilde{h}_{l-2}^{(l-2)} - w_{(l-1) (l-2)}.
        \end{align*}
        
        \noindent Moreover, we can choose $\hat{l} = l-1 \leq l-1$ in the induction hypothesis of this lemma \ref{lemma_ReservoirIsSource}. Then, by (\romannumeral 5) this hypothesis, we obtain
        \begin{align*}
            \tilde{h}_{l-2}^{(l-2)} - w_{(l-1) (l-2)}
            =&~
            \tilde{h}_{\hat{l}-1}^{(\hat{l}-1)} - w_{\hat{l} (\hat{l}-1)}
            \\
            \overset{\text{(\romannumeral 5)}}{=}&~
            h_{v_r}^* - \textstyle \sum_{\hat{\hat{l}} = 1}^{\hat{l}} w_{\hat{\hat{l}} (\hat{\hat{l}}-1)}
            \\
            =&~
            h_{v_r}^* - \textstyle \sum_{\hat{l} = 1}^{l - 1} w_{\hat{l} (\hat{l}-1)}.
        \end{align*}
        
        \noindent Bringing both results together, 
        \begin{align*}
            \tilde{h}_{l-1}^{(l-1)} - w_{l (l-1)}
            =&~
            \tilde{h}_{l-2}^{(l-2)} - w_{(l-1) (l-2)} - w_{l (l-1)}
            \\
            =&~
            h_{v_r}^* 
            - \textstyle \sum_{\hat{l} = 1}^{l - 1} w_{\hat{l} (\hat{l}-1)} 
            - w_{l (l-1)}
            \\
            =&~
            h_{v_r}^* - \textstyle \sum_{\hat{l} = 1}^{l} w_{\hat{l} (\hat{l}-1)},
        \end{align*}
        
        \noindent holds, which proves the induction step.
    \end{proof}
    
    \noindent Finally, we can proceed with the induction step of lemma \ref{lemma_ConvergenceOfSinglePaths}. 
    For any $l \in \{0,...,L\}$ and any $j \in \mathbb{N}_{> 1}$, 
    by 
    (\romannumeral 4) the induction hypothesis of this lemma \ref{lemma_ConvergenceOfSinglePaths} with different choices of $\hat{l}$ and $\hat{\jmath}$ 
    and
    (\romannumeral 6) lemma \ref{lemma_ReservoirIsSource}\footnote{
        We leave it as an exercise to the reader to check if the condition $l \leq \min\{j,L\}$ required in lemma \ref{lemma_ReservoirIsSource} is satisfied by choice of $l \in\{0,...,L\}$ and its relation to $j$ in each three cases where the lemma is applied.
    },
    we obtain the following three observations for the three components of eq. \eqref{align_AlgorithmForSinglePaths}\footnote{
        Note that -- as already discussed in remark \ref{remark_Arguments} -- we technically had to distinguish between the cases $l = 0, l \in \{1,...,L-1\}$ and $l = L$ when considering eq. \eqref{align_AlgorithmForSinglePaths}, but for simplicity, we do not and remind the reader that for the edge cases, $l \pm 1$ might not be well-defined, but is also not needed. 
    }:
    \begin{align}
    \label{align_Component1}
    \begin{split}
        \tilde{h}_l^{(j-1)}
        \overset{\text{(\romannumeral 4)}}{=}&~
        \begin{cases}
            \tilde{h}_{l}^{(0)} 
            & \text{ if } j-1 < l \text{ or } l = 0\\
            \tilde{h}_{l-1}^{(l-1)} - w_{l (l-1)} 
            & \text{ if } j-1 \geq l \neq 0
        \end{cases}
        \\
        \overset{\text{(\romannumeral 6)}}{=}&~
        \begin{cases}
            \tilde{h}_{l}^{(0)} 
            & \text{ if } j-1 < l \text{ or } l = 0\\
            h_{v_r}^* - \sum_{\hat{l} = 1}^{l} w_{\hat{l} (\hat{l}-1)}
            & \text{ if } j-1 \geq l \neq 0
        \end{cases}
    \end{split}
    \end{align}
    
    \noindent (for $\hat{l} = l$ and $\hat{\jmath} = j-1 \leq j-1$),
    \begin{align}
    \label{align_Component2}
    \begin{split}
        \tilde{h}_{l-1}^{(j-1)}
        \overset{\text{(\romannumeral 4)}}{=}&~
        \begin{cases}
            \tilde{h}_{l-1}^{(0)} 
            & \text{ if } j-1 < l-1 \\
            & {\color{gray} \text{ or } l-1 = 0}\\
            \tilde{h}_{l-2}^{(l-2)} - w_{(l-1) (l-2)} 
            & \text{ if } j-1 \geq l-1 \neq 0
        \end{cases}
        \\
        \overset{\text{(\romannumeral 6)}}{=}&~
        \begin{cases}
            \tilde{h}_{l-1}^{(0)} 
            & \text{ if } j-1 < l-1 \\
            h_{v_r}^* - \sum_{\hat{l} = 1}^{l-1} w_{\hat{l} (\hat{l}-1)}
            & \text{ if } j-1 \geq l-1 \neq 0
        \end{cases}
    \end{split}
    \end{align}
    
    \noindent (for $\hat{l} = l-1$ and $\hat{\jmath} = j-1 \leq j-1$) and
    \begin{align}
    \label{align_Component3}
    \begin{split}
        \tilde{h}_{l+1}^{(j-1)}
        \overset{\text{(\romannumeral 4)}}{=}&~
        \begin{cases}
            \tilde{h}_{l+1}^{(0)} 
            & \text{ if } j-1 < l+1 \\
            & {\color{gray} \text{ or } l+1 = 0}\\
            \tilde{h}_{l}^{(l)} - w_{(l+1) l} 
            & \text{ if } j-1 \geq l+1 \neq 0
        \end{cases}
        \\
        \overset{\text{(\romannumeral 6)}}{=}&~
        \begin{cases}
            \tilde{h}_{l+1}^{(0)} 
            & \text{ if } j-1 < l+1 \\
            h_{v_r}^* - \sum_{\hat{l} = 1}^{l+1} w_{\hat{l} (\hat{l}-1)} 
            & \text{ if } j-1 \geq l+1 \neq 0
        \end{cases}
    \end{split}
    \end{align}
    
    \noindent (for $\hat{l} = l+1$ and $\hat{\jmath} = j-1 \leq j-1$). 
    
    Therefore, when considering $\tilde{h}_l^{(j)}$ for any $l \in \{0,...,L\}$ and any $j \in \mathbb{N}_{> 1}$, the cases $l = 0$, $j < l$ and $j \geq l \neq 0$ in eq. \eqref{align_AlgorithmForSinglePaths_final} need to be split in five cases:\footnote{
        Note that the case $l + 1 = 0$ will never occur as $l + 1 \geq 0 + 1 = 1$ holds. Moreover, the case $l - 1 = 0$ leads to the case $l = 1$. Then, the case $j < l = 1$, i.e., $j \leq 0$, never occurs. Thus, $j \geq l = 1$ must hold, which corresponds to all cases (considered by the cases 1 to 5).
    }
    
    \begin{enumerate}
        \item Case 1: $l = 0$,
        \item Case 2: $j - 1 < l - 1$ (i.e., $j < l$),
        \item Case 3: $l - 1 \leq j - 1 < l$ (i.e., $j = l $),
        \item Case 4: $l \leq j - 1 < l + 1$ (i.e., $j = l + 1$),
        \item Case 5: $l + 1 \leq j - 1$ (i.e., $j > l + 1$).
    \end{enumerate}
    
    \noindent In all of these cases, next to the arguments of remark \ref{remark_Arguments}, we will use
    \\\noindent (\romannumeral 4) the induction hypothesis of this lemma \ref{lemma_ConvergenceOfSinglePaths} again, where we define the variables $\hat{l} \in \{0,...L\}$ and $\hat{\jmath} \in \{1,...,j-1\}$ beforehand, 
    or, 
    \\\noindent (\romannumeral 6) the results in eq. \eqref{align_Component1} to \eqref{align_Component3} directly
    and multiple times.
    \\
    \\\noindent \textit{Case 1:} If $l = 0$, 
    the node $v_l = v_0 = v_r$ is a reservoir node while $v_{l+1} = v_1$ is not by assumption (a). 
    Thus, using $\hat{\jmath} := j - 1 \geq 2 - 1 = 1$ in (\romannumeral 4),  
    \begin{align*}
        \tilde{h}_{l}^{(j)}
        =&~
        \tilde{h}_0^{(j)}
        \\
        \overset{\text{(\romannumeral 1)}}{=}&~
        \max \{
        \tilde{h}_{0}^{(j-1)},
        \tilde{h}_{1}^{(j-1)} - w_{0 1}
        \}
        \\
        \overset{\text{(\romannumeral 4)}}{=}&~
        \max \{
        \tilde{h}_{0}^{(0)},
        \tilde{h}_{0}^{(0)} - w_{0 1}
        \}
        \\
        \overset{\text{(\romannumeral 2)}}{=}&~
        \max \{
        h_{v_r}^{*},
        c - w_{0 1}
        \}
        \\
        \overset{\text{(\romannumeral 3)}}{=}&~
        h_{v_r}^{*}
        = 
        \tilde{h}_{0}^{(0)} = \tilde{h}_{l}^{(0)}
    \end{align*}
    
    \noindent holds, which proves the induction step for $l = 0$.
    \\
    \\\textit{Case 2:} If $2 \leq j < l$, 
    we obtain $l, l \pm 1 \geq 2$. Therefore, 
    the nodes $v_{l-1}, v_l$ and $v_{l+1}$ are no reservoir nodes by assumption (a). 
    Thus, using
    eq. \eqref{align_Component1} with $j-1 < j < l$,
    eq. \eqref{align_Component2} with $j-1 < l-1$ and
    eq. \eqref{align_Component3} with $j-1 < l-1 < l+1$ 
    in (\romannumeral 6),
    \begin{align*}
        \tilde{h}_l^{(j)}
        \overset{\text{(\romannumeral 1)}}{=}&~
        \max \{
        \tilde{h}_l^{(j-1)},
        \tilde{h}_{l-1}^{(j-1)} - w_{l (l-1)},
        \tilde{h}_{l+1}^{(j-1)} - w_{l (l+1)}
        \}
        \\
        \overset{\text{(\romannumeral 6)}}{=}&~
        \max \{
        \tilde{h}_l^{(0)},
        \tilde{h}_{l-1}^{(0)} - w_{l (l-1)},
        \tilde{h}_{l+1}^{(0)} - w_{l (l+1)}
        \}
        \\
        \overset{\text{(\romannumeral 2)}}{=}&~
        \max \{
        c,
        c - w_{l (l-1)},
        c - w_{l (l+1)}
        \}
        \\
        \overset{\text{(\romannumeral 3)}}{=}&~
        c
        =
        \tilde{h}_l^{(0)}
    \end{align*}
    
    \noindent holds, which proves the induction step for $l > j$.
    \\
    \\\textit{Case 3:} If $2 \leq j = l$, 
    we obtain $l, l + 1 \geq 1$. Therefore,
    the nodes $v_l$ and $v_{l+1}$ are no reservoir nodes by assumption (a). 
    Thus, using 
    eq. \eqref{align_Component1} with $j-1 < j = l$,
    eq. \eqref{align_Component2} with $j-1 = l-1$ and
    eq. \eqref{align_Component3} with $j-1 < j = l < l+1$
    in (\romannumeral 6),
    \begin{align*}
        \tilde{h}_l^{(j)}
        \overset{\text{(\romannumeral 1)}}{=}~
        \max \{
        &\tilde{h}_l^{(j-1)},
        \tilde{h}_{l-1}^{(j-1)} - w_{l (l-1)},
        \tilde{h}_{l+1}^{(j-1)} - w_{l (l+1)}
        \}
        \\
        \overset{\text{(\romannumeral 4)}}{=}~
        \max \{
        &\tilde{h}_l^{(0)},\\
        &h_{v_r}^* - \textstyle \sum_{\hat{l} = 1}^{l-1} w_{\hat{l} (\hat{l}-1)} - w_{l (l-1)},\\
        &\tilde{h}_{l+1}^{(0)} - w_{l (l+1)}
        \}
        \\
        \overset{\text{(\romannumeral 2)}}{=}~
        \max \{
        &c,
        h_{v_r}^* - \textstyle \sum_{\hat{l} = 1}^{l} w_{\hat{l} (\hat{l}-1)},
        c - w_{l (l+1)}
        \}
        \\
        \overset{\text{(\romannumeral 3)}}{=}~
        {\color{white} \max \{}
        &h_{v_r}^* - \textstyle \sum_{\hat{l} = 1}^{l} w_{\hat{l} (\hat{l}-1)}
        =
        \tilde{h}_{l-1}^{(j-1)} - w_{l (l-1)}
    \end{align*}
    
    \noindent holds, which proves the first of the three parts of the induction step for $l \leq j$.
    \\
    \\\textit{Case 4:} If $2 \leq j = l + 1$, 
    we obtain $l + 1 \geq 1$. Therefore,
    the node $v_{l+1}$ is no reservoir nodes by assumption (a). 
    Thus, using 
    eq. \eqref{align_Component1} with $j-1 = l$,
    eq. \eqref{align_Component2} with $j-1 = l \geq l-1$ and
    eq. \eqref{align_Component3} with $j-1 = l < l+1$
    in (\romannumeral 6),
    \begin{align*}
        \tilde{h}_l^{(j)}
        \overset{\text{(\romannumeral 1)}}{=}~
        \max \{
        &\tilde{h}_l^{(j-1)},
        \tilde{h}_{l-1}^{(j-1)} - w_{l (l-1)},
        \tilde{h}_{l+1}^{(j-1)} - w_{l (l+1)}
        \}
        \\
        \overset{\text{(\romannumeral 4)}}{=}~
        \max \{
        &h_{v_r}^* - \textstyle \sum_{\hat{l} = 1}^{l} w_{\hat{l} (\hat{l}-1)},\\
        &h_{v_r}^* - \textstyle \sum_{\hat{l} = 1}^{l-1} w_{\hat{l} (\hat{l}-1)} - w_{l (l-1)},\\
        &\tilde{h}_{l+1}^{(0)} - w_{l (l+1)}
        \}
        \\
        \overset{\text{(\romannumeral 2)}}{=}~
        \max \{
        &h_{v_r}^* - \textstyle \sum_{\hat{l} = 1}^{l} w_{\hat{l} (\hat{l}-1)},\\
        &h_{v_r}^* - \textstyle \sum_{\hat{l} = 1}^{l} w_{\hat{l} (\hat{l}-1)},\\
        &c - w_{l (l+1)}
        \}
        \\
        \overset{\text{(\romannumeral 3)}}{=}~
        {\color{white} \max \{}
        &h_{v_r}^* - \textstyle \sum_{\hat{l} = 1}^{l} w_{\hat{l} (\hat{l}-1)}
        =
        \tilde{h}_{l-1}^{(j-1)} - w_{l (l-1)}
    \end{align*}
    
    \noindent holds, which proves the second of the three parts of the induction step for $l \leq j$.
    \\
    \\\textit{Case 5:} If $j > l + 1$, 
    using 
    eq. \eqref{align_Component1} with $j-1 \geq l$,
    eq. \eqref{align_Component2} with $j-1 > l \geq l-1$ and
    eq. \eqref{align_Component3} with $j-1 \geq l+1$
    in (\romannumeral 6),
    \begin{align*}
        \tilde{h}_l^{(j)}
        \overset{\text{(\romannumeral 1)}}{=}~
        \max \{
        &\tilde{h}_l^{(j-1)},
        \tilde{h}_{l-1}^{(j-1)} - w_{l (l-1)},
        \tilde{h}_{l+1}^{(j-1)} - w_{l (l+1)}
        \}
        \\
        \overset{\text{(\romannumeral 4)}}{=}~
        \max \{
        &h_{v_r}^* - \textstyle \sum_{\hat{l} = 1}^{l} w_{\hat{l} (\hat{l}-1)},\\
        &h_{v_r}^* - \textstyle \sum_{\hat{l} = 1}^{l-1} w_{\hat{l} (\hat{l}-1)} - w_{l (l-1)},\\
        &h_{v_r}^* - \textstyle \sum_{\hat{l} = 1}^{l+1} w_{\hat{l} (\hat{l}-1)}  - w_{l (l+1)}
        \}
        \\
        =~
        \max \{
        &h_{v_r}^* - \textstyle \sum_{\hat{l} = 1}^{l} w_{\hat{l} (\hat{l}-1)},\\
        &h_{v_r}^* - \textstyle \sum_{\hat{l} = 1}^{l} w_{\hat{l} (\hat{l}-1)},\\
        &h_{v_r}^* - \textstyle \sum_{\hat{l} = 1}^{l} w_{\hat{l} (\hat{l}-1)} - w_{(l+1) l} - w_{l (l+1)}
        \}
        \\
        \overset{\text{(\romannumeral 3)}}{=}~
        {\color{white} \max \{}
        &h_{v_r}^* - \textstyle \sum_{\hat{l} = 1}^{l} w_{\hat{l} (\hat{l}-1)}
        =
        \tilde{h}_{l-1}^{(j-1)} - w_{l (l-1)}
    \end{align*}
    
    \noindent holds, which proves the third of the three parts of the induction step for $l \leq j$.
    \\
    \\By this, the induction step is proved for $l \leq j$, and by that, the overall proof is completed.
\end{proof}

\begin{corollary}
\label{corollay_ConvergenceOfSinglePaths}
    In the setting of lemma \ref{lemma_ConvergenceOfSinglePaths}, we can conclude that
    \begin{align}
        \tilde{h}_{v_l}^{(j)}
        =
        \begin{cases}
            \tilde{h}_{v_{l}}^{(0)} 
            & \text{ if } j < l \text{ or } l = 0\\
            h_{v_r}^* - \textstyle \sum_{\hat{l} = 1}^{l} w_{\hat{l} (\hat{l}-1)}
            & \text{ if } j \geq l \neq 0 
        \end{cases}
    \end{align}
    
    \noindent holds for all $l = 0,...,L$ and all $j \in \mathbb{N}$.
\end{corollary}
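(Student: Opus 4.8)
The plan is to obtain the corollary immediately by composing Lemma~\ref{lemma_ConvergenceOfSinglePaths} with the in-lemma Lemma~\ref{lemma_ReservoirIsSource}; essentially no new computation is needed, only a substitution together with a check of index ranges.

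First I would dispose of the easy branch. Fix $l \in \{0,\dots,L\}$ and $j \in \mathbb{N}$. If $j < l$ or $l = 0$, then the claimed identity $\tilde{h}_{v_l}^{(j)} = \tilde{h}_{v_l}^{(0)}$ is literally the first case of eq.~\eqref{align_AlgorithmForSinglePaths_final} in Lemma~\ref{lemma_ConvergenceOfSinglePaths}, so there is nothing to prove.

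Next I would treat the remaining branch $j \geq l \neq 0$. By the second case of eq.~\eqref{align_AlgorithmForSinglePaths_final}, we have $\tilde{h}_{v_l}^{(j)} = \tilde{h}_{v_{l-1}}^{(l-1)} - w_{e_{v_l v_{l-1}}}$. To rewrite the right-hand side I want to invoke Lemma~\ref{lemma_ReservoirIsSource}, which is stated for indices in $\{1,\dots,\min\{j,L\}\}$; here the relevant index is $l$, and since $l \neq 0$ and $l \in \{0,\dots,L\}$ we have $1 \le l \le L$, while the case hypothesis gives $l \le j$, hence $l \le \min\{j,L\}$, so the lemma applies. It yields $\tilde{h}_{l-1}^{(l-1)} - w_{l(l-1)} = h_{v_r}^* - \sum_{\hat{l}=1}^{l} w_{\hat{l}(\hat{l}-1)}$, which in the abbreviated path notation ($\tilde{h}_{l-1}^{(l-1)} = \tilde{h}_{v_{l-1}}^{(l-1)}$, $w_{l(l-1)} = w_{e_{v_l v_{l-1}}}$) is exactly $\tilde{h}_{v_{l-1}}^{(l-1)} - w_{e_{v_l v_{l-1}}} = h_{v_r}^* - \sum_{\hat{l}=1}^{l} w_{\hat{l}(\hat{l}-1)}$. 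Chaining the two displayed equalities gives the second case of the asserted formula, and the proof is complete.

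The only thing requiring attention is the bookkeeping: confirming $l \le \min\{j,L\}$ so that Lemma~\ref{lemma_ReservoirIsSource} is legitimately invoked, and matching the shorthand $\tilde{h}_l^{(j)} = \tilde{h}_{v_l}^{(j)}$ and $w_{l(l-1)} = w_{e_{v_l v_{l-1}}}$ used inside the lemmata with the notation in the corollary statement. I do not expect any substantive obstacle beyond this routine reconciliation of notation.
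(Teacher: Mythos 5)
Your proof is correct and follows essentially the same route as the paper: the first branch is literally the first case of eq.~\eqref{align_AlgorithmForSinglePaths_final}, and the second branch is obtained by substituting the telescoping identity of Lemma~\ref{lemma_ReservoirIsSource} into the second case of Lemma~\ref{lemma_ConvergenceOfSinglePaths}. Your one (welcome) simplification is the observation that in the only case where Lemma~\ref{lemma_ReservoirIsSource} is invoked one has $j \geq l$ and hence $l \leq \min\{j,L\}$ automatically, so the re-run of its induction that the paper performs to extend its range to all $l = 1,\dots,L$ is not actually needed.
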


\begin{proof}
    The proof is analogously to the proof of lemma \ref{lemma_ReservoirIsSource}. However, in the induction step of the proof of lemma \ref{lemma_ReservoirIsSource}, we make use of the induction hypothesis of lemma \ref{lemma_ConvergenceOfSinglePaths}.
    Now, where we know that lemma \ref{lemma_ConvergenceOfSinglePaths} holds, we do not need to use its induction hypothesis (which bounds its statement by $\hat{\jmath} = l-1 \leq j-1 $), but can use lemma \ref{lemma_ConvergenceOfSinglePaths} itself in the induction step in the proof of this lemma \ref{corollay_ConvergenceOfSinglePaths}. Then, the statement of lemma  \ref{lemma_ReservoirIsSource} holds for all $l = 1, ..., L$ (instead for all $l = 1,..., \min\{j,L\}$).
\end{proof}

\begin{remark}[Intuition of lemma \ref{lemma_ConvergenceOfSinglePaths}]
    Lemma \ref{lemma_ConvergenceOfSinglePaths} or more precisely, corollary \ref{corollay_ConvergenceOfSinglePaths}, gives us the main idea of the physics-informed algorithm (cf. eq. \eqref{align_Algorithm_nested}): Starting from a \textit{reservoir} node $v_r \in V_r$ where the \textit{true} head $h_{v_r}^*$ is known, we iteratively transfer information, i.e., according to the water hydraulics correct head values, to the next node. Hereby, such information reaches a node $v \in V$ with distance $L \in \Le(v)$, i.e., length $L = L(p)$ of some path $p \in \Pa(v_r,v)$ from $v_r$ to $v$, at the $j = L$-th iteration. Therefore, quite intuitively, if we are considering multiple paths and reservoirs, the last iteration where this $v$ can receive new information corresponds to the largest distance $L_{\max}(v)$ from any reservoir $v_r$ to this node $v$. Considering \textit{all} nodes $v \in V$ simultaneously, therefore, the last iteration where \textit{any} $v \in V$ can receive new information corresponds to the largest distance $L_{\max}$ from any reservoir node $v_r \in V_r$ to any node $v \in V$.
    
    However, what we have not considered so far is the fact that paths along which such information can be passed can intersect and hold other reservoir nodes. However, this only leads to the fact that the update rule given in eq. \eqref{align_AlgorithmForSinglePaths_final} changes, but not the iteration at which some node receives new information. We take care of scenarios like these in the next corollary. 
\end{remark}

\begin{corollary}
\label{corollary_UpdatesOfGeneralPaths}
    Let assumption \ref{assumption_Initialization} hold.
    Then for $j \in \mathbb{N}$ and $v \in V$, 
    if $\tilde{h}_v^{(j)} \neq \tilde{h}_v^{(j-1)}$ holds, $j \in \Le(v)$ must hold. 
\end{corollary}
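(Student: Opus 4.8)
The plan is to reformulate the iteration \eqref{align_Algorithm_nested} as a Bellman--Ford-type optimisation over walks, and then read the claim off the structure of the optimiser. For a walk $W = (v_0, v_1, \dots, v_k)$ in the WDS graph (a vertex sequence with $v_l \in \N(v_{l-1})$ for $l = 1, \dots, k$, repeated vertices allowed), define its weight
\[
    \Phi(W) := \tilde h_{v_0}^{(0)} - \textstyle\sum_{l=1}^{k} w_{e_{v_l v_{l-1}}},
\]
with the convention $\Phi((v)) = \tilde h_v^{(0)}$ for the trivial walk. First I would prove, by induction on $j$, that for every $v \in V$
\[
    \tilde h_v^{(j)} = \max\{\, \Phi(W) \;:\; W \text{ a walk of length } \le j \text{ ending at } v \,\}.
\]
The base case $j = 0$ is Assumption \ref{assumption_Initialization}; for the step, a walk of length $\le j$ ending at $v$ is either the trivial walk or the one-step extension $u \to v$ (costing $w_{e_{vu}}$) of a walk of length $\le j-1$ ending at a neighbour $u \in \N(v)$, so matching this case split against the $\max$ in \eqref{align_Algorithm_nested} and substituting the induction hypothesis at each $u$ and at $v$ closes the induction. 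I expect this identity to be the main (though essentially routine) technical step; the only thing to be careful with is that the trivial walk is absorbed into the $\tilde h_v^{(j-1)}$ term rather than appearing separately.

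Given the identity, the rest is short. Monotonicity is immediate: walks admissible at step $j-1$ are admissible at step $j$, so $j \mapsto \tilde h_v^{(j)}$ is nondecreasing, and hence the hypothesis $\tilde h_v^{(j)} \ne \tilde h_v^{(j-1)}$ upgrades to $\tilde h_v^{(j)} > \tilde h_v^{(j-1)}$. Let $W = (v_0, \dots, v_k)$ realise the maximum at $(j,v)$. If $k \le j-1$ then $W$ is already admissible at step $j-1$, forcing $\tilde h_v^{(j)} = \Phi(W) \le \tilde h_v^{(j-1)}$, a contradiction; hence $k = j$.

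Next I would show $W$ can be taken to be a path issuing from $V_r$. If some vertex of $W$ repeated, excising the cycle between the two occurrences produces a strictly shorter walk $W'$ ending at $v$ with $\Phi(W') \ge \Phi(W)$ (we only delete edges, each contributing a nonnegative $-w$-term), hence with length $\le j-1$ and $\Phi(W') \le \tilde h_v^{(j-1)} < \Phi(W)$ --- impossible; so $W$ has distinct vertices. If $v_0 \notin V_r$ then $\tilde h_{v_0}^{(0)} = c$ and so $\Phi(W) \le c$, whereas $\tilde h_v^{(j)} > \tilde h_v^{(0)} \ge c$ because $\tilde h_v^{(0)} \in \{h_v^*, c\}$ and $h_v^* \ge \min_{v_r \in V_r} h_{v_r}^* \ge c$ --- again impossible. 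Therefore $W$ is a genuine path of length exactly $j$ from a reservoir node to $v$, i.e.\ $j = L(W) \in \Le(v)$, as required. (If $v$ is not reachable from $V_r$ at all, every neighbour message into $v$ only ever propagates the constant $c$, so $\tilde h_v^{(j)} = c$ for all $j$ and the hypothesis is vacuous; connectivity of the WDS is therefore not needed.)

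The genuine obstacle is purely the set-up and bookkeeping of the walk identity --- once it is in place, the three remaining claims (length $= j$, acyclicity, starting in $V_r$) are one-line contradiction arguments using only $w_e \ge 0$ and the specific value of the constant $c$ in Assumption \ref{assumption_Initialization}.
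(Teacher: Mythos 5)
Your proof is correct, and it takes a genuinely different route from the paper's. The paper derives the corollary by first proving a closed form for the iterates along a single non-intersected path issuing from a reservoir (a nested double induction, Lemma \ref{lemma_ConvergenceOfSinglePaths} and Lemma \ref{lemma_ReservoirIsSource}), and then argues informally that intersections and additional reservoirs only change \emph{which} value a node adopts, not \emph{when} it can change. Your Bellman--Ford characterisation $\tilde h_v^{(j)} = \max\{\Phi(W) : W \text{ a walk of length} \le j \text{ ending at } v\}$ replaces all of that with a single clean induction on $j$ that is valid for arbitrary topology from the outset, after which the three structural facts about an optimiser (length exactly $j$, no repeated vertices via cycle excision using $w_e \ge 0$, start in $V_r$ via the choice of $c$) each fall out by a one-line contradiction against monotonicity. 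This is more self-contained and more rigorous than the paper's treatment of intersecting paths, and as you note it also dispenses with any connectivity assumption; moreover, your walk identity immediately re-proves Theorem \ref{theorem_ConvergenceOfAlgorithm_appendix} (no walk of length $> L_{\max}$ can beat its excised, reservoir-rooted path), so it could in principle replace most of Appendix \ref{subsection_ProofTheorem3.2}. What the paper's route buys instead is the explicit telescoping value $h_{v_r}^* - \sum_{\hat l} w_{\hat l(\hat l -1)}$ of the propagated heads (Corollary \ref{corollay_ConvergenceOfSinglePaths}), which is reused later in the proof of Theorem \ref{theorem_MotivationAlgorithm_appendix}; your $\Phi(W)$ encodes the same quantity, so nothing is lost.
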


\begin{proof}
    Observing the proof of lemma \ref{lemma_ConvergenceOfSinglePaths} carefully shows that the assumption of the considered path not to be intersected and that the nodes $v_1,...,v_L \in V \setminus V_r$ are no reservoir nodes is only used to prove the explicit form (eq. \eqref{align_AlgorithmForSinglePaths_final}) of $\tilde{h}_{v_l}^{(j)}$ of the node $v_l \in V$ in relation to its neighbor $v_{l-1} \in V$, but is not related to the iteration $j \in \mathbb{N}$:

    More precisely, if the path $p_1 := p = (v_0,...,v_L) := (v_{10}, ..., v_{1l_1}, ..., v_{1L_1})$ is intersected by some other path $p_2 = (v_{20}, ..., v_{2l_2}, ..., v_{2L_2})$ at some node $v = v_{1l_1} = v_{2l_2} \in V$, the paths $\hat{p_1} := (v_{10}, ..., v_{1 l_1})$ and $\hat{p_2} := (v_{20}, ..., v_{2 l_2})$ can be considered as paths in the setting of lemma \ref{lemma_ConvergenceOfSinglePaths}. By this, or more precisely, by corollary \ref{corollay_ConvergenceOfSinglePaths}, $v$ receives information from the reservoir $v_{10}$ at iteration $j = l_1$ and information from the reservoir $v_{20}$ at iteration $j = l_2$. 
    
    \begin{itemize}
        \item If $l_1 = l_2$, at iteration $j = l_1 = l_2$, the node $v$ is updated with the larger of both information coming from the reservoir $v_{10}$ and $v_{20}$, respectively.
        
        \item If $l_1 \neq l_2$, w.l.o.g., we can assume that $l_1 < l_2$ holds. 
        If the information that the node $v$ receives at iteration $l_1$ is larger than the information it receives at iteration $l_2$, it is updated at iteration $l_1$ but not  at iteration $l_2$ anymore. 
        If the information that the node $v$ receives at iteration $l_1$ is smaller than the information it receives at iteration $l_2$, it is updated at iteration $l_1$ \textit{and} $l_2$.
        
        \item Similarly, if $v \in V_r \subset V$, i.e., if $v$ is already a reservoir node, it might be neither updated at iteration $l_1$ \textit{nor} at $l_2$, as the true reservoir head is already larger than the information received by the other reservoirs $v_{10}$ and $v_{20}$.
        
        \item All in all, the node $v$ is only updated at iteration $l_1$ and / or $l_2$, but does not \textit{have to}.
    \end{itemize}
    
    \noindent This scenario can be easily generalized to not only two, but even more paths; more precisely, to all paths $p \in \Pa(v)$ that lead from some reservoir $v_r \in V_r$ to some node $v \in V$. 
    
    To conclude, the value $\tilde{h}_{v}^{(j)}$ is only updated if there exists a path $p \in \Pa(v)$, such that $j = L(p)$ holds. In other words, if $\tilde{h}_{v}^{(j)} \neq \tilde{h}_{v_l}^{(j-1)}$ holds, then $j$ must have been in $j \in \Le(v)$.
\end{proof}

\begin{proof}[Proof of theorem \ref{theorem_ConvergenceOfAlgorithm_appendix}]
    We need to show that $\tilde{h}_v^{(j)}  = \tilde{h}_v^{(L_{\max})}$ holds for all $j \geq L_{\max}$ and all $v \in V$.
    By corollary \ref{corollary_UpdatesOfGeneralPaths}, for any $j \in \mathbb{N}$ and $v \in V$, if $j \not \in \Le(v)$, $\tilde{h}_v^{(j)}  = \tilde{h}_v^{(j-1)}$ holds. By definition of $\Le(v)$ as the set of all path lengths from any $v_r \in V_r$ to $v \in V$ and $L_{\max}$ as the longest path length from any $v_r \in V_r$ to $v \in V$, $j \not \in \Le(v)$ must hold for all $j > L_{\max}$ and all $v \in V$. This proves the claim (starting with $j = L_{\max}$ and then using induction to $j \in \mathbb{N}_{> L_{\max}}$, which we leave as an exercise to the reader).
\end{proof}

\subsection{Proof of Theorem 3.3}
\label{subsection_ProofTheorem3.3}

\begin{theorem}
\label{theorem_MotivationAlgorithm_appendix}
    If in the setting of theorem \ref{theorem_ConvergenceOfAlgorithm_appendix}, $V_r$ corresponds to the reservoirs with known heads $(h_v^*)_{v \in V_r}$ and $\mathbf{\hat{q}} = \mathbf{q}^*$ corresponds to the \textit{true} flows, then $\mathbf{\tilde{q}} = \mathbf{\hat{q}} + \zeta$ holds.
\end{theorem}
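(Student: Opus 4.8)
The plan is to show that under the stated hypotheses the physics-informed algorithm converges to the \emph{true} heads, $\mathbf{\tilde h} = \mathbf h^*$, and then to read off the claim from the flow-reconstruction formula \eqref{align_ComputeFlowsAndDemands} together with the head-loss law \eqref{align_HeadLoss}. \textbf{Step 1 (rewrite the edge weights).} Since $\hat q_e = q_e^*$ and the true heads satisfy \eqref{align_HeadLoss}, for every directed edge $e = e_{vu}$ we have $r_e\,\sgn(\hat q_e)|\hat q_e|^x = h_v^* - h_u^*$, hence $w_e = \mathrm{ReLU}(-(h_v^* - h_u^*)) = \max\{0,\, h_u^* - h_v^*\}$. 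This is the only place the hypothesis "$\hat q = q^*$" is used, and it turns the abstract weights into genuine head drops.

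\textbf{Step 2 (upper bound $\tilde h_v^{(j)} \le h_v^*$).} I would prove this by induction on $j$. For $j=0$, reservoir nodes satisfy $\tilde h_v^{(0)} = h_v^*$, and for $v \in V\setminus V_r$ one has $\tilde h_v^{(0)} = c \le h_v^*$: taking any path $v_r = v_0,\dots,v_L = v$ from a reservoir (of length $L\le L_{\max}$), $h_v^* \ge h_{v_r}^* - \sum_l |h_{v_{l-1}}^* - h_{v_l}^*| \ge h_{v_r}^* - L_{\max}\max_e w_e \ge c$. For the step, using Step 1, $\tilde h_u^{(j)} - w_{e_{vu}} \le h_u^* - \max\{0, h_u^* - h_v^*\} = \min\{h_u^*, h_v^*\} \le h_v^*$, so the maximum in \eqref{align_Algorithm_nested} defining $\tilde h_v^{(j+1)}$ stays $\le h_v^*$. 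In particular reservoir nodes are frozen, $\tilde h_{v_r}^{(j)} = h_{v_r}^*$ for all $j$, since the iterates are also monotone non-decreasing in $j$.

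\textbf{Step 3 (matching lower bound via a monotone-head path).} The key lemma to establish is: for every $v\in V$ there is a path $v_r = v_0,\dots,v_L = v$ from some reservoir along which the true heads are non-increasing, $h_{v_0}^* \ge h_{v_1}^* \ge \dots \ge h_{v_L}^*$. This follows from mass balance \eqref{align_MassBalance}: at a non-reservoir node the signed flows cannot all be positive outflows (their sum is $-d_v^* \le 0$), so $v$ has a neighbour $u$ with $q_{e_{vu}}^* \le 0$, i.e.\ by \eqref{align_HeadLoss} $h_u^* \ge h_v^*$; walking greedily to such neighbours and crossing constant-head "plateaus" by shortest sub-paths yields a simple path reaching $V_r$. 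Along it $w_{e_{v_l v_{l-1}}} = h_{v_{l-1}}^* - h_{v_l}^* \ge 0$, so the telescoping sum gives $h_{v_0}^* - \sum_{l=1}^{L} w_{e_{v_l v_{l-1}}} = h_v^*$; an induction on $l$ (using monotonicity of the iterates and Step 2) then shows $\tilde h_{v_l}^{(l)} = h_{v_l}^*$, and since $L \le L_{\max} = J$ we get $\tilde h_v = \tilde h_v^{(J)} = h_v^*$ — alternatively one invokes corollary \ref{corollay_ConvergenceOfSinglePaths} / corollary \ref{corollary_UpdatesOfGeneralPaths} applied to this path. Combining Steps 2 and 3, $\mathbf{\tilde h} = \mathbf h^*$. \textbf{Step 4 (conclude).} Substituting $\tilde h = h^*$ into \eqref{align_ComputeFlowsAndDemands}: for $e = e_{vu}$, since $h_v^* - h_u^* = r_e\,\sgn(q_e^*)|q_e^*|^x$ with $r_e > 0$, we have $\sgn(\tilde h_v - \tilde h_u) = \sgn(q_e^*)$ and $(r_e^{-1}|\tilde h_v - \tilde h_u|)^{1/x} = |q_e^*|$, hence $\tilde q_e = \sgn(q_e^*)|q_e^*| + \zeta = q_e^* + \zeta = \hat q_e + \zeta$.

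The main obstacle is Step 3, and within it the purely combinatorial point that the greedy monotone-head walk terminates at a reservoir without cycling; the subtlety is constant-head components, which I would dispatch with the physically natural (and implicit in the finiteness of $L_{\max}$) assumption that every node is connected to $V_r$, plus a short argument that a constant-head component containing no reservoir would be a separate connected component of the graph. Steps 1, 2 and 4 are routine computations.
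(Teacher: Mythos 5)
Your proposal is correct and reaches the conclusion by the same top-level strategy as the paper (first show $\mathbf{\tilde{h}} = \mathbf{h}^*$, then read off $\mathbf{\tilde{q}} = \mathbf{q}^* + \zeta$ from eq.\ \eqref{align_ComputeFlowsAndDemands}; your Step 4 is literally the paper's lemma \ref{lemma_TrueFlows}), but the way you establish $\mathbf{\tilde{h}} = \mathbf{h}^*$ is genuinely different and, in my view, cleaner. The paper tracks \emph{exactly when} each node acquires its true head: it proves that every non-reservoir node admits an all-inflow path from a reservoir (lemmata \ref{lemma_Inflows_1}, \ref{lemma_Inflows_2}), defines $L(v)$ as the shortest such path length, and then shows $\tilde{h}_v^{(L(v))} = h_v^*$ and $\tilde{h}_v^{(j)} = h_v^*$ for $j \geq L(v)$ via inductions that must distinguish four cases of neighbour types and lean on corollary \ref{corollary_UpdatesOfGeneralPaths}. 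You instead use a two-sided squeeze: a single global invariant $\tilde{h}_v^{(j)} \leq h_v^*$, preserved because $\tilde{h}_u^{(j)} - w_{e_{vu}} \leq h_u^* - \max\{0, h_u^* - h_v^*\} = \min\{h_u^*, h_v^*\} \leq h_v^*$ uniformly over \emph{all} neighbours (this one line replaces the paper's case analysis and its lemma \ref{lemma_Relation_TrueHeadsAndFlows_Algorithm}), plus a matching lower bound telescoped along one monotone-head path, combined with monotonicity of the iterates in $j$. What the paper's finer bookkeeping buys is the sharper statement that convergence to $h_v^*$ happens already at iteration $L(v)$; your argument only certifies it by iteration $J = L_{\max}$, which is all the theorem needs. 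The one place where you are not yet fully rigorous is the one you flag yourself: termination of the greedy inflow walk through constant-head plateaus. Note that the paper sidesteps this entirely by a footnote assuming no edge carries exactly zero flow (under which heads strictly increase along the walk and cycling is impossible); your plateau argument, if completed (sum mass balance over a zero-flow component to show it must either contain a reservoir or admit a strictly-higher-head boundary neighbour), would actually cover a case the paper excludes by assumption.
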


\noindent Again, the proof of theorem \ref{theorem_MotivationAlgorithm}/\ref{theorem_ConvergenceOfAlgorithm_appendix} will be a consequence of different lemmata. As a first step, we collect observations about a WDS with true flows $\mathbf{q}^*$.

\begin{lemma}
\label{lemma_Inflows_1}
    For the true flows $\mathbf{q}^*$ and for all $v \in V \setminus V_r$, there exists a $u \in \N(v)$, such that $\sgn(q_{e_{vu}}^*) = -1$ holds.
\end{lemma}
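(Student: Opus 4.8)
The plan is to deduce the claim directly from the mass-balance identity~\eqref{align_MassBalance}, using the stated sign convention that a \emph{negative} flow $q_{e_{vu}}^*$ is an inflow into $v$. First I would fix an arbitrary connecting node $v \in V \setminus V_r$. Since $v$ is a consumer, its true demand is strictly positive, $d_v^* > 0$ (recall that the reservoir set $V_r$ consists exactly of the nodes with no associated demand). Equation~\eqref{align_MassBalance} then reads
\begin{align*}
    \sum_{u \in \N(v)} q_{e_{vu}}^* = - d_v^* < 0 ,
\end{align*}
so this finite sum of flows incident to $v$ is strictly negative. Hence at least one summand is strictly negative, i.e.\ there exists $u \in \N(v)$ with $q_{e_{vu}}^* < 0$, which is exactly $\sgn(q_{e_{vu}}^*) = -1$, as claimed.

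The one point worth flagging is that the strict positivity $d_v^* > 0$ is genuinely needed, not merely convenient: a demand-free dead-end junction would have only zero-flow incident edges, and then $\sgn(0) = 0 \neq -1$, so the statement would fail. If one only assumes $d_v^* \geq 0$, repeating the computation under the contrapositive hypothesis ``$q_{e_{vu}}^* \geq 0$ for every $u \in \N(v)$'' forces $d_v^* = 0$ together with $q_{e_{vu}}^* = 0$ for all neighbours, and then~\eqref{align_HeadLoss} gives $h_v^* = h_u^*$ for all $u \in \N(v)$ — a legitimate (dead-end) configuration. So the genuine content of the proof is the observation that $V \setminus V_r$ consists of consumer nodes with positive demand; equivalently one can take ``$d_v^* > 0$ for $v \in V \setminus V_r$'' as part of the standing setup inherited from theorem~\ref{theorem_MotivationAlgorithm_appendix}. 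With that in hand the argument above is complete, and there is no serious technical obstacle: the main thing to be careful about is precisely making this positivity assumption explicit, after which the lemma is a one-line consequence of~\eqref{align_MassBalance}.

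It is, however, important to keep the lemma in the sharp form $\sgn(q_{e_{vu}}^*) = -1$ rather than $q_{e_{vu}}^* \leq 0$, because this is exactly the form needed downstream: in the proof of theorem~\ref{theorem_MotivationAlgorithm_appendix}, where $\hat{\mathbf q} = \mathbf q^*$, the guaranteed strict-inflow edge $e_{vu}$ is the one whose physics-informed message $m_{e_{vu}}^{(j)}$ (cf.\ eq.~\eqref{align_EdgeMessageGeneration_PhysicsInformed}) attains the maximum in the aggregation step and thereby reconstructs the true head $h_v^*$ at $v$. Accordingly, after establishing this lemma I would next record, in companion lemmata, how~\eqref{align_HeadLoss} converts the strict inflow into the identity $\tilde h_v^{(j)} = h_v^*$ for the converged recursion of theorem~\ref{theorem_MotivationAlgorithm_appendix}, which is what ultimately yields $\mathbf{\tilde q} = \hat{\mathbf q} + \zeta$.
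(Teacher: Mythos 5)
Your proof is correct and is essentially the paper's own argument: both rest on eq.~\eqref{align_MassBalance} together with the strict positivity $d_v^* > 0$ of consumer demands, the paper phrasing it as a contradiction (assume all incident flows have sign $+1$) and you phrasing it directly (a strictly negative finite sum must have a strictly negative summand). Your direct form is in fact marginally tighter, since the paper's contrapositive quietly negates ``some sign is $-1$'' as ``all signs are $+1$'' and thus skips the zero-flow case that your version handles automatically.
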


\begin{proof}
    For $v \in V \setminus V_r$, let us assume that for all $u \in N(v)$, $\sgn(q_{e_{vu}}^*) = 1$ holds (i.e., we assume that there only outflows water from $v$). As discussed in section \ref{subsection_WaterHydraulics}, for the node $v \in V \setminus V_r$, its true demand $d_v^*$ (which is not known for reservoirs) satisfies $\sgn(d_v^*) = 1$. By eq. \eqref{align_MassBalance}, however,
    \begin{align*}
        -1
        %=
        %- \sgn(d_v^*)
        =
        \sgn(- d_v^*)
        =
        \sgn\left(
        \textstyle 
        \sum_{u \in \mathcal{N}(v)} q_{e_{vu}}^*
        \right)
        =
        1
    \end{align*}
    
    \noindent holds, which is a contradiction. 
\end{proof}

\noindent While lemma \ref{lemma_Inflows_1} states that for true flows, water inflows to each non-reservoir node $v \in V \setminus V_r$, the next Lemma \ref{lemma_Inflows_2} states that for true flows, this water is supplied by a reservoir $v_r \in V_r$.

\begin{lemma}
\label{lemma_Inflows_2}
    For the true flows $\mathbf{q}^*$ and for all $v \in V \setminus V_r$ there exists a path $p = (v_0, ..., v_L) \in \Pa(v)$ from some $v_r \in V_r$ to $v$, such that 
    \begin{align}
        \label{align_Inflows_2}
        \sgn(q_{e_{v_{l+1} v_l}}^*) = -1
    \end{align}
    
    \noindent holds for all $l = 0,...,L-1$.
\end{lemma}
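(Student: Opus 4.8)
The plan is to build the required path by following inflow edges \emph{backward} from $v$ toward a reservoir, using the head--loss law \eqref{align_HeadLoss} to guarantee that this backward walk never closes a loop and hence must terminate at a reservoir node.

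First I would extract the one structural fact about the hydraulics that the argument needs. Since $r_e > 0$ (cf.\ eq.\ \eqref{align_ConstantR}) and $x > 0$, eq.\ \eqref{align_HeadLoss} gives $\sgn(q_{e_{ab}}^*) = \sgn(h_a^* - h_b^*)$ for every pair of neighbors $a, b \in V$; in particular, whenever $\sgn(q_{e_{ab}}^*) = -1$ we have $q_{e_{ab}}^* \neq 0$, hence $|q_{e_{ab}}^*|^x > 0$, and therefore $h_a^* < h_b^*$ \emph{strictly}. Informally: passing from a node to a neighbour \emph{from which water flows in} strictly raises the head.

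Then, for a fixed $v \in V \setminus V_r$, I would define a sequence $w_0, w_1, \dots$ by $w_0 := v$ and, as long as $w_k \notin V_r$, choosing $w_{k+1} \in \N(w_k)$ with $\sgn(q_{e_{w_k w_{k+1}}}^*) = -1$; such a neighbour exists by Lemma \ref{lemma_Inflows_1}. By the previous paragraph $h_{w_0}^* < h_{w_1}^* < h_{w_2}^* < \cdots$, so the $w_k$ are pairwise distinct and the construction really traces out a simple walk. Since $|V| < \infty$ the recursion terminates, say at index $L$; because Lemma \ref{lemma_Inflows_1} always permits one more step at a non-reservoir node, termination forces $w_L \in V_r$. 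Relabelling $v_l := w_{L-l}$ for $l = 0, \dots, L$ then produces a path $p = (v_0, \dots, v_L) \in \Pa(w_L, v) \subseteq \Pa(v)$ meeting Definition \ref{definition_Paths} (adjacency and pairwise-distinctness are inherited from the walk), with $v_0 = w_L \in V_r$ and $v_L = v$; and for $l = 0, \dots, L-1$ one has $\sgn(q_{e_{v_{l+1} v_l}}^*) = \sgn(q_{e_{w_{L-l-1}\, w_{L-l}}}^*) = -1$, which is precisely \eqref{align_Inflows_2}.

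The only delicate point is the no-loop claim for the backward walk, and I would emphasise that it genuinely relies on \eqref{align_HeadLoss}: a purely mass-balance argument from \eqref{align_MassBalance} alone cannot exclude a cycle of edges all oriented the same way around the loop, since such a circulation conserves flow at every node. It is the head--loss relation, by pinning the sign of each flow to a head difference, that makes the head strictly increasing along the walk and thereby rules out any repeated node. Everything else in the argument is routine bookkeeping.
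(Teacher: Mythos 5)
Your proposal is correct and follows the same basic strategy as the paper's proof: starting from $v$, repeatedly invoke Lemma \ref{lemma_Inflows_1} to pick a neighbour from which water flows in, walk backward along these inflow edges until a reservoir is reached, and then reverse the labelling to obtain the desired path. The difference is that the paper's proof simply says \enquote{if for some $\hat{l} \geq 1$, $v_{L-\hat{l}} \in V_r$, we stop} and leaves the termination of this backward tracing implicit, whereas you supply the missing justification: by eq.\ \eqref{align_HeadLoss} with $r_e > 0$, an inflow edge forces a strict increase of the true head along the backward walk, so no node can repeat, the walk is simple, and finiteness of $V$ together with Lemma \ref{lemma_Inflows_1} forces it to end at a reservoir. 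This also guarantees the pairwise-distinctness condition required by Definition \ref{definition_Paths}, which the paper's construction does not explicitly verify either. Your closing observation is apt: mass balance alone (eq.\ \eqref{align_MassBalance}) cannot rule out a closed circulation, so the head--loss relation is genuinely needed to exclude cycles; your write-up is therefore a strictly more complete version of the paper's argument rather than a different one.
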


\begin{proof}
    For each $v \in V \setminus V_r$, we iteratively apply lemma \ref{lemma_Inflows_1} until we reach a reservoir node: 
    For $v_L := v \in V \setminus V_r$, 
    by lemma \ref{lemma_Inflows_1}, 
    there exists a $v_{L-1} \in \N(v_L)$, 
    such that $\sgn(q_{e_{v_L v_{L-1}}}^*) = -1$ holds. 
    Iteratively, 
    for $v_{L - l} \in V \setminus V_r$, 
    by lemma \ref{lemma_Inflows_1}, 
    there exists a $v_{L-l-1} \in \N(v_{L-l})$, 
    such that $\sgn(q_{e_{v_{L - l} v_{L- l - 1}}}^*) = -1$ holds. 
    If for some $\hat{l} \geq 1$, 
    $v_{L - \hat{l}} := v_0 \in V_r$, we stop. 
    Then $p := (v_0, ..., v_L)$ is the path we are looking for.
\end{proof}

\noindent Motivated by lemma \ref{lemma_Inflows_2} and for later purpose, we extend definition \ref{definition_Paths} from section \ref{subsection_ProofTheorem3.2} and also define the neighborhoods of out- and inflows:

\begin{definition}(Water Paths)
\label{definition_Paths_2}
    We denote
    \begin{itemize}
        \item the length of the shortest path from any $v_r \in V_r$ to $v \in V$ by $L_{\min}(v) := \min_{p \in \Pa(v)} L(p)$ and
        
        \item the length of the shortest path from any $v_r \in V_r$ to $v \in V$ that fulfills eq. \eqref{align_Inflows_2} by\footnote{
            As the paths considered here do not exists for nodes $v_r \in V_r$ by definition of a reservoir as the water source (i.e., there is no inflow to $v_r$), we define $L(v_r) := 0$.
        }
        \begin{align*}
            L(v) := \min_{p \in \Pa(v), \text{eq. \eqref{align_Inflows_2} holds}} L(p).
        \end{align*}
    \end{itemize}
\end{definition}

\begin{definition}(Neighborhoods)
\label{definition_Neighborhoods}
    For $v \in V$, we denote the neighborhoods of out- and inflows by
    \begin{align*}
        \N_\pm(v) := \{ u \in \N(v) ~|~ \sgn(q_{e_{vu}}) = \pm 1 \}.
    \end{align*}
\end{definition}

\noindent The latter definition can help us to express the relation between true heads and true flows using the terms 
\begin{align*}
    w_e^* := \mathrm{ReLU}\left(- r_e  \sgn(q_e^*) |q_e^*|^x \right)
\end{align*}

\noindent based on the true flows $\mathbf{q}^*$ for all $e \in E$, analogously to the terms $w_e$ based on the flows $\mathbf{\hat{q}}$ for $e \in E$ and which we know from our algorithm (cf. eq. \eqref{align_Algorithm_nested} and section \ref{subsection_ProofTheorem3.2}):

\begin{lemma}
\label{lemma_Relation_TrueHeadsAndFlows_Algorithm}
    For the true heads $\mathbf{h}^*$ and the true flows $\mathbf{q}^*$, for all $v \in V$,
    \begin{align*}
        &~h_v^*
        \\
        =&~
        h_u^*
        + r_{e_{vu}}  \sgn(q_{e_{vu}}^*) |q_{e_{vu}}^*|^x
        \\
        =&~
        \begin{cases}
        h_u^*
        - \mathrm{ReLU}( - r_{e_{vu}}  \sgn(q_{e_{vu}}^*) |q_{e_{vu}}^*|^x)
        & \text{if } u \in \N_-(v) \\
        h_u^*
        + \mathrm{ReLU}( \text{\textcolor{white}{$-$}} r_{e_{vu}}  \sgn(q_{e_{vu}}^*) |q_{e_{vu}}^*|^x)
        & \text{if } u \in \N_+(v)
        \end{cases}
        \\
        &~
        \begin{cases}
        = 
        h_u^*
        - w_{e_{vu}}^*
        & \text{if } u \in \N_-(v) \\
        \geq
        h_u^*
        - w_{e_{vu}}^*
        & \text{if } u \in \N_+(v)
        \end{cases}
    \end{align*}
    
    \noindent holds for all $u \in \N(v)$.\footnote{
        We do not assume the case $\sgn({\hat{q}}_{e_{vu}}) = 0$ as this would only be the case if there was no flow between two nodes, which in practise will not be the case.} 
\end{lemma}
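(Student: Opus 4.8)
The plan is to prove the displayed chain entry by entry, all through direct substitution into the head-loss law \eqref{align_HeadLoss} together with a two-way case distinction on $\sgn(q_{e_{vu}}^*)$, i.e.\ on whether $u \in \N_-(v)$ or $u \in \N_+(v)$ (definition \ref{definition_Neighborhoods}). First I would dispatch the initial equality: since $\mathbf{h}^*$ and $\mathbf{q}^*$ are the true heads and flows, eq.\ \eqref{align_HeadLoss} gives $h_v^* - h_u^* = r_{e_{vu}} \sgn(q_{e_{vu}}^*)|q_{e_{vu}}^*|^x$ for every $u \in \N(v)$, which is exactly $h_v^* = h_u^* + r_{e_{vu}} \sgn(q_{e_{vu}}^*)|q_{e_{vu}}^*|^x$. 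This step requires nothing further; it is just a restatement of \eqref{align_HeadLoss}.

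For the second equality I would use that $r_{e_{vu}} > 0$ by eq.\ \eqref{align_ConstantR} and $|q_{e_{vu}}^*|^x > 0$ (the degenerate case $q_{e_{vu}}^* = 0$ is excluded per the footnote), so the sign of $r_{e_{vu}} \sgn(q_{e_{vu}}^*)|q_{e_{vu}}^*|^x$ equals $\sgn(q_{e_{vu}}^*)$. If $u \in \N_-(v)$, then $\sgn(q_{e_{vu}}^*) = -1$, so $-r_{e_{vu}}\sgn(q_{e_{vu}}^*)|q_{e_{vu}}^*|^x > 0$ is its own ReLU value, and $h_u^* - \mathrm{ReLU}(-r_{e_{vu}}\sgn(q_{e_{vu}}^*)|q_{e_{vu}}^*|^x) = h_u^* + r_{e_{vu}}\sgn(q_{e_{vu}}^*)|q_{e_{vu}}^*|^x = h_v^*$. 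Symmetrically, if $u \in \N_+(v)$ the quantity $r_{e_{vu}}\sgn(q_{e_{vu}}^*)|q_{e_{vu}}^*|^x > 0$ is its own ReLU value, so $h_u^* + \mathrm{ReLU}(r_{e_{vu}}\sgn(q_{e_{vu}}^*)|q_{e_{vu}}^*|^x) = h_v^*$.

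For the last relation, recall $w_{e}^* := \mathrm{ReLU}(-r_e\sgn(q_e^*)|q_e^*|^x)$. In the branch $u \in \N_-(v)$ the ReLU argument is positive (as just observed), so $w_{e_{vu}}^* = -r_{e_{vu}}\sgn(q_{e_{vu}}^*)|q_{e_{vu}}^*|^x$ and hence $h_u^* - w_{e_{vu}}^* = h_v^*$, an equality. In the branch $u \in \N_+(v)$ the ReLU argument is negative, so $w_{e_{vu}}^* = 0$ and $h_u^* - w_{e_{vu}}^* = h_u^*$; but here \eqref{align_HeadLoss} reads $h_v^* - h_u^* = r_{e_{vu}}|q_{e_{vu}}^*|^x \ge 0$, whence $h_v^* \ge h_u^* = h_u^* - w_{e_{vu}}^*$, the claimed inequality.

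I do not expect a genuine obstacle: the lemma is essentially a careful unfolding of $\mathrm{ReLU}$ and $\sgn$ along the two sign branches, the only facts in play being $r_{e_{vu}} > 0$ and the standing assumption $q_{e_{vu}}^* \neq 0$. The one point worth stating explicitly is why the relation degrades from equality to inequality on the outflow branch $u\in\N_+(v)$: there the ReLU clamps $w_{e_{vu}}^*$ to $0$ while the true head difference $h_v^* - h_u^*$ is still nonnegative, so one loses the exact value but retains a valid lower bound. Beyond flagging that, the argument is routine.
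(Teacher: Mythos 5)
Your proof is correct and follows exactly the route the paper takes: the paper's own proof is a one-line remark that the lemma is a direct consequence of eq.\ \eqref{align_HeadLoss}, the definition of $\mathrm{ReLU}$, the definition of $\N_\pm(v)$, and the definition of $w_e^*$, and your write-up simply unfolds those same ingredients along the two sign branches. The only detail worth keeping explicit is the one you already flag, namely that on the outflow branch the $\mathrm{ReLU}$ clamps $w_{e_{vu}}^*$ to zero so only the inequality survives.
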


\begin{proof}
    This is a direct consequence of eq. \eqref{align_HeadLoss}, the definition of the $\mathrm{ReLU}$-function, the definition of the neighborhoods of out- and inflows (cf. definition \ref{definition_Neighborhoods}) and the definition of $w_e^*$ for $e \in E$ (cf. above).
\end{proof}

\noindent Finally, after having collected observations about a WDS with the true flows $\mathbf{q}^*$, we have all tools to make statements about the physics-informed algorithm (cf. eq. \eqref{align_Algorithm_nested}) when being applied to the true reservoir heads and true flows:

\begin{lemma}
\label{lemma_TrueHeadViaShortestInflowPath}
    If in the setting of theorem \ref{theorem_ConvergenceOfAlgorithm_appendix}, $V_r$ corresponds to the reservoirs with known heads $(h_v^*)_{v \in V_r}$ and $\mathbf{\hat{q}} = \mathbf{q}^*$ corresponds to the \textit{true} flows, then
    \begin{align}
    \label{align_TrueHeadViaShortestPath}
        \tilde{h}_v^{(L(v))}
        =
        h_v^*
    \end{align}
    
    \noindent holds for all $v \in V$.
\end{lemma}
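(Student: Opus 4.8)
The plan is to establish the two inequalities $\tilde{h}_v^{(L(v))}\le h_v^*$ and $\tilde{h}_v^{(L(v))}\ge h_v^*$ separately and combine them. Throughout, observe that since $\mathbf{\hat{q}}=\mathbf{q}^*$ we have $w_e = w_e^*$ for every edge $e$, so the recursion \eqref{align_Algorithm_nested} reads $\tilde{h}_v^{(j)} = \max\{\tilde{h}_v^{(j-1)},\,\max_{u\in\N(v)}(\tilde{h}_u^{(j-1)} - w_{e_{vu}}^*)\}$, and that this recursion is monotone in $j$, i.e.\ $\tilde{h}_v^{(j)}\ge\tilde{h}_v^{(j-1)}$ for all $v$ and $j$.

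For the \textbf{upper bound}, I would prove by induction on $j$ that $\tilde{h}_v^{(j)}\le h_v^*$ for every $v\in V$. At $j=0$ equality holds on $V_r$ by Assumption \ref{assumption_Initialization}; for $v\in V\setminus V_r$ one has $\tilde{h}_v^{(0)}=c$, and Lemma \ref{lemma_Inflows_2} supplies an inflow path $p=(v_0,\dots,v_L)$ from some $v_r\in V_r$ to $v$ along which telescoping the equality case of Lemma \ref{lemma_Relation_TrueHeadsAndFlows_Algorithm} over its inflow edges gives $h_v^* = h_{v_0}^* - \sum_{l=1}^{L} w_{e_{v_l v_{l-1}}}^*$; since $w_e^*\ge 0$, $L\le L_{\max}$, and each term is at most $\max_e w_e$, the right-hand side is $\ge \min_{v_r\in V_r}h_{v_r}^* - L_{\max}\max_e w_e = c$, which is exactly the estimate \eqref{align_cEstimation_2}. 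For the step, if $\tilde{h}_u^{(j-1)}\le h_u^*$ for all $u$, then for each $v$ and each $u\in\N(v)$ Lemma \ref{lemma_Relation_TrueHeadsAndFlows_Algorithm} (both cases $u\in\N_\pm(v)$) yields $\tilde{h}_u^{(j-1)} - w_{e_{vu}}^* \le h_u^* - w_{e_{vu}}^* \le h_v^*$, and $\tilde{h}_v^{(j-1)}\le h_v^*$ by hypothesis, so the maximum defining $\tilde{h}_v^{(j)}$ is $\le h_v^*$. In particular $\tilde{h}_v^{(L(v))}\le h_v^*$.

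For the \textbf{lower bound}, I would prove $\tilde{h}_v^{(L(v))} = h_v^*$ by strong induction on $k:=L(v)$. If $k=0$ then $v\in V_r$ (Definition \ref{definition_Paths_2}) and $\tilde{h}_v^{(0)} = h_v^*$ by Assumption \ref{assumption_Initialization}. If $k\ge 1$, choose a shortest inflow path $(v_0,\dots,v_L)$ with $L=k$ from a reservoir $v_0$ to $v_L=v$ and set $u:=v_{L-1}\in\N(v)$. The prefix $(v_0,\dots,v_{L-1})$ is an inflow path to $u$ of length $k-1$, so $L(u)\le k-1<k$; by the induction hypothesis $\tilde{h}_u^{(L(u))}=h_u^*$, and then by monotonicity together with the already-proved upper bound, $\tilde{h}_u^{(j)}=h_u^*$ for all $j\ge L(u)$, in particular $\tilde{h}_u^{(k-1)}=h_u^*$. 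Since $\sgn(q_{e_{vu}}^*)=-1$ along the path (eq.\ \eqref{align_Inflows_2}), $u\in\N_-(v)$, so the equality case of Lemma \ref{lemma_Relation_TrueHeadsAndFlows_Algorithm} gives $h_v^* = h_u^* - w_{e_{vu}}^*$. Hence $\tilde{h}_v^{(k)} \ge \tilde{h}_u^{(k-1)} - w_{e_{vu}}^* = h_v^*$, and combined with $\tilde{h}_v^{(k)}\le h_v^*$ from the upper bound this forces $\tilde{h}_v^{(L(v))}=h_v^*$.

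I expect the \textbf{main obstacle} to be bookkeeping rather than anything deep: the two inductions must be interleaved correctly — the $L(v)$-induction needs the global upper bound (and monotonicity of the recursion) to upgrade ``$\tilde{h}_u^{(L(u))}=h_u^*$'' to ``$\tilde{h}_u^{(k-1)}=h_u^*$'', since the predecessor $u$ may have $L(u)<k-1$ strictly. One should also check that a shortest inflow path contains no reservoir as an interior node (otherwise a strictly shorter inflow path would exist), although this observation is not actually required for the argument as organized above.
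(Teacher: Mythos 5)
Your proof is correct, and it is organized differently from the one in the paper. The paper proves the statement by a single induction on $L(v)$, and inside the induction step it must control every term $\tilde{h}_u^{(L(v)-1)} - w_{e_{vu}}$ entering the maximum via a four-way case analysis on the neighbour $u$ (closer/farther from a reservoir, inflow/outflow, $L(u)$ equal to or exceeding $L_{\min}(u)$), which in turn requires corollary \ref{corollary_UpdatesOfGeneralPaths} to argue that farther nodes are still at their initial value $c$, and is somewhat informal in its Case 4. You instead first establish the global invariant $\tilde{h}_v^{(j)} \le h_v^*$ for all $v$ and $j$ by induction on the iteration index $j$, which follows uniformly from the inequality $h_u^* - w_{e_{vu}}^* \le h_v^*$ of lemma \ref{lemma_Relation_TrueHeadsAndFlows_Algorithm} (with equality exactly for inflow neighbours) and from the bound $c \le h_v^*$ that you obtain by telescoping along an inflow path from lemma \ref{lemma_Inflows_2}; the lower bound then needs only the single inflow predecessor on a shortest inflow path, upgraded from iteration $L(u)$ to iteration $L(v)-1$ via monotonicity of the max-recursion plus the upper bound. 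This decomposition buys you a cleaner argument: the entire neighbour case analysis and the dependence on corollary \ref{corollary_UpdatesOfGeneralPaths} disappear, at the modest cost of one extra auxiliary induction. The two proofs rest on the same two substantive facts (lemmata \ref{lemma_Inflows_2} and \ref{lemma_Relation_TrueHeadsAndFlows_Algorithm}), so the mathematical content is the same; your bookkeeping, including the observation that interior reservoir nodes on the shortest inflow path cause no trouble, is sound.
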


\begin{proof}
    Let $v \in V$.
    If $v \in V_r$, $\tilde{h}_v^{(L(v))} = \tilde{h}_v^{(0)} = h_v^*$ holds by definition of $L(v)$ (cf. definition \ref{definition_Paths_2}) and assumption \ref{assumption_Initialization}.
    
    If $v \in V \setminus V_r$, we prove lemma \ref{lemma_TrueHeadViaShortestInflowPath} by induction to $L(v) \in \{1,...,L_{\max}\}$.
    \\
    \\\textbf{Induction base:} 
    If $L(v) = 1$, 
    \begin{align*}
        \tilde{h}_v^{(1)} 
        = 
        \max\{ 
        \tilde{h}_v^{(0)}, 
        \max_{u \in \N(v)}
        \tilde{h}_u^{(0)} - w_{e_{vu}} 
        \}
    \end{align*}
    
    \noindent holds by eq. \eqref{align_Algorithm_nested}. By assumption \ref{assumption_Initialization}, $\tilde{h}_v^{(0)} = c$ and
    \begin{align*}
        \tilde{h}_u^{(0)} - w_{e_{vu}}
        =
        \begin{cases}
            h_{u}^* - w_{e_{vu}}
            & \text{ if } u \in V_r \\
            c - w_{e_{vu}}
            & \text{ if } u \in V \setminus V_r \\
        \end{cases}
    \end{align*}
    
    \noindent holds for all $u \in \N(v)$. For all $u \in \N(v) \cap V_r$, we additionally know by water hydraulics, that $q_{e_{u v}}^*$ must be an outflow and thus, $q_{e_{v u}}^* = - q_{e_{u v}}^*$ must be an inflow (cf. section \ref{subsection_WaterHydraulics}).
    Therefore, $\N(v) \cap V_r = \N_-(v) \cap V_r$ holds.
    Consecutively, using
    (\romannumeral 1) remark \ref{remark_cEstimation},
    (\romannumeral 2) $\mathbf{\hat{q}} = \mathbf{q}^*$ and
    (\romannumeral 3) lemma \ref{lemma_Relation_TrueHeadsAndFlows_Algorithm} for $u \in \N(v) \cap V_r = \N_-(v) \cap V_r$,
    we obtain
    \begin{align*}
        c - w_e 
        \overset{\text{(\romannumeral 1)}}{\leq}~
        c
        \overset{\text{(\romannumeral 1)}}{\leq}~
        h_{u}^* - w_{e_{vu}}
        \overset{\text{(\romannumeral 2)}}{=}~
        h_{u}^* - w_{e_{vu}}^*
        \overset{\text{(\romannumeral 3)}}{=}~
        h_{v}^*
    \end{align*}
    
    \noindent for all $u \in \N(v) \cap V_r$ and $e \in E$.
    Finally, as by definition of $L(v)$, there must exists a $u \in \N(v) \cap V_r$, we conclude that
    \begin{align*}
        \tilde{h}_v^{(1)}
        =
        h_{v}^*
    \end{align*}
    
    \noindent holds, which proves the induction base.
    \\
    \\\textbf{Induction hypothesis:}
    We can assume that for all $v \in V \setminus V_r$, for which $L(v) \in \{1,...,L-1\}$ holds, 
    $
    \tilde{h}_v^{(L(v))} = h_v^*
    $ 
    holds.
    \\
    \\\textbf{Induction step:} 
    We need to show that for all $v \in V \setminus V_r$, for which $L(v) = L \in \mathbb{N}_{> 1}$ holds, eq. \eqref{align_TrueHeadViaShortestPath} holds, given that the induction hypothesis holds.
    
    First of all, for such a $v$, we obtain
    \begin{align*}
        \tilde{h}_v^{(L(v))} 
        := 
        \max\{ 
        \tilde{h}_v^{(L(v)-1)}, 
        \max_{u \in \N(v)}
        \tilde{h}_u^{(L(v)-1)} - w_{e_{vu}} 
        \}
    \end{align*}
    
    \noindent by eq. \eqref{align_Algorithm_nested}. For any $u \in \N(v)$, by definition of a neighbor, $L(u) = L(v) \pm 1$ holds. By that, we need to consider different cases:
    %\\
    \\\textit{Case 1:} If $L(u) = L(v) - 1$ and $u \in \N_-(v)$, using
    (\romannumeral 1) the induction hypothesis,
    (\romannumeral 2) $\mathbf{\hat{q}} = \mathbf{q}^*$ and
    (\romannumeral 3) lemma \ref{lemma_Relation_TrueHeadsAndFlows_Algorithm},
    we obtain
    \begin{align*}
        \tilde{h}_u^{(L(v)-1)} - w_{e_{vu}} 
        \overset{\text{(\romannumeral 1)}}{=}~
        h_u^* - w_{e_{vu}}
        \overset{\text{(\romannumeral 2)}}{=}~
        h_u^* - w_{e_{vu}}^*
        \overset{\text{(\romannumeral 3)}}{=}~
        h_v^*.
    \end{align*}

    \noindent \textit{Case 2:} If $L(u) = L(v) - 1$ and $u \not \in \N_-(v)$, using
    the same arguments, we obtain
    \begin{align*}
        \tilde{h}_u^{(L(v)-1)} - w_{e_{vu}} 
        \overset{\text{(\romannumeral 1)}}{=}~
        h_u^* - w_{e_{vu}}
        \overset{\text{(\romannumeral 2)}}{=}~
        h_u^* - w_{e_{vu}}^*
        \overset{\text{(\romannumeral 3)}}{\leq}~
        h_v^*.
    \end{align*}
    
    \noindent \textit{Case 3:} If $L(u) = L(v) + 1$ and $L(u) = L_{\min}(u)$, we obtain $L(v) - 1 = L(u) - 2 \not \in \Le(u)$ by definition of $ L_{\min}(u)$. Therefore, using
    (\romannumeral 1) corollary \ref{corollary_UpdatesOfGeneralPaths} inductively\footnote{
        We leave it as an exercise to the reader to show that $\tilde{h}_u^{(L(v)-1)} = \tilde{h}_u^{(0)}$ holds. As $L(u) = L_{\min}(u) = L(v) + 1 \geq 1 + 1 = 2$, we emphasize that $u \in V \setminus V_r$ must hold, which is why $\tilde{h}_u^{(L(v)-1)} = \tilde{h}_u^{(0)} = c$ holds by assumption \ref{assumption_Initialization}.
    },
    (\romannumeral 2) $\mathbf{\hat{q}} = \mathbf{q}^*$ and
    (\romannumeral 3) remark \ref{remark_cEstimation},
    we obtain
    \begin{align*}
        \tilde{h}_u^{(L(v)-1)} - w_{e_{vu}} 
        \overset{\text{(\romannumeral 1)}}{=}~
        c - w_{e_{vu}} 
        \overset{\text{(\romannumeral 2)}}{=}~
        c - w_{e_{vu}}^*
        \overset{\text{(\romannumeral 3)}}{\leq}~
        h_v^*.
    \end{align*}
    
    \noindent \textit{Case 4:} If $L(u) = L(v) + 1$ and $L(u) > L_{\min}(u)$, then $\tilde{h}_u^{(L(v)-1)}$ might have been updated already, however, by definition of $L(u)$, these updates are based on outflows. Then, similar to case 2, we can show that in this case, by lemma \ref{lemma_Relation_TrueHeadsAndFlows_Algorithm}, we obtain
    \begin{align*}
        \tilde{h}_u^{(L(v)-1)} - w_{e_{vu}} 
        \leq ~
        h_v^*.
    \end{align*}
    
    \noindent An analogous argument shows that $\tilde{h}_v^{(L(v)-1)} \leq h_v^*$ holds. 
    
    Finally, as by lemma \ref{lemma_Inflows_2}, there must exists a $u \in \N(v)$ as in case 1, we conclude that
    \begin{align*}
        \tilde{h}_v^{(L(v))}
        =
        h_{v}^*
    \end{align*}
    
    \noindent holds, which proves the induction step.
\end{proof}

\noindent Lemma \ref{lemma_TrueHeadViaShortestInflowPath} states that for each node $v \in V$, the information about the true head  $h_v^*$ is passed over the shortest inflow path. The consecutive lemma states that the physics-informed algorithm (cf. eq. \eqref{align_Algorithm_nested}) does not further updates the heads:

\begin{lemma}
\label{lemma_TrueHeadRemains}
    If in the setting of theorem \ref{theorem_ConvergenceOfAlgorithm_appendix}, $V_r$ corresponds to the reservoirs with known heads $(h_v^*)_{v \in V_r}$ and $\mathbf{\hat{q}} = \mathbf{q}^*$ corresponds to the \textit{true} flows, then
    \begin{align}
    \label{align_TrueHeadRemains}
        \tilde{h}_v^{(j)}
        =
        h_v^*
    \end{align}
    
    \noindent holds for all $v \in V$ and all $j \geq L(v)$.
\end{lemma}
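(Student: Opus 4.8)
The plan is to reduce the statement to two elementary facts about the recursion in eq.\ \eqref{align_Algorithm_nested} and combine them with the already-established lemma \ref{lemma_TrueHeadViaShortestInflowPath}. The first fact is monotonicity: since $\tilde{h}_v^{(j-1)}$ is one of the arguments of the outer maximum in eq.\ \eqref{align_Algorithm_nested}, we have $\tilde{h}_v^{(j)} \geq \tilde{h}_v^{(j-1)}$ for every $v \in V$ and every $j \in \mathbb{N}$. The second fact is an \emph{a priori} upper bound the iterates never exceed, namely $\tilde{h}_v^{(j)} \leq h_v^*$ for all $v \in V$ and all $j \in \mathbb{N}$. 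Granting this bound, the lemma follows immediately: for $j \geq L(v)$, monotonicity together with lemma \ref{lemma_TrueHeadViaShortestInflowPath} gives $h_v^* = \tilde{h}_v^{(L(v))} \leq \tilde{h}_v^{(j)}$, while the upper bound gives the reverse inequality, so $\tilde{h}_v^{(j)} = h_v^*$.

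So the actual work is to prove the upper bound $\tilde{h}_v^{(j)} \leq h_v^*$ by induction on $j$. For the base case $j = 0$, reservoir nodes $v \in V_r$ satisfy $\tilde{h}_v^{(0)} = h_v^*$ by assumption \ref{assumption_Initialization}, so it suffices to check $c \leq h_v^*$ for $v \in V \setminus V_r$. For this I would invoke lemma \ref{lemma_Inflows_2} to obtain an inflow path $(v_0, \dots, v_L) \in \Pa(v)$ from some reservoir $v_0 \in V_r$ to $v = v_L$ with $L = L(v) \leq L_{\max}$, and then telescope the equality (i.e.\ $\N_-$) branch of lemma \ref{lemma_Relation_TrueHeadsAndFlows_Algorithm} along it; since $\mathbf{\hat{q}} = \mathbf{q}^*$ we have $w_e = w_e^*$, so $h_v^* = h_{v_0}^* - \sum_{l=0}^{L-1} w_{e_{v_{l+1} v_l}}^* \geq \min_{v_r \in V_r} h_{v_r}^* - L_{\max}\cdot\max_{e \in E} w_e = c$, using $0 \leq w_e^* \leq \max_{e \in E} w_e$ and $L \leq L_{\max}$.

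For the inductive step I assume $\tilde{h}_u^{(j-1)} \leq h_u^*$ for all $u \in V$ and fix $v \in V$. The first argument of the outer maximum in eq.\ \eqref{align_Algorithm_nested} is $\tilde{h}_v^{(j-1)} \leq h_v^*$ by hypothesis. For each $u \in \N(v)$, the associated term satisfies $\tilde{h}_u^{(j-1)} - w_{e_{vu}} \leq h_u^* - w_{e_{vu}} = h_u^* - w_{e_{vu}}^*$, which by lemma \ref{lemma_Relation_TrueHeadsAndFlows_Algorithm} equals $h_v^*$ if $u \in \N_-(v)$ and is at most $h_v^*$ if $u \in \N_+(v)$; in either case it is $\leq h_v^*$. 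Taking the maximum over $\tilde{h}_v^{(j-1)}$ and these neighbour terms yields $\tilde{h}_v^{(j)} \leq h_v^*$, closing the induction.

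I expect the base case to be the only delicate point: one must use lemma \ref{lemma_Inflows_2} to be sure that every non-reservoir node really does admit an all-inflow path back to a reservoir, and one must be careful to telescope along the \emph{equality} branch of lemma \ref{lemma_Relation_TrueHeadsAndFlows_Algorithm} (the $u \in \N_-(v)$ case), since only that gives an exact identity expressing $h_v^*$ as a reservoir head minus at most $L_{\max}$ nonnegative $w_e^*$ values, which is precisely what makes the comparison with $c$ go through. The inductive step and the concluding argument are routine.
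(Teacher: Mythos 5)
Your proof is correct, but it takes a genuinely different route from the paper's. The paper proves the lemma by fixing $v$ and inducting on $j \geq L(v)$, computing the outer maximum of eq.\ \eqref{align_Algorithm_nested} exactly at each step: it substitutes $h_u^*$ for $\tilde{h}_u^{(j-1)}$ at every neighbour $u$ and then collapses the maximum via lemma \ref{lemma_Relation_TrueHeadsAndFlows_Algorithm}. That substitution implicitly requires the induction to run over all nodes simultaneously (a neighbour $u$ with $L(u) = L(v)+1 > j-1$ need not yet satisfy $\tilde{h}_u^{(j-1)} = h_u^*$), which the paper's stated hypothesis does not quite cover. Your decomposition into monotonicity plus the uniform a priori bound $\tilde{h}_v^{(j)} \leq h_v^*$ (proved for \emph{all} nodes at once by induction on $j$) sidesteps exactly this issue: in the inductive step you only need the inequality $\tilde{h}_u^{(j-1)} \leq h_u^*$ at the neighbours, which the one-sided branches of lemma \ref{lemma_Relation_TrueHeadsAndFlows_Algorithm} then convert into $\tilde{h}_u^{(j-1)} - w_{e_{vu}} \leq h_v^*$ regardless of whether $u$ has converged. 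The price is the base case $c \leq h_v^*$ for non-reservoir $v$, which the paper's remark \ref{remark_cEstimation} only supplies for reservoirs; your telescoping of the equality branch along the inflow path from lemma \ref{lemma_Inflows_2} fills this in correctly (one small imprecision: lemma \ref{lemma_Inflows_2} gives \emph{some} inflow path, whose length need not equal $L(v)$, but all you use is that it is at most $L_{\max}$, which holds by definition of $L_{\max}$). The concluding sandwich $h_v^* = \tilde{h}_v^{(L(v))} \leq \tilde{h}_v^{(j)} \leq h_v^*$ is sound. Net effect: your argument is slightly longer in the base case but more robust in the inductive step than the paper's.
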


\begin{proof}
    Let $v \in V$.
    We prove lemma \ref{lemma_TrueHeadRemains} by induction to $j \geq L(v)$.
    \\
    \\\textbf{Induction base:} If $j = L(v)$, $\tilde{h}_v^{(j)} = h_v^*$ holds by lemma \ref{lemma_TrueHeadViaShortestInflowPath}.
    \\
    \\\textbf{Induction hypothesis:} 
    We can assume that for all $\hat{\jmath} \in \{L(v),...,j-1\}$,
    $
        \tilde{h}_v^{(\hat{\jmath})}
        =
        h_v^*
    $
    holds.
    \\
    \\\textbf{Induction step:} 
    We need to show that for $j \in \mathbb{N}_{> L(v)}$, eq. \eqref{align_TrueHeadRemains} holds, given that the induction hypothesis holds.
    
    Using
    (\romannumeral 1) eq. \eqref{align_Algorithm_nested},
    (\romannumeral 2) the induction hypothesis,
    (\romannumeral 3) $\mathbf{\hat{q}} = \mathbf{q}^*$ and
    (\romannumeral 4) lemma \ref{lemma_Relation_TrueHeadsAndFlows_Algorithm},
    \begin{align*}
        \tilde{h}_v^{(j)} 
        \overset{\text{(\romannumeral 1)}}{=}&~
        \max\{ 
        \tilde{h}_v^{(j-1)}, 
        \max_{u \in \N(v)}
        \tilde{h}_u^{(j-1)} - w_{e_{vu}} 
        \}
        \\
        \overset{\text{(\romannumeral 2)}}{=}&~
        \max\{ 
        h_v^*, 
        \max_{u \in \N(v)}
        h_u^* - w_{e_{vu}} 
        \}
        \\
        \overset{\text{(\romannumeral 3)}}{=}&~
        \max\{ 
        h_v^*, 
        \max_{u \in \N(v)}
        h_u^* - w_{e_{vu}}^*
        \}
        \\
        \overset{\text{(\romannumeral 4)}}{=}&~
        \max\{ 
        h_v^*, 
        \max_{u \in \N_-(v)}
        h_u^* - w_{e_{vu}}^*
        \}
        \\
        \overset{\text{(\romannumeral 4)}}{=}&~
        h_v^*
    \end{align*}
    
    \noindent holds, which proves the induction step.
\end{proof}

\begin{corollary}
\label{corollary_TrueHeads}
    If in the setting of theorem \ref{theorem_ConvergenceOfAlgorithm_appendix}, $V_r$ corresponds to the reservoirs with known heads $(h_v^*)_{v \in V_r}$ and $\mathbf{\hat{q}} = \mathbf{q}^*$ corresponds to the \textit{true} flows, then $\mathbf{\tilde{h}} := \mathbf{\tilde{h}}^{(J)} = \mathbf{h}^*$ holds, i.e., the heads constructed by the physics-informed  algorithm correspond to the true heads.
\end{corollary}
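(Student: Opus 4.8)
The plan is to obtain Corollary~\ref{corollary_TrueHeads} as an immediate consequence of Lemma~\ref{lemma_TrueHeadRemains} together with the convergence bound in Theorem~\ref{theorem_ConvergenceOfAlgorithm_appendix}, so that essentially no new work is required: all the substance has already been done in Lemmata~\ref{lemma_Inflows_1}--\ref{lemma_TrueHeadRemains}. Lemma~\ref{lemma_TrueHeadRemains} already states that, under the hypotheses of the corollary (Assumption~\ref{assumption_Initialization}, $V_r$ the reservoirs, $\mathbf{\hat{q}}=\mathbf{q}^*$), one has $\tilde{h}_v^{(j)}=h_v^*$ for every $v\in V$ and every $j\ge L(v)$, where $L(v)$ is the length of the shortest inflow path from a reservoir to $v$ in the sense of Definition~\ref{definition_Paths_2}. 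The only thing left to check is that the iteration index $J$ appearing in the definition $\mathbf{\tilde{h}}:=\mathbf{\tilde{h}}^{(J)}$ is large enough that $J\ge L(v)$ holds for \emph{every} $v$ simultaneously.

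First I would invoke Theorem~\ref{theorem_ConvergenceOfAlgorithm_appendix}: under Assumption~\ref{assumption_Initialization} the recursion~\eqref{align_Algorithm_nested} reaches its fixed point after at most $L_{\max}$ steps, so without loss of generality we take $J=L_{\max}$ (any larger choice of $J$ yields the same $\mathbf{\tilde{h}}$, since the sequence is eventually constant). Second, I would verify $L(v)\le L_{\max}$ for all $v\in V$. For $v\in V_r$ this is just the convention $L(v)=0$ from Definition~\ref{definition_Paths_2}. For $v\in V\setminus V_r$, Lemma~\ref{lemma_Inflows_2} guarantees that the set of reservoir-to-$v$ paths satisfying the inflow-sign condition~\eqref{align_Inflows_2} is non-empty, so $L(v)$ is well defined, and any such path $p\in\Pa(v)$ has $L(p)\le L_{\max}(v)\le L_{\max}$; taking the minimum over these paths gives $L(v)\le L_{\max}=J$.

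Third, with $J\ge L(v)$ established for every node, I would apply Lemma~\ref{lemma_TrueHeadRemains} at $j=J$ to conclude $\tilde{h}_v=\tilde{h}_v^{(J)}=h_v^*$ for all $v\in V$, i.e., $\mathbf{\tilde{h}}=\mathbf{h}^*$, which is exactly the assertion of the corollary. I do not anticipate a genuine obstacle; the only points needing a little care are (i) matching the bookkeeping convention for $J$ in $\mathbf{\tilde{h}}^{(J)}$ with the ``at most $L_{\max}$ steps'' phrasing of Theorem~\ref{theorem_ConvergenceOfAlgorithm_appendix}, and (ii) pinning down $L(v)\le L_{\max}$ cleanly, including the degenerate reservoir case. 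An alternative, fully self-contained route would be to observe that by Theorem~\ref{theorem_ConvergenceOfAlgorithm_appendix} the iterates stabilise to a limit $\mathbf{\tilde{h}}^{(J)}$, and then to note that Lemma~\ref{lemma_TrueHeadRemains} identifies the per-node stable value as $h_v^*$, so the limit must equal $\mathbf{h}^*$.
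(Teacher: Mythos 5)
Your proposal is correct and follows essentially the same route as the paper: the paper's proof likewise sets $J = L_{\max}$ via Theorem~\ref{theorem_ConvergenceOfAlgorithm_appendix}, observes $L_{\max} \geq L(v)$ for all $v \in V$, and concludes by Lemma~\ref{lemma_TrueHeadRemains}. Your extra care in checking that $L(v)$ is well defined via Lemma~\ref{lemma_Inflows_2} and handling the reservoir case $L(v_r)=0$ only makes explicit what the paper leaves as "by definition."
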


\begin{proof}
    By theorem \ref{theorem_ConvergenceOfAlgorithm_appendix}, $J = L_{\max}$ holds.
    As by definition, $J = L_{\max} \geq L(v)$ holds for all $v \in V$, the claim follows immediately by lemma \ref{lemma_TrueHeadRemains}.
\end{proof}

\begin{lemma}
\label{lemma_TrueFlows}
    If $\mathbf{\tilde{h}} := \mathbf{\tilde{h}}^{(J)} = \mathbf{h}^*$ holds and we compute $\mathbf{\tilde{q}}$ according to 
    \begin{align}
    \label{align_ComputeFlows_appendix}
        {\tilde{q}}_e
        :=
        \sgn({\tilde{h}_v} - {\tilde{h}}_u) \cdot 
        (r_e^{-1} |{\tilde{h}_v} - {\tilde{h}}_u|)^{1/x} 
    \end{align}
    
    \noindent for all $v \in V$ and $e = e_{vu} \in E$, then $\mathbf{\tilde{q}} = \mathbf{q}^*$ holds.
\end{lemma}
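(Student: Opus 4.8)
The plan is to establish the identity edge by edge, via a direct algebraic inversion of the head-loss law, eq.~\eqref{align_HeadLoss}. Fix $e = e_{vu} \in E$. By hypothesis $\tilde{h}_v = h_v^*$ and $\tilde{h}_u = h_u^*$, so $\tilde{h}_v - \tilde{h}_u = h_v^* - h_u^*$, and eq.~\eqref{align_HeadLoss} rewrites this as
\begin{align*}
    \tilde{h}_v - \tilde{h}_u = r_e \, \sgn(q_e^*) \, |q_e^*|^x .
\end{align*}
Everything else is bookkeeping on signs and magnitudes.

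First I would read off the sign factor appearing in eq.~\eqref{align_ComputeFlows_appendix}: since $r_e > 0$ by eq.~\eqref{align_ConstantR} and $|q_e^*|^x \geq 0$, the right-hand side above has the same sign as $\sgn(q_e^*)$, hence $\sgn(\tilde{h}_v - \tilde{h}_u) = \sgn(q_e^*)$. Next I would read off the magnitude: $|\tilde{h}_v - \tilde{h}_u| = r_e |q_e^*|^x$, so $r_e^{-1} |\tilde{h}_v - \tilde{h}_u| = |q_e^*|^x$, and therefore $(r_e^{-1}|\tilde{h}_v - \tilde{h}_u|)^{1/x} = |q_e^*|$, using that $x = 1.852 > 0$ so that $t \mapsto t^{1/x}$ inverts $t \mapsto t^x$ on $[0,\infty)$. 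Plugging both into eq.~\eqref{align_ComputeFlows_appendix} gives $\tilde{q}_e = \sgn(q_e^*) \cdot |q_e^*| = q_e^*$; as $e$ was arbitrary, $\mathbf{\tilde{q}} = \mathbf{q}^*$.

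I do not expect a genuine obstacle here --- the argument is a one-line inversion per edge --- so the only care needed is to handle the degenerate case $q_e^* = 0$ and to keep the $\sgn$/$|\cdot|$ manipulation with the non-integer exponent $1/x$ rigorous. In the degenerate case $h_v^* = h_u^*$, hence $\tilde{h}_v - \tilde{h}_u = 0$ and eq.~\eqref{align_ComputeFlows_appendix} returns $\tilde{q}_e = 0 = q_e^*$ as well (in practice this case does not arise, as noted after eq.~\eqref{align_HeadLoss}). Finally, combining this lemma with Corollary~\ref{corollary_TrueHeads} --- which, in the setting of Theorem~\ref{theorem_MotivationAlgorithm_appendix}, supplies the hypothesis $\mathbf{\tilde{h}} = \mathbf{h}^*$ --- yields $\mathbf{\tilde{q}} = \mathbf{q}^* = \mathbf{\hat{q}}$, i.e.\ $\mathbf{\tilde{q}} = \mathbf{\hat{q}} + \zeta$ once the stabilizing constant $\zeta$ of eq.~\eqref{align_ComputeFlowsAndDemands} is reinstated at the algorithm level, completing the proof of Theorem~\ref{theorem_MotivationAlgorithm_appendix}.
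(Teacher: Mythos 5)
Your proposal is correct and follows essentially the same route as the paper's proof: invert the head-loss law eq.~\eqref{align_HeadLoss} edge by edge, using $r_e>0$ to equate signs and the monotone map $t\mapsto t^{1/x}$ on $[0,\infty)$ to recover the magnitude, then substitute $\mathbf{\tilde{h}}=\mathbf{h}^*$. Your explicit treatment of the degenerate case $q_e^*=0$ is a small addition the paper only dismisses in a footnote, but it does not change the argument.
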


\begin{proof}
    By eq. \eqref{align_HeadLoss}, given true heads $\mathbf{h}^*$ and true flows $\mathbf{q}^*$,
    for $v \in V$ and $e = e_{vu} \in E$,
    \begin{align*}
        h_v^* - h_u^* = r_e \sgn(q_e^*) |q_e^*|^x
    \end{align*}
    
    \noindent holds. Because $r_e$ is positive by definition and $|q_e^*|^x$ is obviously non-negative, $\sgn(h_v^* - h_u^*) = \sgn(q_e^*)$ and therefore, $\sgn(q_e^*) \cdot \sgn(h_v^* - h_u^*) = 1$ must hold. Thus, we obtain
    \begin{align*}
        \sgn(q_e^*) |q_e^*|^x
        =
        \sgn(q_e^*) \cdot \sgn(h_v^* - h_u^*)
        \cdot r_e^{-1} (h_v^* - h_u^*)
    \end{align*}
    
    \noindent and therefore,
    \begin{align*}
        |q_e^*|^x
        =
        \sgn(h_v^* - h_u^*)
        \cdot r_e^{-1} (h_v^* - h_u^*)
        =
        r_e^{-1} |h_v^* - h_u^*|.
    \end{align*}
    
    \noindent For the latter equation, we are allowed to take roots:
    \begin{align*}
        |q_e^*|
        =
        (r_e^{-1} |h_v^* - h_u^*|)^{1/x}.
    \end{align*}
    
    \noindent Finally, we use the equality of signs again to obtain 
    \begin{align*}
        q_e^*
        =
        \sgn(h_v^* - h_u^*) \cdot (r_e^{-1} |h_v^* - h_u^*|)^{1/x}.
    \end{align*}
    
    \noindent Therefore, if $\mathbf{\tilde{h}} = \mathbf{h}^*$ holds and we compute $\mathbf{\tilde{q}}$ according to eq. \eqref{align_ComputeFlows_appendix} for all $v \in V$ and $e = e_{vu} \in E$, we find that indeed,
    \begin{align*}
        {\tilde{q}}_e
        :=&~
        \sgn({\tilde{h}_v} - {\tilde{h}}_u) \cdot 
        (r_e^{-1} |{\tilde{h}_v} - {\tilde{h}}_u|)^{1/x}
        \\
        =&~
        \sgn(h_v^* - h_u^*) \cdot (r_e^{-1} |h_v^* - h_u^*|)^{1/x}
        \\
        =&~
        q_e^*
    \end{align*}
    
    \noindent holds.
\end{proof}

\begin{proof}[Proof of theorem \ref{theorem_MotivationAlgorithm_appendix}]
    Theorem \ref{theorem_MotivationAlgorithm_appendix} immediately follows by corollary \ref{corollary_TrueHeads} and lemma \ref{lemma_TrueFlows} (mind the difference $\zeta$ of eq. \eqref{align_ComputeFlowsAndDemands} and \eqref{align_ComputeFlows_appendix}).
\end{proof}

\section{Intuition of Physics-Informed Component}
\label{section_IntuitionOfPhysics-InformedComponent}

While the previous section explains the functionality of our physics-informed algorithm (cf. eq. \eqref{align_Algorithm} or \eqref{align_Algorithm_nested}) in whole detail, this chapter is for readers who are not interested into reading all mathematical details of section \ref{section_TheoreticalBackground}. Readers who have read section \ref{section_TheoreticalBackground} already can skip this section, as there are no new arguments introduced.

To understand the intuition of the physics-informed component described in section \ref{subsection_GlobalPhysicsInformedAlgorithm}, we investigate the case where $\mathbf{\tilde{h}}$ corresponds to the true heads and $\mathbf{\hat{q}}$ to the true flows of a WDS. In such case, by eq. \eqref{align_HeadLoss},
\begin{align*}
    &~\tilde{h}_v
    \\
    =&~
    \tilde{h}_u
    + r_{e_{vu}}  \sgn({\hat{q}}_{e_{vu}}) |{\hat{q}}_{e_{vu}}|^x
    \\
    =&~
    \begin{cases}
    \tilde{h}_u
    - \mathrm{ReLU}( - r_{e_{vu}}  \sgn({\hat{q}}_{e_{vu}}) |{\hat{q}}_{e_{vu}}|^x)
    & \text{if } \sgn({\hat{q}}_{e_{vu}}) < 0 \\
    \tilde{h}_u
    + \mathrm{ReLU}( \text{\textcolor{white}{$-$}} r_{e_{vu}}  \sgn({\hat{q}}_{e_{vu}}) |{\hat{q}}_{e_{vu}}|^x)
    & \text{if } \sgn({\hat{q}}_{e_{vu}}) > 0
    \end{cases}
\end{align*}

\noindent holds for all $v \in V$ and all $u \in \N(v)$.\footnote{
    We do not assume the case $\sgn({\hat{q}}_{e_{vu}}) = 0$ as this would only be the case if there was no flow between two nodes, which in practise will not be the case.} 
Even more, in such case, the max aggregation 
\begin{align*}
    \tilde{h}_v
    &=
    \max_{u \in \N(v)}
    \tilde{h}_u
    + r_{e_{vu}}  \sgn({\hat{q}}_{e_{vu}}) |{\hat{q}}_{e_{vu}}|^x
\end{align*}

\noindent is not needed, as all values considered would be equal. Using the neighborhoods
\begin{align*}
    \N_\pm(v) := \{ u \in \N(v) ~|~ \sgn(\hat{q}_{e_{vu}}) = \pm 1 \}
\end{align*}

\noindent of out- and inflows, we can rewrite this aggregation by 

\begin{align}
\label{align_EdgeMessageGeneration_PhysicsInformed_NonApproximated}
\begin{split}
    ~\tilde{h}_v
    =
    \max \{
    &\max_{u \in \N_-(v)}
    \tilde{h}_u
    - \mathrm{ReLU}( - r_{e_{vu}}  \sgn({\hat{q}}_{e_{vu}}) |{\hat{q}}_{e_{vu}}|^x),
    \\
    &\max_{u \in \N_+(v)}
    \tilde{h}_u
    + \mathrm{ReLU}( \text{\textcolor{white}{$-$}} r_{e_{vu}}  \sgn({\hat{q}}_{e_{vu}}) |{\hat{q}}_{e_{vu}}|^x)
    \}.
\end{split}
\end{align}

\noindent As explained in section \ref{subsection_WaterHydraulics}, demands have a positive and inflows have a negative sign. Hence for eq.\ \eqref{align_MassBalance} to hold, at least one of the summands $q_e^*$ for some $e \in E$ has to be negative, which translates to the fact that there must be an inflow per node $v \in V$. Therefore, if $\mathbf{\hat{q}}$ is correct, $\N_-(v)$ is not empty and we pick a $u \in \N_-(v)$ to describe $\tilde{h}_v$ as
\begin{align*}
    \tilde{h}_v
    =
    \tilde{h}_u
    - \mathrm{ReLU}( - r_{e_{vu}}  \sgn({\hat{q}}_{e_{vu}}) |{\hat{q}}_{e_{vu}}|^x).
\end{align*}

\noindent Then, we easily find that $\tilde{h}_v - \tilde{h}_u = r_{e_{vu}}  \sgn({\hat{q}}_{e_{vu}}) |{\hat{q}}_{e_{vu}}|^x$ holds. If we now reconstruct the flow $\mathbf{\tilde{q}}$ from the heads $\mathbf{\tilde{h}}$ as given in equation \eqref{align_ComputeFlowsAndDemands}, we obtain
\begin{align*}
    {\tilde{q}}_{e_{vu}}
    =&~
    \sgn({\tilde{h}_v} - {\tilde{h}}_u) \cdot 
    (r_{e_{vu}}^{-1} |{\tilde{h}_v} - {\tilde{h}}_u|)^{1/x}
    \\
    =&~
    \sgn({\hat{q}}_{e_{vu}}) \cdot
    (r_{e_{vu}}^{-1} r_{e_{vu}} |{\hat{q}}_{e_{vu}}|^x)^{1/x}
    \\
    =&~
    \sgn({\hat{q}}_{e_{vu}}) |{\hat{q}}_{e_{vu}}| 
    =
    \hat{q}_{e_{vu}}
\end{align*}

\noindent for all $v,u \in V$.

Having a look on the iterative algorithm in section \ref{subsection_GlobalPhysicsInformedAlgorithm}, we find that eq. \eqref{align_EdgeMessageGeneration_PhysicsInformed} is an approximation of eq. \eqref{align_EdgeMessageGeneration_PhysicsInformed_NonApproximated}, where the arguments of the max aggregation, i.e., the messages $m_e^{(j)}$ for an $e \in E$, are correctly computed for inflows only. For outflows, instead of correctly adding positive terms of the kind $\mathrm{ReLU}( r_{e_{vu}}  \sgn({\hat{q}}_{e_{vu}}) |{\hat{q}}_{e_{vu}}|^x)$, these are set to zero. Therefore, these messages are smaller than they should be under the physical properties of the WDS. Nevertheless, as these messages are \textit{smaller}, taking the maximum in the iterative scheme of eq. \eqref{align_Algorithm} automatically corrects the error we make for outflows whenever an inflow is observed, which is the case for a correct set of flows $\mathbf{\hat{q}}$.

To conclude, in the case where the estimated flows $\mathbf{\hat{q}}$ are equal to the \textit{true} flows (and thus, the heads $\mathbf{\tilde{h}}$ are equal to the true heads, cf. corollary \ref{corollary_TrueHeads}), the reconstructed flows $\mathbf{\tilde{q}}$ are equal to the flows  $\mathbf{\hat{q}}$ learned by the trainable model $f_1$ (cf. theorem \ref{theorem_MotivationAlgorithm}/\ref{theorem_MotivationAlgorithm_appendix}). 
This motivates to enforce the reconstructed flows $\mathbf{\tilde{q}}$ to be equal to the flows $\mathbf{\hat{q}} = f_1((\mathbf{D}, \mathbf{Q}), \Theta)$ estimated by the first component $f_1$ using a suitable loss function over the learnable parameters $\Theta$. By that, the component $f_2$ acts as a correction to the (possibily incorrect) flows computed by $f_1$ using the water hydraulics from section \ref{subsection_WaterHydraulics}. This guides the overall model $f = f_2 \circ f_1(\cdot, \Theta)$ to converge to a valid set of flows that obey both hydraulic principles (eq. \eqref{align_MassBalance} and eq. \eqref{align_HeadLoss}) after multiple iterations $K \in \mathbb{N}$ (cf. section \ref{subsection_OverallModelandTraining}).
%Only inflows will be retained while outflows will are set to zero due to the usage of the $\mathrm{ReLU}$-function. 

\begin{remark}[Derivation of eq. \eqref{align_ComputeFlowsAndDemands}]
    We compute ${\tilde{q}}_e$ as given in eq. \eqref{align_ComputeFlowsAndDemands}, because by eq. \eqref{align_HeadLoss}, 
    for $v \in V$ and $e = e_{vu} \in E$,
    \begin{align*}
        {\tilde{h}_v} - {\tilde{h}}_u = r_e \sgn({\tilde{q}}_e) |{\tilde{q}}_e|^x,
    \end{align*}
    
    \noindent \textit{should} hold. As $r_e$ is positive by definition and $|{\tilde{q}}_e|^x$ is obviously non-negative, $\sgn({\tilde{h}_v} - {\tilde{h}}_u) = \sgn({\tilde{q}}_e)$ and therefore, $\sgn({\tilde{q}}_e) \cdot \sgn({\tilde{h}_v} - {\tilde{h}}_u) = 1$ must hold. Thus, we obtain
    \begin{align*}
        \sgn({\tilde{q}}_e) |{\tilde{q}}_e|^x
        =
        \sgn({\tilde{q}}_e) \cdot \sgn({\tilde{h}_v} - {\tilde{h}}_u)
        r_e^{-1} ({\tilde{h}_v} - {\tilde{h}}_u)
    \end{align*}
    
    \noindent and therefore,
    \begin{align*}
        |{\tilde{q}}_e|^x
        =
        \sgn({\tilde{h}_v} - {\tilde{h}}_u)
        r_e^{-1} ({\tilde{h}_v} - {\tilde{h}}_u)
        =
        r_e^{-1} |{\tilde{h}_v} - {\tilde{h}}_u|.
    \end{align*}
    
    \noindent For the latter, we are allowed to take roots:
    \begin{align*}
        |{\tilde{q}}_e|
        =
        (r_e^{-1} |{\tilde{h}_v} - {\tilde{h}}_u|)^{1/x}.
    \end{align*}
    
    \noindent Finally, we use the equality of signs again to obtain eq. \eqref{align_ComputeFlowsAndDemands}.
\end{remark}

\noindent Note that this remark explains what we do in the proof of lemma \ref{lemma_TrueFlows}.

% \vskip 100mm
\newpage

\section{Training Algorithm}
\label{app_TrainingAlgorithm}

The complete training algorithm is given in algorithm \ref{alg: training algorithm}.

\begin{algorithm}[!htbp]
\caption{Training}
\label{alg: training algorithm}
\small
\textbf{Inputs}: $V, E, \mathbf{d}^*, \mathbf{h}^*, r$\\
\textbf{Parameter}: $\alpha, \beta, \gamma, \eta, \lambda, B, S, I, J, K, \rho, \delta$\\
\textbf{Output}: $\mathbf{\hat{d}}, \mathbf{\tilde{d}}, \mathbf{\tilde{h}}, \mathbf{\hat{q}}, \mathbf{\tilde{q}}$
\begin{algorithmic}[1] %[1] enables line numbers
\FOR{$b\in B$}
    \FOR{$s\in\{0,...,S_b-1\} $}
        \STATE $\mathbf{h}^{(0)} = (h_v^{(0)})_{v \in V}$
        \STATE $\mathbf{\tilde{h}}^{(0)} = (h_v^{(0)})_{v \in V}$
        \STATE $h_v^{(0)} := \left\{ \begin{aligned}  0 \quad & \text{ if } v \in V \setminus V_r, \\ h_v^* \quad & \text{ if } v \in V_r \end{aligned} \right\} $

        \STATE $\mathbf{D}^{(0)} = (\mathbf{d_1}^{(0)}, \mathbf{d_2}^{(0)})$
        \STATE $\mathbf{Q}^{(0)} = (\mathbf{q_1}^{(0)}, \mathbf{q_2}^{(0)})$
        \STATE $d_{v1}^{(0)}, d_{v2}^{(0)} := \left\{ \begin{aligned} d_v^* \quad & \text{ if } v \in V \setminus V_r, \\ 0 \quad & \text{ if } v \in V_r \end{aligned} \right\}  $
        \STATE $q_{e1}^{(0)}, q_{e2}^{(0)} := r_e^{-1} (h_v^{(0)} - h_u^{(0)})^{\frac{1}{x}}$
        \FOR{$v\in V$}
            \FOR{$k\in\{0,...,K-1\}$}
                
                \STATE $\mathbf{g}_v^{(k)} := \alpha(\mathrm{SeLU}({d}_v^{(k)}))$
                \STATE $\mathbf{z}_e^{(k)} := \beta(\mathrm{SeLU}({q}_e^{(k)}))$
                \FOR{$i\in\{0,...,I-1\}$}
                    \STATE $\mathbf{m}_e^{(k)(i)}
                    := $
                    \STATE $\gamma^{(i)}(\mathrm{SeLU}(\mathbf{g}_u^{(k)(i)} \parallel \mathbf{g}_v^{(k)(i)} \parallel \mathbf{z}_e^{(k)(i)}))$  
                    \STATE $\mathbf{g}_v^{(k)(i+1)} = \eta^{(i)} (\max_{u \in \mathcal{N}(v)} \mathbf{m}_{e_{vu}}^{(k)(i)})$  
                    \STATE $\mathbf{z}_{e_{vu}}^{(k)(i+1)} = \mathbf{m}_{e_{vu}}^{(k)(i)}$
                \ENDFOR
                \STATE $\hat{q}_e^{(k+1)} 
                        := 
                        q_{e1}^{(k)} 
                        + $
                \STATE $\lambda(\mathrm{SeLU}(\mathbf{g}_u^{(k)(I)} \parallel \mathbf{g}_v^{(k)(I)} \parallel \mathbf{z}_e^{(k)(I)})$
                       
                \STATE ${\hat{q}}_{e_{vu}}^{(k+1)} = (({\hat{q}}_{e_{vu}}^{(k+1)})_{in} \parallel -({\hat{q}}_{e_{vu}}^{(k+1)})_{in})$ 
                \STATE ${\hat{d_v}}^{(k+1)} = -\sum_{u \in \mathcal{N}(v)} {\hat{q}}_{e_{vu}}^{(k+1)}$
                \FOR{$j\in\{0,...,J-1\}$}
                    \STATE ${m}_e^{(k)(j)}
                    := $
                    \STATE ${\tilde{h}}_u^{(k)(j)}
                    -
                    \mathrm{ReLU}\left(- r_e  \sgn({\hat{q}}_e^{(k+1)}) |{\hat{q}}_e^{(k+1)}|^x \right)$ 
                    \STATE ${m}_v^{(k)(j)} 
                    := 
                    \max_{u \in \N(v)} {m}_{e_{vu}}^{(k)(j)}$
                    \STATE ${\tilde{h}}_v^{(k)(j+1)} 
                    := 
                    \max\{ {\tilde{h}}_v^{(k)(j)}, {m}_v^{(k)(j)} \}$
                \ENDFOR
                \STATE $\tilde{h}^{(k+1)} := \tilde{h}^{(k)(J)}$
                \STATE ${\tilde{q}}_e^{(k+1)}
                := $
                \tiny
                \STATE $ \sgn({\tilde{h}_v}^{(k+1)} - {\tilde{h}}_u^{(k+1)}) \cdot 
                (r_e^{-1} |{\tilde{h}_v^{(k+1)}} - {\tilde{h}}_u^{(k+1)}|)^{1/x} + \zeta$
                \small
                \STATE ${\tilde{d}}_v^{(k+1)} 
                := - 
                \textstyle 
                \sum_{u \in \mathcal{N}(v)} {\tilde{q}}_e^{(k+1)}$
            \ENDFOR
        \ENDFOR
    \ENDFOR
    % \STATE $\mathcal{L} ({d}, {\hat{d}}) = \frac{1}{S\cdot N} \sum_{v=1}^{S \cdot N} | {d}_{v} - {\hat{d}}_{v} |$
    % \STATE $\mathcal{L} ({d}, {\tilde{d}}) = \frac{1}{S\cdot N} \sum_{v=1}^{S \cdot N} | {d}_{v} - {\tilde{d}}_{v} | $
    % \STATE $\mathcal{L} ({\hat{q}}, {\tilde{q}}) = \frac{1}{S\cdot M} \sum_{e=1}^{S \cdot M} | {\hat{q}}_{e} - {\tilde{q}}_{e} |$
    \STATE Minimize 
    \STATE $\mathcal{L} = 
    \mathcal{L} (\mathbf{d}^{(K)}, \mathbf{\hat{d}}^{(K)}) 
    + \rho
    \mathcal{L} (\mathbf{d}^{(K)}, \mathbf{\tilde{d}}^{(K)}) 
    + \delta
    \mathcal{L} (\mathbf{\hat{q}}^{(K)}, \mathbf{\tilde{q}}^{(K)})$
\ENDFOR
\STATE \textbf{return} $\mathbf{\hat{d}}, \mathbf{\tilde{d}}, \mathbf{\tilde{h}}, \mathbf{\hat{q}}, \mathbf{\tilde{q}}$
\end{algorithmic}
\end{algorithm}

\pagebreak

\section{Water Distribution Systems}
\label{app_wds}

We use five popular WDS for our experiments. These WDS have different layouts. Four of these have a single reservoir, while the Pescara WDS has three reservoirs. These are exhibited in fig. \ref{fig: hanoi}, \ref{fig: fossolo}, \ref{fig: pescara}, \ref{fig: l_town}, \ref{fig: zhijiang}.

\begin{figure}[!htbp]
\centering
\resizebox{.99\columnwidth}{!}{
\includegraphics[]{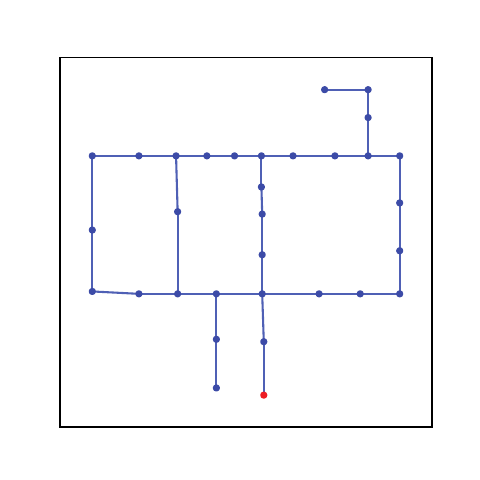} 
}
\caption{Hanoi WDS  \cite{vrachimis2018leakdb}. Nodes in blue are consumers while the node in red is the reservoir.}
\label{fig: hanoi}
\end{figure}

\begin{figure}[!htbp]
\centering
\resizebox{.99\columnwidth}{!}{
\includegraphics[]{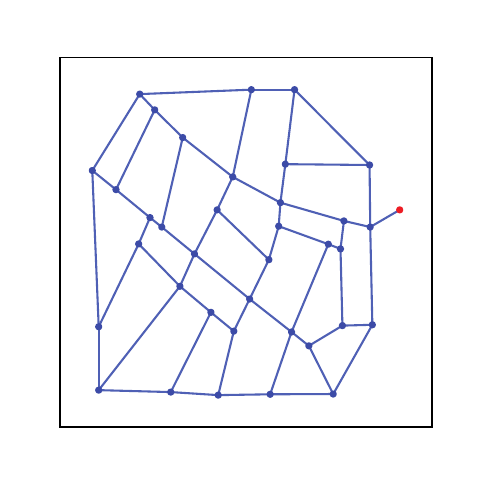} 
}
\caption{Fossolo WDS \cite{zhijiang}. Nodes in blue are consumers while the node in red is the reservoir.}
\label{fig: fossolo}
\end{figure}

\begin{figure}[!htbp]
\centering
\resizebox{.99\columnwidth}{!}{
\includegraphics[]{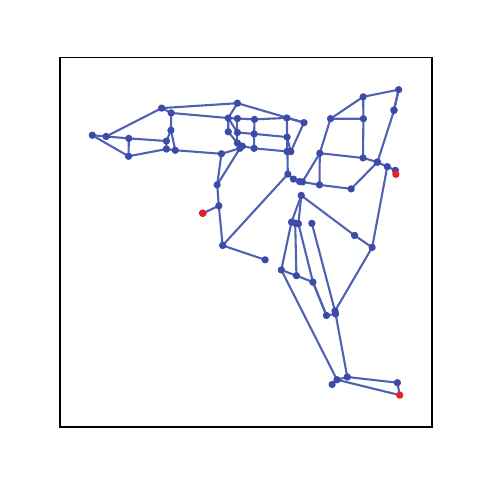} 
}
\caption{Pescara WDS \cite{zhijiang}. Nodes in blue are consumers while the node in red is the reservoir.}
\label{fig: pescara}
\end{figure}

\begin{figure}[!htbp]
\centering
\resizebox{.99\columnwidth}{!}{
\includegraphics[]{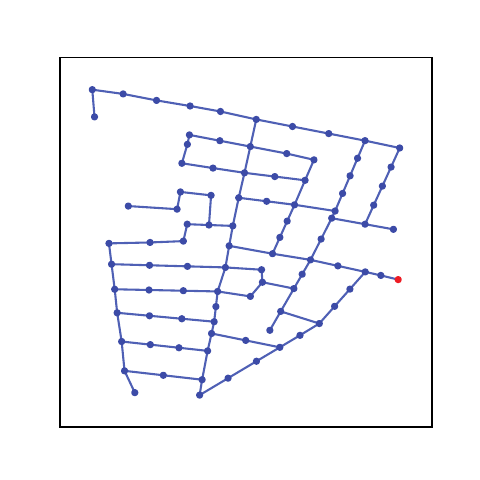} 
}
\caption{L-TOWN Area-C WDS  \cite{vrachimis2020battledim}. Nodes in blue are consumers while the node in red is the reservoir.}
\label{fig: l_town}
\end{figure}

\begin{figure}[t]
\centering
\resizebox{.99\columnwidth}{!}{
\includegraphics[]{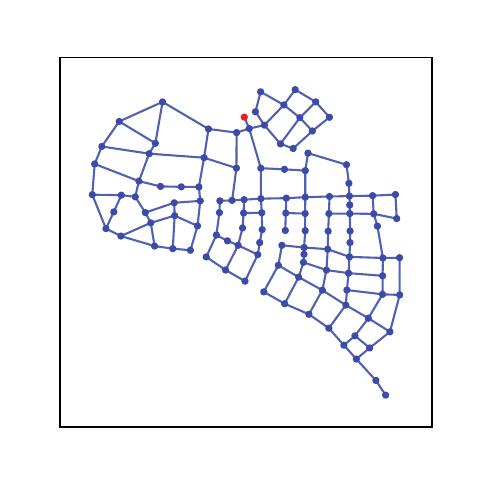} 
}
\caption{Zhi Jiang WDS \cite{zhijiang}. Nodes in blue are consumers while the node in red is the reservoir.}
\label{fig: zhijiang}
\end{figure}

\newpage

\end{document}